\newtheorem{theorem}{Theorem}[section]
\newtheorem{lemma}{Lemma}[section]
\newtheorem{property}{Property}[section]
\DeclarePairedDelimiter\ceil{\lceil}{\rceil}
\begin{document}

\twocolumn[
\icmltitle{How Expressive are Transformers in Spectral Domain for Graphs?}

\begin{icmlauthorlist}
\icmlauthor{Anson Bastos}{a}
\icmlauthor{Abhishek Nadgeri}{c,d}
\icmlauthor{Kuldeep Singh}{b,d}
\icmlauthor{Hiroki Kanezashi}{f}
\icmlauthor{Toyotaro Suzumura}{f}
\icmlauthor{Isaiah Onando Mulang'}{d,g}
\end{icmlauthorlist}

\icmlaffiliation{a}{Indian Institute of Technology, Hyderabad, India}
\icmlaffiliation{b}{Cerence GmbH, Germany}
\icmlaffiliation{c}{RWTH Aachen, Germany}
\icmlaffiliation{d}{Zerotha Research, Germany}
\icmlaffiliation{f}{The University of Tokyo, Tokyo, Japan}
\icmlaffiliation{g}{IBM Research, Kenya}

\icmlcorrespondingauthor{Anson Bastos}{ansonbstos@gmail.com}
\icmlcorrespondingauthor{Abhishek Nadgeri}{abhishek.nadgeri@rwth-aachen.de}

\icmlkeywords{Machine Learning}

\vskip 0.3in
]

\printAffiliationsAndNotice{}


\begin{abstract}
 
The recent works proposing transformer-based models for graphs have proven the inadequacy of Vanilla Transformer for graph representation learning. To understand this inadequacy, there is a need to investigate if spectral analysis of the transformer will reveal insights into its expressive power. Similar studies already established that spectral analysis of Graph neural networks (GNNs) provides extra perspectives on their expressiveness. 
In this work, we systematically study and establish the link between the spatial and spectral domain in the realm of the transformer. We further provide a theoretical analysis and prove that the spatial attention mechanism in the transformer cannot effectively capture the desired frequency response, thus, inherently limiting its expressiveness in spectral space. Therefore, we propose FeTA, a framework that aims to perform attention over the entire graph spectrum (i.e., actual frequency components of the graphs) analogous to the attention in spatial space. 
Empirical results suggest that FeTA provides homogeneous performance gain against vanilla transformer across all tasks on standard benchmarks and can easily be extended to GNN-based models with low-pass characteristics (e.g., GAT). 

\end{abstract}

\section{Introduction}
Several graph neural network (GNN) approaches have been devised as generic and efficient framework to learn from graph-structured data for tasks such as graph classification, node classification, graph regression, and link property prediction \citep{zhou2020graph}. Among them, message-passing GNNs (MPGNNs) have been prominently used to obtain latent encoding of graph structures, achieving good results on related tasks \citep{gilmer2017mpnn,velivckovic2018graph,DBLP:conf/iclr/XuHLJ19}. Although effective, these methods suffer from performance issues such as over-smoothing \citep{DBLP:conf/iclr/ZhaoA20}, suspended animation \citep{DBLP:journals/corr/abs-1909-05729}, and over-squashing \citep{DBLP:conf/iclr/0002Y21}. Recently, researchers \citep{li2018graph} have attempted to use vanilla transformer \citep{vaswani_2017_attention} for graph representation learning without such issues. However, transformers are inherently incapable graph representation learning \citep{dwivedi2020benchmarking}. Hence, more recent works have added a gamut of positional and structural encoding methods \citep{graphit2021,san2021} to alleviate limitation of transformers to learn topological information. 

\textbf{Motivation:} The research community has theoretically studied the expressive power of GNNs with two schools of thought: 1) by equating to Weisfeiler-Lehman (WL) test order for spatial domain \citep{xu2019gin,morris2019weisfeiler} 2) by spectral analysis of GNN models \citep{wu2019simplifying,balcilar2020analyzing}. However, vanilla transformer based models \citep{vaswani_2017_attention} are only studied in the spatial domain by analyzing its expressive power using WL-test \citep{ying2021transformers,san2021} for graphs. Furthermore, no prior work provides a theoretical understanding of the properties and limitations of transformers in the spectral domain. This paper argues that analyzing transformers theoretically and experimentally in the spectral domain can bring a new perspective on their expressive power. 
Unlike GNNs, where researchers have studied its spectral properties \citep{wu2019simplifying,nt2019revisiting} and established an equivalence between corresponding spatial and spectral space \citep{balcilar2020analyzing}, 
to the best of our knowledge, there exists no study that establishes such uniformity for transformers. Moreover, if there exists an equivalence, there remains another open question: \textit{what is the expressive power of transformers in spectral space for graphs?} Our work aims to explore these important open research questions. 

\textbf{Contributions:}  In this paper, our first contribution is to formally characterize an equivalence between transformers' spatial and spectral space.  
Based on this analysis, as a second contribution, we study the expressive power of the transformer in the spectral domain and conclusively establish its limitation. 
As a third contribution, we propose FeTA, a framework that overcomes these limitations while also considering the fact that signals on a graph could contain heterogeneous information spread over a wider frequency range \citep{bianchi2021graph,gao2021message}. Therefore, it is able to perform attention over the entire graph signals in spectral space analogous to the traditional transformers in spatial space. 
We further study the generalizability of FeTA by incorporating its ability to perform attention over the graph spectrum into low-pass filter GNN for empowering it to include the full graph spectrum, observing an empirical edge on base model.
Our last contribution is to study the impact of recently proposed position encoding schemes on FeTA. The main results of this work are summarized below:
\begin{itemize}
\item We formally bridge the gap between spatial and spectral space for the vanilla transformer. 
\item We theoretically establish that transformers are only effective in learning the class of filters containing the \textit{low-pass} characteristic. This implies transformers are not able to selectively attend to certain frequencies using their inherent spatial attention mechanism.
\item Based on the observed limitation, we develop FeTA, a framework that performs attention over frequencies for the entire graph spectrum. We study the efficacy of FeTA with extensive experiments on standard benchmark datasets of graph classification/regression and node classification resulting in superior performance compared to vanilla transformers.
\item FeTA's ability to perform attention over the entire graph spectrum when induced into low-pass GNN filter models such as GAT \citep{velivckovic2018graph} has significantly improved GAT's performance.
\item Extensive experiments with position encodings (PE) show that none of the considered state-of-the-art PEs proposed for transformer work are agnostic to the graph dataset. However, replacing the vanilla transformer with FeTA in recently proposed transformer architectures that use PEs (\cite{dwivedi2020benchmarking,san2021,graphit2021}) improved the overall performance on most graph representation learning tasks. Observed behavior implies that our work provides a complementary and orthogonal school of thought to position encodings developed for transformers in spatial space.
\end{itemize}

The remainder of the paper is organized as follows: we describe the related works in section \ref{related}. Section \ref{prelims_problem} provides the preliminaries and problem definition. Section \ref{theory} provides theoretical foundations of our work. In section \ref{approach}, we explain the proposed approach for \textit{graph-specific dynamic filtering}. Dataset details, experimental results, and ablations are given in sections \ref{exp}. Section \ref{conclusion} concludes the paper.
\section{Related Work}\label{related}
\textbf{GNNs and Graph Transformers:} 
In this section, we stick to the work closely related to our approach (detailed survey in \citep{chen2020graph}). Since the early attempts for GNNs \citep{scarselli2008graph}, many variants of the message passing scheme were developed for graph structures such as GCN \citep{kipf2016gcn}, GIN \citep{xu2018powerful}, and GraphSAGE \citep{hamilton2017inductive}. The message passing paradigm employs neural networks for updating representation of neighboring nodes by exchanging messages between them.  
The use of transformer-style attention to GNNs for aggregating local information within the graphs is also an extensive research topic in recent literature \citep{thekumparampil2018attention,shi2020masked,li2018graph}. 
\citet{dwivedi2020generalization} use the eigenvectors of the graph laplacian to induce positional information into the graph. The SAN \citep{san2021} model propose a learnable position encoding module that applies a transformer on the eigenvectors and eigenvalues of the graph laplacian. In SAN, the attention is over features from the graph spectra($U[n,:]^T$, for $n$th node), in contrast, FeTA does so over the actual graph spectra($U^T X$, $U$-Eigenvector, $X$-Graph signals). Hence, SAN's attention is over the components of the eigenvectors rather than on the components in the spectral space.
Also the vanilla transformer used in SAN would suffer from the issues of not being able to effectively learn the desired spectral response detailed in our theoretical analysis. To empirically verify this, we analyse the frequency response of the attention map of SAN(figure \ref{fig:filter_plots_transformer_feta}) from which it is evident that the learnt filter responses suffer from the similar limitations as the vanilla transformer. 
\citet{ying2021transformers} provide the concept of relative positional encoding in which the positional information is induced in the attention weights rather than in the input by obtaining correlation matrices of the spatial, edge, and centrality encoding. Similarly, \citet{graphit2021} induce relative position information in the form of diffusion and random walk kernels. Complementary to PE-based models, our idea is to provide complementary view on transformer's expressiveness in spectral space. 

\textbf{Filters on Graphs:}
Filtering in the frequency domain is generalized to graphs using the spectral graph theory \citep{chung1997spectral,shuman2013emerging}. The GCN model \citep{kipf2016gcn} and variants such as \citep{DBLP:journals/corr/abs-1909-05729} approximate convolution for graph structures in the spatial domain. However, these models suffer from operating in the low-frequency regime, leaving rich information in graph data available in the middle- and high-frequency components \citep{gao2021message}. Approaches in the spectral domain attempt to reduce the computationally complex eigen decomposition of the laplacian by adopting certain functions of the graph laplacian such as Chebyshev polynomials \citep{defferrard2016convolutional}, Cayley polynomials \citep{levie2018cayleynets}, and auto regressive moving average (ARMA) filters \citep{isufi2016autoregressive}. These approaches focus on designing specific filters with desirable characteristics such as bandpass and highpass, explaining advantages of spectral filtering. Hence, at the implementation level, our work focuses on the design of \textit{graph-specific} learnable filters for transformers, whose frequency response (akin to attention in spectral space) can be represented in polynomial functions in multiple sub-spaces(i.e. attention heads) of the signal. \cite{gao2021message} is closely related to part of our work which designs filter banks for GNN, albeit task-specific, for heterogeneous and multi-channel signals on graphs. 
\section{Preliminaries and Problem Definition}\label{prelims_problem}
\textbf{Graph Fourier Transform:}
We denote a graph as $(\mathcal{V},\epsilon)$ where $\mathcal{V}$ is the set of $N$ nodes and $\epsilon$ represents the edges between them. The adjacency matrix is denoted by $A$. Here, we consider the setting of an undirected graph, hence, $A$ is symmetric. The diagonal degree matrix $D$ is defined as $(D)_{ii}=\sum_{j}(A)_{ij}$. The normalized laplacian $L$ of the graph is defined as $L=I-D^{-\frac{1}{2}}AD^{-\frac{1}{2}}$. The laplacian $L$ can be decomposed into its eigenvectors and eigenvalues as:$L = U \Lambda U^{*}$,
where U is the $N \times N$ matrix; the columns of which are the eigenvectors corresponding to the eigenvalues $\lambda_1, \lambda_2, \dots, \lambda_N$ and $\Lambda = {\rm diag}([\lambda_1, \lambda_2, \dots, \lambda_n])$. 
Let $X \in R^{N \times d}$ be the signal on the nodes of the graph. The Fourier Transform $\hat{X}$ of $X$ is then given as: $ \hat{X} = U^{*} X$. 
Similarly, the inverse Fourier Transform is defined as: $ X = U \hat{X}$.
Note $U^{*}$ is the transposed conjugate of $U$. By the convolution theorem \citep{conv_theorem}, the convolution of the signal $X$ with a filter G having its frequency response as $\hat{G}$ is given by (below, $v_m$ is the $m^{th}$ node in the graph):
 \begin{equation}\label{eq_xg_conv}
\small 
\begin{aligned}
    (X \ast G)(v_m) &= \sum_{k=1}^{n} \hat{X}(\lambda_k) \hat{G}(\lambda_k) U(v_m) \\
    (X \ast G)(v_m) &= \sum_{k=1}^{n} (U^*X)(\lambda_k) \hat{G}(\lambda_k) U(v_m) \\
    X \ast G &= U\hat{G}(\Lambda)U^*X
\end{aligned}
\end{equation}
\textbf{Spectral GNN:}\label{spectral_gnn}
Spectral GNNs rely on the spectral graph theory \citep{chung1997spectral}. Consider a graph with $U$ as its eigen vectors, $\lambda$ the eigen values, and $L$ the laplacian. Considering a spectral GNN at any layer $l$ having multiple($f_l$) sub-spaces, the graph convolution operation in the frequency domain can be obtained by adding these filtered signals followed by an activation function as in \citep{bruna2013}:
\begin{equation}\label{graph_conv}
\small
    H_j^{l+1} = \sigma ( \sum_{i=1}^{f_l} U diag(G_{i,j,l}) U^{*} H_i^l )
\end{equation}
where $H_j^{l+1}$ is the $j$th filtered signal in the $(l+1)^{th}$ layer, $G_{i,j,l}$ is the learnable filter response between the $i^{th}$ input and $j^{th}$ filtered signal in the $l^{th}$ layer. 
This formulation is non-transferable to multigraph learning problem \citep{muhammet2020spectral}, where the number of nodes could be different and also in graphs with same nodes the eigenvalues may differ. Thus as defined in \citep{muhammet2020spectral}, $G_{i,j,l}$ is re-parametrized as: $G_{i,j,l} = B[W_{i,j}^{l,1}, W_{i,j}^{l,2}, \dots W_{i,j}^{l,S}]^T$,
where $B \in R^{N \times S}$ is a learnable function of the eigenvalues, $S$ is the number of filters and $W^{l,s}$ is the learnable matrix for the $s$th filter in the $l$th layer. 
The Equation \ref{graph_conv} depends on the computation of the eigenvectors $U$ of $L$, which is computationally costly for large graphs. In our work, we consider the polynomial approximations as proposed by \cite{hammond2011wavelets}. Specifically, the frequency response of the desired filter is approximated as:
\begin{equation}\label{poly_approx}
\small
    \hat{G} = \sum_{k=0}^{K} \alpha^k T_k(\Tilde{\Lambda})
\end{equation}
where $T(k)$ is the polynomial basis such as Chebyshev polynomials \citep{defferrard2016convolutional}, $\Tilde{\Lambda} = \frac{2\Lambda}{\lambda_{max}} - I$, $\lambda_{max}$ is the maximum eigen value and $\alpha^k$ is the corresponding \textit{filter coefficients}. A recursive formulation could be used for the Chebyshev polynomials with basis $T_0(x)=1$, $T_1(x)=x$ and beyond that $T_k(x) = 2xT_{k-1}(x) - T_{k-2}(x)$. Thus, the convolution operation in Equation \ref{eq_xg_conv} can be approximated:
\begin{equation}\label{eq_xg_conv_approx}
\begin{aligned}
    X \ast G &\approx U (\sum_{k=0}^{K} \alpha^k T_k(\Tilde{\Lambda})) U^*X = \sum_{k=0}^{K} \alpha^k T_k(U \Tilde{\Lambda} U^*) X \\
   & = \sum_{k=0}^{K} \alpha^k T_k(\Tilde{L}) X
\end{aligned}
\end{equation}
This corresponds to an FIR filter of order $K$ \citep{smith1997scientist}. Setting $\alpha$ as a learnable parameter helps us learn the filter for the downstream task.

\textbf{Problem definition} \label{problem}
In this work, we aim to learn the \textit{filter coefficients} from the attention weights of the transformer. 
Formally, given the transformer attention for the head $h$ at layer $l$ as $A^{h,l} \in R^{N \times N}$, we aim to define a mapping $M: R^{N \times N} \xrightarrow[]{}  R^{K}$ where $K$ is the filter order. The mapping $M$ would take us from the space in the attention weights to the filter coefficient space for defining corresponding attention in the spectral domain on graph frequencies. 
\section{Theoretical Foundations}\label{theory}
Since GNNs have been well-studied in spectral space \citep{defferrard2016convolutional,nt2019revisiting,bianchi2021graph} with an already established link between its spatial and spectral domains \citep{balcilar_bridging}, we first intend to draw an equivalence between transformers and spatial GNNs (proofs are in appendix (\ref{sec:appendix}).). 

\begin{lemma}\label{lemma_equiv_spatial_trans}
The attention of the transformer is,
\[Attention^h(Q,K,V) = softmax(\frac{QK^T}{\sqrt{d_{out}}}) V\]
is a special case of the spatial ConvGNN defined as 
\[H^{l+1} = \sigma(\sum C^{s} H^{l} W^{(l,s)})\]
with the convolution kernel set to the transformer attention
where $W^{(l,s)}$ is the trainable matrix for the $l$-th layer’s $s$-th convolution
kernel and $S$ is the desired number of convolution kernels.
\end{lemma}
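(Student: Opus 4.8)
The plan is to exhibit an explicit choice of the free parameters in the spatial ConvGNN template $H^{l+1} = \sigma\!\left(\sum_{s} C^{s} H^{l} W^{(l,s)}\right)$ that reproduces a single attention head verbatim. First I would unfold the transformer block: writing $X$ for the layer input, the projections are $Q = X W_Q$, $K = X W_K$, $V = X W_V$, so that $\text{Attention}^h(Q,K,V) = \text{softmax}\!\left(\tfrac{X W_Q W_K^{T} X^{T}}{\sqrt{d_{out}}}\right) X W_V$. The key observation is that the left factor $\text{softmax}(QK^{T}/\sqrt{d_{out}})$ is an $N\times N$ matrix depending only on the current representation $X$ (and, if present, the positional/structural encoding), exactly like the data-dependent supports $C^{s}$ used by attentional message-passing GNNs such as GAT.

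Then I would make the identification: take the number of convolution kernels to be $S = 1$, set $H^{l} = X$, define the single support $C^{1} := \text{softmax}\!\left(QK^{T}/\sqrt{d_{out}}\right)$ (absorbing the scaling and the row-normalization into $C^{1}$), set the trainable mixing matrix $W^{(l,1)} := W_V$, and choose the nonlinearity $\sigma$ to be the identity, since transformers apply their nonlinearity in the subsequent feed-forward sublayer rather than inside the attention aggregation. Substituting these choices into the template gives $\sigma(C^{1} H^{l} W^{(l,1)}) = C^{1} X W_V = \text{softmax}(QK^{T}/\sqrt{d_{out}})\, V$, which is precisely $\text{Attention}^h(Q,K,V)$. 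For completeness I would remark that multi-head attention is recovered by instead taking $S$ equal to the number of heads, with $C^{s}$ the $s$-th head's attention matrix and $W^{(l,s)}$ its value projection, modulo the final concatenation-and-output-projection, which is itself a linear map and can be folded into $W^{(l+1,\cdot)}$ or absorbed into the feed-forward block.

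The step I expect to require the most care is not a calculation but a modeling justification: arguing that allowing $C^{1}$ to depend on $H^{l}$ is a legitimate instantiation of the spatial ConvGNN class rather than an abuse of the template. I would address this by appealing to the established convention (e.g., the ConvGNN formulation of \citet{balcilar2020analyzing}) that the convolution supports $C^{s}$ may be functions of the adjacency and of the features, so that GAT-style attention already lies in this family; the transformer is then simply the instance whose support is a dense, all-pairs learned attention matrix rather than one masked by the graph adjacency. A minor additional point worth spelling out is that the softmax makes $C^{1}$ row-stochastic, consistent with (though not required by) the ConvGNN formulation, and that the lack of a graph mask means the transformer's implicit support is the complete graph $K_N$ — an observation I would flag now because it is the hinge of the later argument that the induced frequency response is confined to low-pass behaviour.
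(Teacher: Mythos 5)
Your proposal is correct and follows essentially the same route as the paper's own proof: identify the row-stochastic attention matrix $\mathrm{softmax}(QK^{T}/\sqrt{d_{out}})$ as a single data-dependent convolution support $C^{s}$, take $W^{(l,s)} = W_V$, and appeal to the \citet{balcilar_bridging} convention that supports may depend on the features (as in GAT). The only cosmetic difference is that the paper folds the multi-head concatenation and output projection into $\sigma$ rather than taking $\sigma$ to be the identity, which does not change the substance of the argument.
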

The lemma \ref{lemma_equiv_spatial_trans} brings the transformer attention into the space of spatial Convolution GNN (ConvGNN). It is worth noting that the convolutional support/kernel in the case of transformers is a function of input node embeddings (which is not static). This is in accordance to the definition of convolution supports for other attention based architectures such as GAT as in \cite{balcilar_bridging}. This property, we argue could give the transformers the ability to learn dynamic filters for a given graph, rather than rely on a fixed kernel.
Further, \citet{balcilar_bridging} already prove that the spectral GNNs are a special case of spatial ConvGNN.
We  now study the relation between the transformer (attention) and spectral GNNs.

\begin{figure*}[ht]
\centering
\subfigure[]{ 
\centering
  \includegraphics[width=0.22\linewidth]{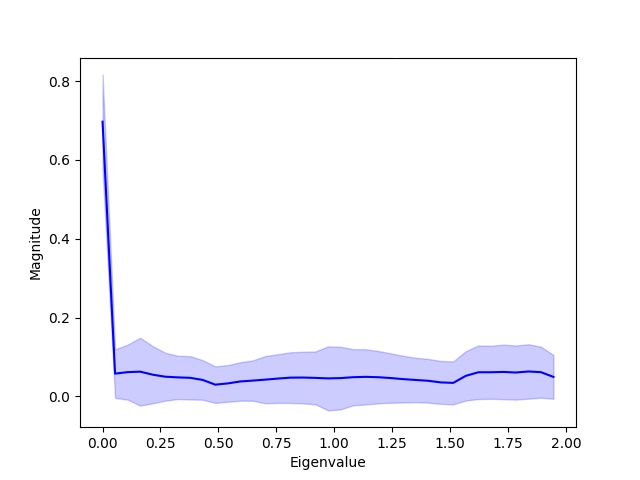}
}
\subfigure[]{ 
\centering
  \includegraphics[width=0.22\linewidth]{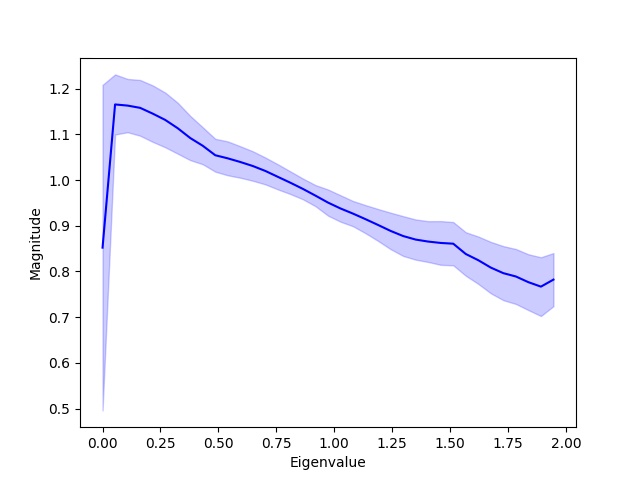}
}
\subfigure[]{ 
\centering
  \includegraphics[width=0.22\linewidth]{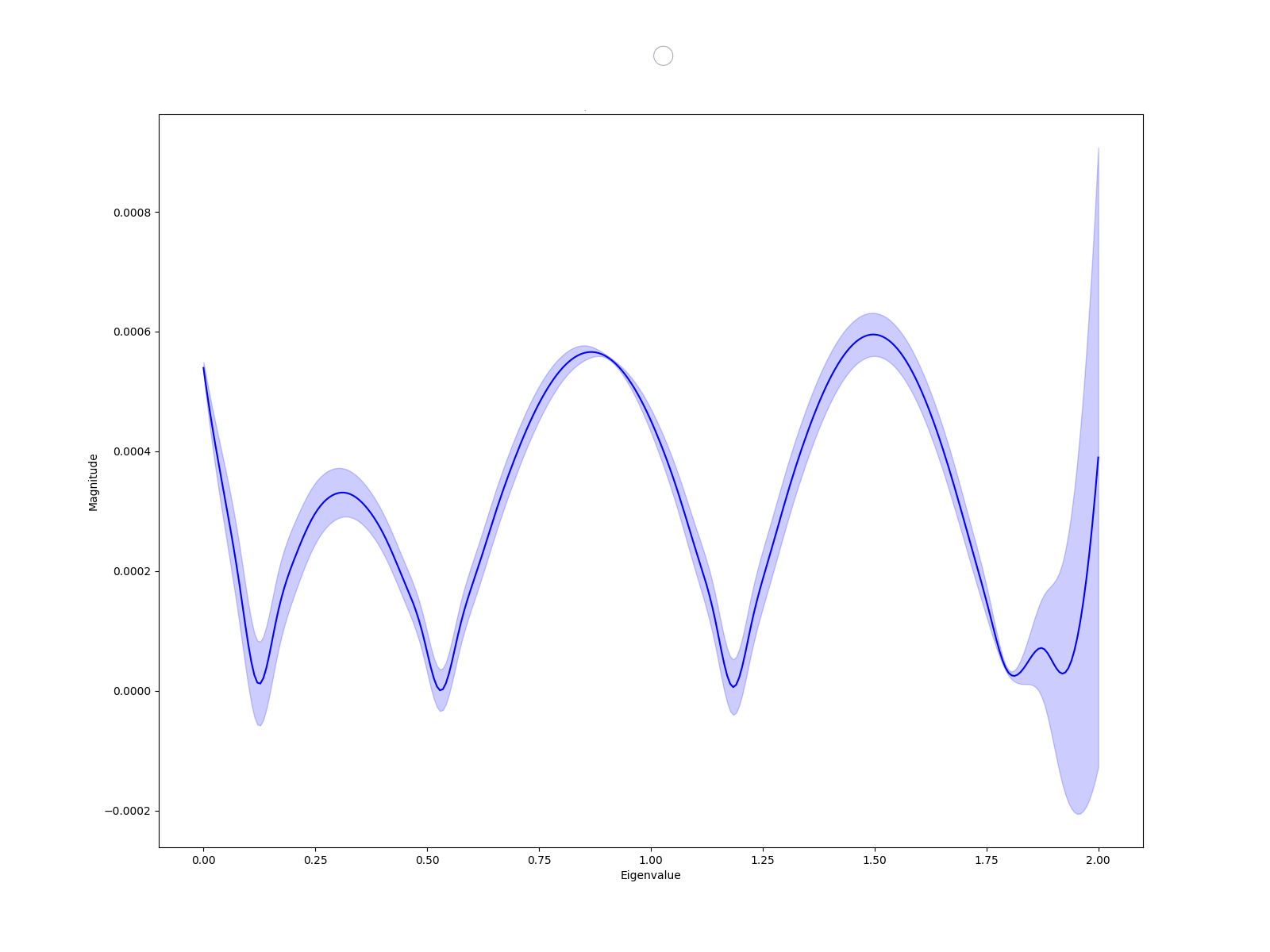}
  }
\subfigure[]{ 
\centering
  \includegraphics[width=0.22\linewidth]{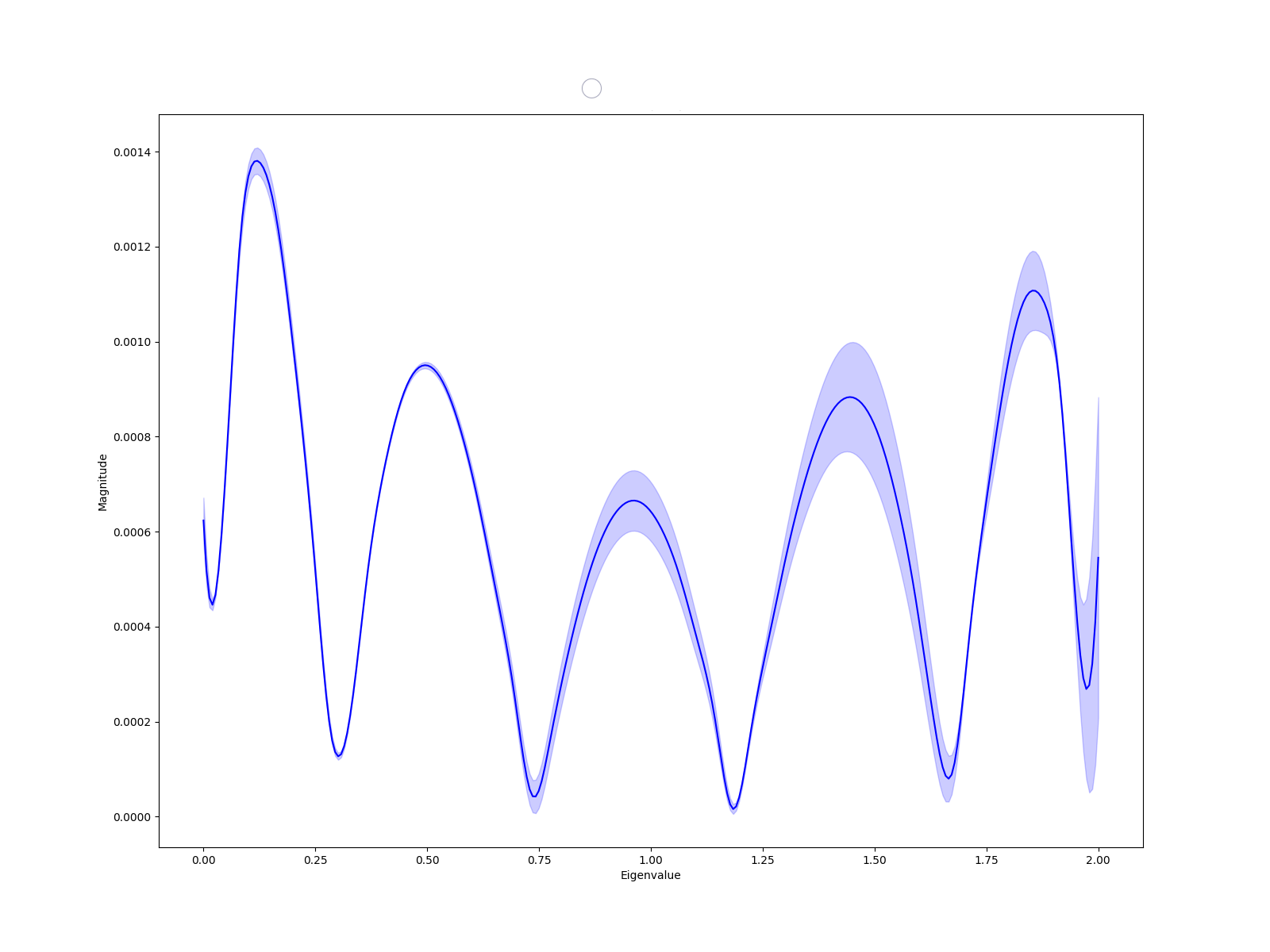}
}
 \caption{Aggregate Filter Frequency response on Zinc Regression dataset (section \ref{ab:dataset}) for (a) Vanilla Transformer (b) Transformer with learnable position encoding from SAN \citep{san2021}, (c) FeTA (section \ref{approach}), and (d) FeTA induced with laplacian position encoding. X axis shows the normalized frequency with magnitudes on the Y axis. Transformers approximately seem to contain the low-pass filter response. However, FeTA shows a considerable improvement in learning the desired filter responses.} \label{fig:filter_plots_transformer_feta}
\vskip -0.1in
\end{figure*}
We begin by defining a measure of error between matrices in the two spaces. Consider $C_t$ to be the space of all possible attention maps of a transformer. This is the set of $n \times n$ matrices, with $n$ being the number of nodes in the graph, such that the rows form a probability distribution i.e., each value is greater than or equal to $0$ and the rows sum to $1$ and is the row-wise softmax of the gram matrix. Thus $C_t$ is a subset of the space of all stochastic matrices of size $n$. Consider $F(\Lambda)$ or $F$ to be the diagonal matrix containing the desired frequency response of the spectral GNN. $\Lambda = diag(\lambda_1, \lambda_2, \dots \lambda_n)$ is the diagonal matrix containing the desired filter response and we let $[\lambda_1, \lambda_2, \dots \lambda_n] \in R^n$ i.e., we consider the entire vector space. We take $U \in R^{n \times n}$ to be the eigen vectors of the laplacian of the graph. Thus, the convolutional support of the desired spectral GNN is $C_g = UFU^T$. The error ($E$) between an arbitrary attention map $C_t$ and filter $F$, for a given graph is defined as the frobenius norm of the difference between $C_t$ and $C_g$ as below:
\begin{equation}\label{error_eq}
    E(C_t,C_g) = \norm{C_t - C_g}_F = \norm{C_t - UFU^T}_F
\end{equation}

By definition, the error would be $\geq 0$. We would like to understand whether the error becomes precisely equal to $0$ for a given $F$. 
What are the class of response functions where the error is 0 i.e. the transformer would be able to exactly learn the desired filter response, for some graph signal? 
In other words, is the transformer able to approximate any desired graph frequency response upto a desired precision to be fully expressive in spectral domain, similar to spectral approaches such as \cite{isufi2016autoregressive,he2021bernnet}?
If not, for the set of filter responses that the transformer is not capable of learning with 0 error, what are the bounds within which the transformer would approximate the filter response of the spectral GNN, for any given graph signal?
The following theorems help us understand the answer to these questions.
\begin{theorem}\label{thm_unique_optimum}
The error function $E(C_t,C_g)$ between the convolution supports of the transformer attention and that in the space of spectral GNNs has a minimum value of 0. Considering the case of non-negative weighted graphs, this minima can be attained at only those frequency responses for which the magnitudes at all frequencies $\lambda_{i}(C_g) \leq 1$ and the low(0) frequency response $\lambda_0(C_g) = 1$.
\end{theorem}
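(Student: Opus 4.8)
The plan is to reduce the statement to two elementary facts about row-stochastic matrices, exploiting that $C_g=UFU^{T}$ is, for a real symmetric normalized Laplacian, an orthogonal conjugation of the diagonal matrix $F$: its eigenvalues are exactly the filter responses $f(\lambda_1),\dots,f(\lambda_n)$ and its eigenvectors are the columns of $U$. Since $E(C_t,C_g)=\norm{C_t-C_g}_F\ge 0$ unconditionally, the content splits into (i) showing the value $0$ is actually attained, and (ii) showing that any frequency response $F$ for which $E=0$ is achievable (i.e. $C_t=C_g$ for some admissible attention map $C_t$) must satisfy $|\lambda_i(C_g)|\le 1$ for all $i$ and $\lambda_0(C_g)=f(0)=1$.

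For (i) I would exhibit a single explicit witness rather than argue abstractly. Any strictly positive row-stochastic matrix $P$ satisfies $P=\mathrm{softmax}(\log P)$ entrywise, and $\log P$ can be realized as a gram matrix $QK^{T}$ once the model is wide enough, so $P\in C_t$; hence it suffices to produce a graph and a filter for which $C_g=UFU^{T}$ is strictly positive and row-stochastic. The complete graph (whose normalized Laplacian has eigenvalue $0$ with eigenvector $\mathbf{1}$ and a single repeated nonzero eigenvalue) together with a low-pass filter taking value $1$ at frequency $0$ and a value in $(0,1)$ at the nonzero eigenvalue gives $C_g=aI+\tfrac{1-a}{n}J$ (with $J$ the all-ones matrix), which is strictly positive and doubly stochastic. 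Thus $E=0$ is attained, so $\min E=0$; since this witness already lies in the literal softmax image, the word ``minimum'' is justified without passing to a closure.

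For (ii), assume $E(C_t,C_g)=0$, so $C_t=C_g=UFU^{T}$ with $C_t$ row-stochastic. First, since the entries of $C_t$ are nonnegative and every row sums to $1$, the induced norm $\norm{C_t}_{\infty}$ equals the maximal row sum, which is $1$; hence $\rho(C_g)\le\norm{C_t}_{\infty}=1$, and because the eigenvalues of $C_g$ are the $f(\lambda_i)$ this yields $|\lambda_i(C_g)|=|f(\lambda_i)|\le 1$ for every $i$. Second, $C_t\mathbf{1}=\mathbf{1}$, so $UFU^{T}\mathbf{1}=\mathbf{1}$, i.e. $Fy=y$ with $y:=U^{T}\mathbf{1}\neq 0$; therefore $f(\lambda_i)=1$ for every index $i$ with $y_i=u_i^{T}\mathbf{1}\neq 0$. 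To place such an index at the zero frequency I would use the structure of the normalized Laplacian: for a non-negatively weighted graph with no isolated vertex, $D^{1/2}\mathbf{1}$ lies in the eigenspace of $L$ for $\lambda=0$ (since $D^{-1/2}AD^{-1/2}D^{1/2}\mathbf{1}=D^{-1/2}D\mathbf{1}=D^{1/2}\mathbf{1}$), and $\langle\mathbf{1},D^{1/2}\mathbf{1}\rangle=\sum_i\sqrt{d_i}>0$; so $\mathbf{1}$ has nonzero projection onto this eigenspace, forcing $y_i\neq 0$ for some $i$ with $\lambda_i=0$, and hence $\lambda_0(C_g)=f(0)=1$.

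The step I expect to be the crux is this last one: certifying that the DC eigenspace of the Laplacian is never orthogonal to $\mathbf{1}$. This is exactly where the hypotheses ``normalized Laplacian'' and ``non-negative weights'' are needed — for a signed graph, or a different Laplacian normalization, the zero-frequency eigenvector need not be sign-definite and the conclusion $f(0)=1$ can fail. Two smaller points to handle carefully in the write-up are: the mild gap between the attention set as a strict softmax image (entries $>0$) and as its closure (entries $\ge 0$), which I would sidestep by keeping the witness in the strict image as above; and the fact that the two stated conditions are necessary but not sufficient — entrywise nonnegativity of $UFU^{T}$ (and realizability of the per-row shift constants when the head dimension is below $n$) impose further constraints — but the theorem as phrased asserts only the necessity direction.
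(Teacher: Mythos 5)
Your proof is correct, and it takes a more direct route than the paper's. The paper first derives (in an auxiliary lemma) the optimal diagonal $F$ for a fixed $C_t$, substitutes it into a joint optimization over $(C_t,F)$, extracts the scalar identity $\sum_i \lambda_i\,(\mathbf{1}^T u_i/\sqrt{n})^2=1$ from $\mathbf{1}^T C_t\mathbf{1}=n$, bounds the eigenvalues via the Gerschgorin circle theorem, and then pins down the zero-frequency response by writing out the component-indicator eigenvectors of the Laplacian. You instead set $C_t=C_g=UFU^T$ directly and (a) bound the spectrum by $\rho(C_g)\le\lVert C_t\rVert_\infty=1$ (equivalent to the Gerschgorin step, and it cleanly delivers the magnitude bound $\lvert\lambda_i\rvert\le 1$), and (b) use the full vector equation $F\,U^T\mathbf{1}=U^T\mathbf{1}$ rather than its scalar contraction, so that $f(\lambda_i)=1$ on every index where $u_i^T\mathbf{1}\neq 0$, and then certify a zero-frequency such index via $D^{1/2}\mathbf{1}\in\ker L$ and $\langle\mathbf{1},D^{1/2}\mathbf{1}\rangle>0$. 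This last step is actually more faithful to the paper's own setup than its proof is: the paper's zero-eigenvectors $u_i[j]=1/\sqrt{\lvert S_i\rvert}$ and the orthogonality $\sum_k u_j[k]=0$ for nonzero frequencies hold for the unnormalized Laplacian $D-A$, whereas the object defined in the preliminaries is the normalized Laplacian, whose DC eigenvector is $D^{1/2}\mathbf{1}$ (they coincide only for regular graphs); your argument handles the normalized case correctly. You are also more careful than the paper about attainment versus infimum: the paper's witness $C_t=C_g=I$ lies only in the closure of the softmax image, while your strictly positive doubly stochastic witness on the complete graph sits inside it — though note that your witness is graph-specific, so if one reads the "minimum value of 0" claim as holding for every graph, one must either fall back on the closure (as the paper implicitly does) or accept that for non-regular graphs the infimum may not be attained by a strict softmax; given the theorem's loose quantification, your reading is defensible and you flag the issue explicitly. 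The remaining caveats you list (realizability of $\log P$ as a gram matrix needs full-rank features, and the stated conditions are necessary but not sufficient) are accurate and do not affect the claim as phrased.
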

Theorem \ref{thm_unique_optimum} has an important implication that transformers can learn only a filter response having a 1 in the low(0) frequency component effectively i.e. with 0 error. This is essentially the class of filters consisting of frequency responses such as low pass, band reject etc. that have a maximum magnitude of 1 in the pass band(and 1 in the 0-frequency component). For any other filter response, the transformer attention map can at best only be an approximation to the desired frequency response with a non zero error (cf., Figure \ref{fig:filter_plots_transformer_feta}). The next theorem gives lower and upper bounds on the error, also summarizing inherently limited expressiveness of transformer in spectral domain.
\begin{theorem}\label{thm_error_bounds}
The minimum of the error given by $E(C_t,C_g)$ over $C_t$, between the convolution supports of the set of stochastic matrices (which is a superset of the transformer attention map) and spectral GNN, for a given filter response $F$, is bounded below and above by the inequalities
\[\abs{\lambda_{min}} \leq E(C_t^{*},C_g) \leq \abs{\lambda_{max}}\]
where $\lambda_{min}$, $\lambda_{max}$ are the minimum and maximum of the absolute of the eigenvalues of $U(F-I)U^T$ respectively and $C_t^{*} = \underset{C_t}{arg\ min}\ E(C_t,C_g)$.
For the set of transformer attention maps $C_t$ the upper bound is relaxed as $\sqrt{\sum_i \lambda_i^2}$
\end{theorem}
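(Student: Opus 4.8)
The plan is to read the minimisation of $E(\cdot,C_g)$ as a nearest-point (Frobenius) problem: $C_t^{*}$ minimises $\norm{C_t-C_g}_F$ with $C_g=UFU^{T}$ over the convex set $\mathcal{S}$ of row-stochastic $n\times n$ matrices (over its subset $\mathcal{T}$ of row-wise softmaxes of Gram matrices, for the transformer). Set $N:=C_g-I=U(F-I)U^{T}$; since $F-I$ is real diagonal and $U$ real orthogonal, $N$ is symmetric with eigenvalues $\mu_i=\lambda_i(F)-1$, so $\abs{\lambda_{min}}=\min_i\abs{\mu_i}=\sigma_{min}(N)$, $\abs{\lambda_{max}}=\max_i\abs{\mu_i}=\norm{N}_2$ and $\sqrt{\sum_i\lambda_i^{2}}=\norm{N}_F$ by unitary invariance. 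Both $\mathcal{S}$ and $\mathcal{T}$ lie in the affine subspace $\mathcal{A}:=\{M:M\mathbf{1}=\mathbf{1}\}$, and this single shared constraint drives the lower bound.

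\emph{Lower bound.} Since $\mathcal{T}\subseteq\mathcal{S}\subseteq\mathcal{A}$, $E(C_t^{*},C_g)\ge\mathrm{dist}_F(C_g,\mathcal{A})$, and the projection onto $\mathcal{A}$ is explicit: a one-line computation gives $\mathrm{dist}_F(C_g,\mathcal{A})=\norm{\tfrac1n N\mathbf{1}\mathbf{1}^{T}}_F=\tfrac1{\sqrt n}\norm{N\mathbf{1}}_2$. As $N$ is symmetric, $\norm{N\mathbf{1}}_2\ge\sigma_{min}(N)\norm{\mathbf{1}}_2=\abs{\lambda_{min}}\sqrt n$, hence $E(C_t^{*},C_g)\ge\abs{\lambda_{min}}$; the same bound holds over $\mathcal{T}$ because $\mathcal{T}\subseteq\mathcal{A}$.

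\emph{Upper bound.} It suffices to exhibit one feasible matrix close to $C_g$. Taking $C_t=I\in\mathcal{S}$ (which is also approached by attention maps whose Gram matrix is strongly diagonally dominant) gives $E(C_t^{*},C_g)\le\norm{I-C_g}_F=\norm{N}_F=\sqrt{\sum_i\lambda_i^{2}}$, the bound stated for the transformer set. To argue inside the graph-Fourier basis, reset the zero-frequency response to $1$ and clip every other response into $[-1,1]$, giving a diagonal $F''$; by the admissibility characterisation of Theorem~\ref{thm_unique_optimum}, $C':=UF''U^{T}$ is again row-stochastic and $\norm{C'-C_g}_F=\norm{F''-F}_F=\sqrt{\sum_i(f''_i-f_i)^{2}}$. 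Because each reset/clip displaces response $i$ by at most $\abs{f_i-1}=\abs{\mu_i}$, this is $\le\norm{N}_F$ in general; and whenever $\abs{f_i}\le1$ for all $i\ge1$ (as for low-pass, high-pass, band-pass, band-reject responses) only $f_0$ is moved, so $\norm{C'-C_g}_F=\abs{f_0-1}=\abs{\mu_0}\le\abs{\lambda_{max}}$, which gives the sharp bound.

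\emph{Main obstacle.} Two points need care. First, one must check that $C'$ really is entrywise non-negative, i.e.\ that ``zero-frequency response $=1$, all other magnitudes $\le1$'' is \emph{sufficient} for membership in $\mathcal{S}$ and not merely necessary as phrased; this is immediate for regular graphs (there $\mathbf{1}$ is the zero-frequency eigenvector) but in general rests on the $D^{1/2}$-scaling behind Theorem~\ref{thm_unique_optimum} and on non-negativity of the edge weights. Second --- and this is the real crux --- tightening the upper bound to $\abs{\lambda_{max}}$ over \emph{all} stochastic targets, rather than only when a single response is out of range, cannot be done by a diagonal correction of $C_g$ (the clipping witness then only gives $\sqrt{\sum_i\lambda_i^{2}}$); closing that gap needs a graph-dependent low-rank correction of $C_g$ whose feasibility and Frobenius deviation must be controlled, and I expect this to be the hardest step and the point where the non-negative-weight hypothesis is genuinely used.
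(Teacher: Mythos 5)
Your lower bound and your transformer upper bound are the same as the paper's, just packaged differently. The paper derives the optimum over the row-sum constraint via KKT and lands on exactly your quantity: $\mathrm{dist}_F(C_g,\mathcal{A})^2=\tfrac1n\norm{N\mathbf{1}}_2^2=\sum_i\mu_i^2\,\langle\mathbf{1}/\sqrt n,\,u_i\rangle^2$, a convex combination of the $\mu_i^2$ (the weights sum to $1$ because $u_i$ is an orthonormal basis), which it then sandwiches between $\min_i\mu_i^2$ and $\max_i\mu_i^2$; and for the transformer set it likewise evaluates at $C_t=I$ to get $\norm{F-I}_F=\sqrt{\sum_i\lambda_i^2}$.

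The genuine gap is exactly where you flag it: the $\abs{\lambda_{max}}$ upper bound over the stochastic set. Your clipping witness $C'=UF''U^T$ does not work: Theorem~\ref{thm_unique_optimum} supplies only a \emph{necessary} condition for zero error, so ``$\lambda_0=1$, $\abs{\lambda_i}\le1$'' does not certify that $UF''U^T$ has non-negative entries, and even granting feasibility your diagonal correction only yields $\norm{N}_F$ in general, as you concede. The correction the paper uses is precisely the ``graph-dependent low-rank correction'' you anticipate, and it is nothing exotic: the witness is the affine projection itself, $C'=C_g-\tfrac1n\left(C_g\mathbf{1}-\mathbf{1}\right)\mathbf{1}^T$, a rank-one update of $C_g$, whose deviation $\norm{C'-C_g}_F^2=\sum_i\mu_i^2\langle\mathbf{1}/\sqrt n,u_i\rangle^2\le\max_i\mu_i^2$ is the same convex combination that gave you the lower bound, now read as an upper bound because the projection is claimed to be attained. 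You should be aware, though, that the feasibility issue you identify as the ``real crux'' is not actually closed by the paper either: its KKT derivation sets the multipliers of the entrywise non-negativity constraints to zero (the step ``$t=0$''), i.e.\ it assumes those constraints are inactive at the optimum, and never verifies that the rank-one-corrected matrix is entrywise non-negative. If it is not, the true constrained minimum is strictly larger than $\mathrm{dist}_F(C_g,\mathcal{A})$ and the $\abs{\lambda_{max}}$ bound does not follow from this argument. So your proposal is missing the rank-one witness that makes the paper's bound come out, but the obstacle you single out is a real one that the paper sidesteps rather than resolves.
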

Observed from the term $(F-I)$, the error increases as the filter deviates from the identity all pass matrix. 
Thus the error is unbounded as $F \xrightarrow[]{} \infty$, indicating that for certain choices of filters the transformer attention on its own may be a bad approximation.

\section{FeTA: Frequency Attention Network}\label{approach}
\begin{figure*}[htbp]
    \centering
    \includegraphics[width=0.85\textwidth]{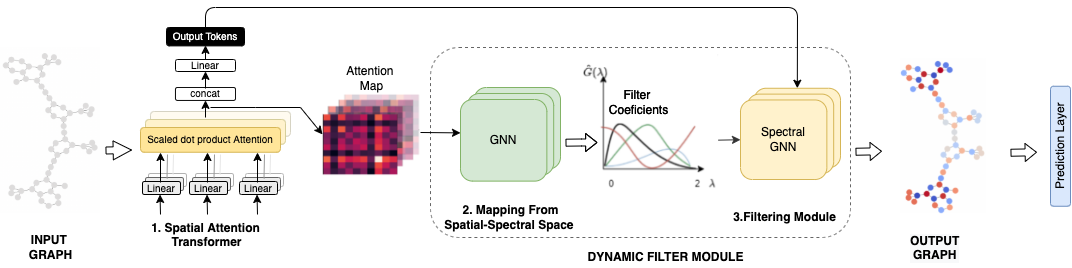}
    \caption{FeTA Framework: The first step aims to capture the spatial connectivity of the input graph in the form of an attention map. The second step uses each attention head's weights and learns the multiple filter coefficients using the GNN. Finally, the third step employs a spectral GNN with filter coefficients as input to capture the desired frequency response. Hence, the architecture allows us to learn spectral components using the spatial connectivity pattern of the signals on the original graph obtained from the attention map.}
    \label{fig:my_archi}
\end{figure*}
\subsection{Architecture Motivation} \label{sec:am}
 In the previous section, we have established the conclusive limitations of transformers in the spectral domain. Specifically, the transformer attention map, which acts as convolution support, learns a family of functions containing \textit{low-pass} response. We aim to inherit the attention map and project it to desired filter frequency responses for a given graph as our key architecture motivation. It is evident from the literature that spectral GNNs can learn static filter responses for given data distribution \citep{defferrard2016convolutional}. Hence, we propose an architecture that augments spatial attention with a filtering module using spectral GNN. 
However, our architecture novelty is to use the transformer attention map to learn the filter coefficient of spectral GNN dynamically. By doing so, we can learn the desired frequency response for a given graph. Moreover, this can help capture the spectra's beneficial components, which could be different per graph. Another benefit for adapting spatial attention of the transformer in our architecture is that it addresses the performance issues of GNNs such as over-smoothing \citep{DBLP:conf/iclr/ZhaoA20}, suspended animation \citep{DBLP:journals/corr/abs-1909-05729}, and over-squashing \citep{DBLP:conf/iclr/0002Y21}.
An additional motivation for our architecture is the necessity of filtering over graphs. 
A graph is a structured data with signals residing on it and as with any signal processing system, filtering is beneficial in applications such as 
increasing the signal-to-noise ratio, performing sampling over graph nodes, etc. Further motivation and applications of filtering could be found in \cite{ortega2018graph} and \cite{shuman2013emerging}.
Please note that alternative architecture designs, such as improving the vanilla transformer's limitation in expressing arbitrary spectral filters (e.g., by modifying self-attention), could be explored as future research and are out of the scope of this work. 
\subsection{Preliminaries for Architecture Design}
As explained in the previous subsection, our architecture builds upon spatial attention as input. We aim to design an architecture that creates an effective mapping between transformer attention map to spectral space. 
For simplicity, consider an arbitrary graph with two nodes. The transformer attention on nodes gives a $2 \times 2$ attention map (Figure \ref{fig:spaces_motivation_diag} illustrating the space mapping). We consider each position of map as a basis spanning a 4D vector space. For plotting in 3D, we squish the matrix into a 4D vector, and consider the 3D slice fixing the third dimension at zero. The set of matrices having low pass characteristics (all-pass identity matrix $E$ in the figure) will lie in the spectral space. The other points (point $D$ in figure) have a non-zero error between the transformer attention space and the space of the filter response.  
\begin{figure}[]
     \centering
    \includegraphics[width=0.95\linewidth]{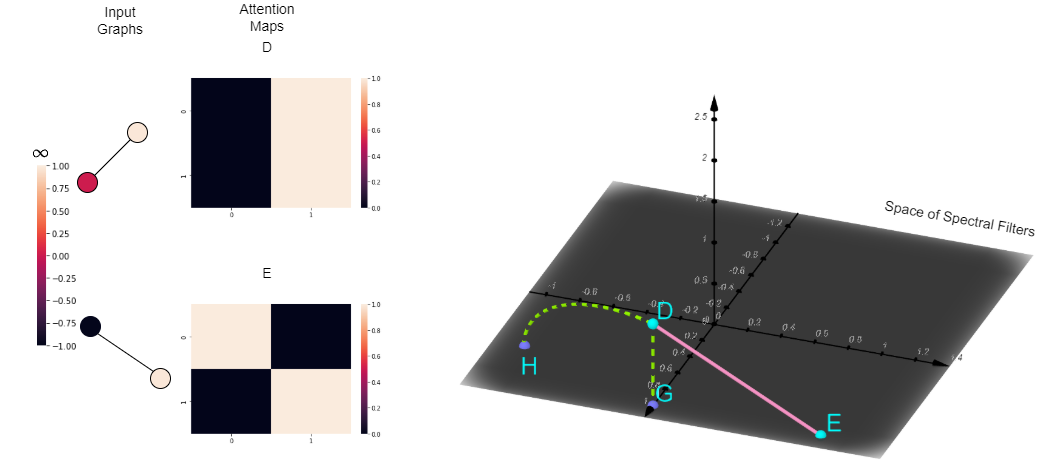}
    \caption{Figure indicating the set of the transformer attention maps (points D and E) and the sub-space of filter response of spectral GNNs (plane). The sub-space/set are a slice of $2 \times 2$ matrices belonging to the respective spaces with the $(1,0)$ position fixed at 0. Point G is the projection of the point D in the space of transformer attention onto the spectral space, where the segment DG represents the residue(minimum error between point and the subspace). Point E is the identity matrix (one of the points with 0 error). 
    However, in general, the error could be any arbitrary non-negative value. 
    Point H represents the desired filter response to be learnt for the task.}
    \label{fig:spaces_motivation_diag}
\end{figure}
\subsection{FeTA Architecture}
The Figure \ref{fig:my_archi} illustrates FeTA architecture with three steps:\\
\textbf{1. Spatial Attention Transformer:} we view the graph as set of node features to be fed to the vanilla transformer \citep{vaswani_2017_attention} that learns the pairwise similarity between these nodes using its attention mechanism as follows:
\begin{equation}\label{attn_gt}
\small
    AttentionWeights^h(Q,K) = softmax(\frac{QK^T}{\sqrt{d_{out}}})
\end{equation}
Here, $Q^T = W_Q^h X^T$ and $K^T = W_K^h X^T$ where $W_Q^h, W_K^h \in R^{d_{out} \times d_{in}}$ are the projection matrices for the query and key respectively for the head $h$. 
The output $X^h$ at the head $h$ can be obtained from:
\begin{equation}\label{attn_out_h}
\small
    X^h = Attention^h(Q,K,V) = softmax(\frac{QK^T}{\sqrt{d_{out}}}) V
\end{equation}
Next, we utilize the spatial information learned by the attention heads of the transformer to dynamically decide the \textit{filter coefficients} (i.e., distinct filter coefficients for each head). The transformer naturally gives diverse attention sub-spaces, which are utilized
by FeTA to design multiple filters covering a broad spectrum of the graph. This enables interpreting the association between a particular frequency component with each head for a given graph. Furthermore, it also helps select the useful range of information in the frequency domain for different sub-spaces (attention heads in FeTA). Our reasoning emerges from: 1) If the graph spectrum consists of several underlying components relevant for different sub-graphs, then it is paramount to dynamically learn the \textit{filter coefficients.} 2) when certain classes are skewed, a task-specific filter would learn generic coefficients for the majority class, when keeping a limit on the number of filters.

\textbf{2. Mapping from Spatial-Spectral Space}: In order to obtain the \textit{filter coefficients} from the weights of each attention head, we use a GNN layer. Our motivation to add a GNN layer in FeTA architecture (Figure \ref{fig:my_archi}) is that different filter shapes must discriminate the spectral components associated with distinct graphs. 
Therefore, we first aim to utilize GNN's position invariant message passing ability for capturing the connectivity pattern in the spatial domain.
Attention map provides inference to the connectivity of a graph implicitly, using which, we can deduce the frequency band to filter out the noise.  If each node's signal $x_{i}$ is considered with its neighborhood $\mathcal{N}(x_i)$ which represents the non-zero attention weights obtained from Equation \ref{attn_gt}, then the message passing is:
\begin{align*}
\small
    x_{im}^{l} &= A({x_j^l | x_j \in \mathcal{N}(x_i)}), x_{i}^{l+1} = U(x_{i}^{l},x_{im}^{l})
\end{align*}
where $x_{i}^{l}$ and $x_{im}^{l}$ are the signals and the aggregated message at node $x_i$ in the layer $l$. Here, $A$ and $U$ are the aggregation and update functions respectively. This framework enables the usage of popular message passing schemes of GNN. For example,
one of the aggregation is a simple projection of the signals followed by a summation weighted by the normalized laplacian with self loops. Next, the updation is an activation function such as ReLU. The final equation is now represented as below:
\begin{equation}\label{mpgnn_fc}
\small 
    x_{i}^{h,l+1} = \sigma (\sum_{x_j \in \mathcal{N}(x_i)} L[i,j] (x_j^{h,l})^T W^l_p \ \ \ \ \ )
\end{equation}
where $x_j^{h,l}$ is the node embedding for the $j^{th}$ node in the $l$-th layer of the GNN(to learn coefficients) for the $h$-th head of the transformer, $L = I - D^{-\frac{1}{2}} A D^{-\frac{1}{2}}$ is the normalized laplacian and $W^l_p$ is the learnable projection matrix at layer $l$. This GNN is common for all the heads and at all layers of the transformer. Here, the node embeddings reside in the coefficient space i.e. $X \in R^{N \times K}$, where $K$ is the order of the filter. For learning the \textit{filter coefficients} we initialize the node embeddings with a prior of the filter depending on the task. For an all-pass filter we could use a vector of all ones as the initialization. This choice of prior is justified by property \ref{allpass_lemma} (cf., Appendix \ref{allpass_lemma_apndx} for proof).
\begin{property}\label{allpass_lemma}
The filter coefficients consisting of the vector of all ones is an all-pass filter.
\end{property}
After the $L$-th message passing, the vector obtained from Equation \ref{mpgnn_fc} is given to a readout function such as global average pooling followed by a feed forward network to obtain the final \textit{filter coefficients} $\alpha^h$, per attention head:
\begin{equation}\label{final_alpha}
\small
    \alpha^h = MLP( \frac{1}{N} \sum_{i=1}^{N} x_i^{h,L} )
\end{equation}
where $x_i^h$ is the vector at node $x_i$ in the $h$-th attention head obtained from Equation \ref{mpgnn_fc}. \\
\textbf{3. Filtering Module}: We use the filter coefficient $\alpha^h$ to obtain the appropriate filter frequency response as:
\begin{equation}\label{filter_freqres}
    \hat{G^h} = \sum_{k=0}^{K} \alpha^h[k] T_k(\Tilde{L})
\end{equation}
where K is the filter order.
The desired filter response $H^{h}$ at head $h$ can then be obtained from $\hat{G^h}$ as observed in Eq \ref{eq_xg_conv_approx}:
\begin{align*}
\small
    H^h &= X^h \ast G^h = U\hat{G^h}(\Tilde{\Lambda})U^*X^h = \hat{G^h}(\Tilde{L})X^h \\
    &= \sum_{k=0}^{K} \alpha^h[k] T_k(\Tilde{L}) X^h
\end{align*}
The filter outputs from each head is concatenated to get the filtered output which is further concatenated with the attention output $X$, followed by an MLP, with appropriate normalizations, for the output of the encoder layer:
\begin{align*}
\small
H = \underset{h}{\|} H^h \text{,} \ \ \  X = MLP(\underset{h}{\|} X^h)\text{,} \\
\ \ \  X^a = Norm(MLP(X \oplus H))
\end{align*}
This could then be used in the downstream task of classification, regression, etc. In order to ensure learning of distinct \textit{filter coefficients} for each head (i.e., \textit{dynamic filtering}), we add a regularization term to the objective. The regularization tries to keep the coefficient vectors orthogonal to each other. It does so by taking the Frobenius norm \citep{horn1990norms} of the gram matrix of $X$ whose columns consist of the coefficient vector of each head. Formally, define $\alpha^i \in R^k$ and $X = [\alpha^1, \alpha^2, \dots \alpha^h] \in R^{k \times h}$ where $h$ is the number of heads and $k$ is the filter order. The regularization term is given by $\lVert (X^T X) \odot (\bold{1} \otimes \bold{1} -I) \lVert_2$, where $I \in R^{h \times h}$ is the identity matrix, $\odot$ and $\otimes$ are the hadamard and kronecker products.
The below theorem (proof in Appendix \ref{thrtcl_jst}) bounds our proposed method.
\begin{theorem}\label{filter_tm}
Assume the desired filter response $G(x)$ has $m+1$ continuous derivatives on the domain $[-1,1]$. Let $S_{n}^{T}G(x)$ denote the $n^{th}$ order approximation by the polynomial(chebyshev) filter and $S_{n}^{T'}G(x)$ denote the learned filter, $C_f$ be the first absolute moment of the distribution of the fourier magnitudes of $f$ (function learned by the network), $h$ the number of hidden units in the network and $N$ the number of training samples. Then the error between the learned and desired frequency response is bounded by:
\[\lvert G(x) - S_{n}^{T'}G(x) \lvert = \mathcal{O}(\frac{nC_f^2}{h} + \frac{hn^2}{N}log(N) + n^{-m})\]
\end{theorem}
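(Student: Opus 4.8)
The plan is to split the total error by the triangle inequality into the three independent sources flagged in the statement and bound each separately:
\[
|G(x) - S_n^{T'}G(x)| \;\le\; \underbrace{|G(x) - S_n^{T}G(x)|}_{\text{truncation}} \;+\; \underbrace{|S_n^{T}G(x) - S_n^{T'}G(x)|}_{\text{learning the coefficients}}.
\]
For the truncation term, since $G$ has $m+1$ continuous derivatives on $[-1,1]$, classical Chebyshev/Jackson approximation theory gives $\|G - S_n^{T}G\|_\infty = \mathcal{O}(n^{-m})$ (the Lebesgue-constant $\log n$ factor separating the Chebyshev projection from the best polynomial approximant is absorbed into the $\mathcal{O}$), which produces the $n^{-m}$ term.

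For the learning term, observe that $S_n^{T}G$ and $S_n^{T'}G$ use the same Chebyshev basis and differ only in their coefficient vectors: the exact coefficients $\alpha^k \propto \int_{-1}^{1} G(x)T_k(x)(1-x^2)^{-1/2}\,dx$ versus the coefficients $\hat\alpha^k$ emitted by FeTA's coefficient sub-network (the GNN message passing of Eq.~\ref{mpgnn_fc} followed by the pooling-plus-MLP readout of Eq.~\ref{final_alpha}). Since $|T_k(x)|\le 1$ on $[-1,1]$,
\[
|S_n^{T}G(x) - S_n^{T'}G(x)| \;\le\; \sum_{k=0}^{n} |\alpha^k - \hat\alpha^k| \;\le\; (n+1)\max_{0\le k\le n}|\alpha^k - \hat\alpha^k|,
\]
so it suffices to control the per-coefficient learning error. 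Here I would invoke a Barron-type bound: modeling the map that the sub-network must learn as a function whose Fourier-magnitude distribution has first absolute moment $C_f$, Barron's approximation theorem gives a mean-square approximation error $\mathcal{O}(C_f^2/h)$ for a network with $h$ hidden units, while training on $N$ samples adds an estimation error of order $\mathcal{O}\!\big(\tfrac{hn}{N}\log N\big)$ controlled by the pseudo-dimension of the network producing the $n$-dimensional coefficient output. Multiplying by the factor $n+1$ from the $\ell_1$ aggregation over coefficients yields $\mathcal{O}\!\big(\tfrac{nC_f^2}{h} + \tfrac{hn^2}{N}\log N\big)$, and adding the truncation term gives the claimed bound uniformly in $x \in [-1,1]$.

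The step I expect to be the main obstacle is making the Barron argument rigorous for FeTA's actual architecture. One has to (a) argue the target map from the attention-induced graph representation to the ideal Chebyshev coefficients has finite Barron norm $C_f$; (b) accommodate the GNN layers and the global-average-pooling readout, which are not literally a shallow sigmoidal network --- either by treating the GNN part as a fixed feature extractor feeding a Barron-class head, or by using a depth-robust variant of Barron's estimate; and (c) verify that the orthogonality regularizer on the per-head coefficient vectors only restricts the hypothesis class and hence can only tighten the estimation term. The remaining ingredients --- the Chebyshev truncation rate and the $\ell_1\!\to\!\ell_\infty$ passage over the $n+1$ coefficients --- are routine, so I would leave all constants implicit inside the $\mathcal{O}(\cdot)$.
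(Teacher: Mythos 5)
Your decomposition is, at the level of ideas, the same as the paper's: both split the error into a Chebyshev truncation term controlled by the smoothness of $G$ and a coefficient-learning term controlled by a Barron-type bound, with the factor of $n+1$ coming from summing the per-coefficient error against $\lvert T_k(x)\rvert \le 1$. The difference is in how the $\mathcal{O}(n^{-m})$ term is produced. You apply the triangle inequality directly to $G$ and cite the classical Jackson/Chebyshev projection rate. The paper instead first invokes Peano's kernel theorem to write $S_n^{T}G - G$ as an integral of $G^{(m+1)}$ against a kernel built from $(x-t)_+^m$, and then derives the decay $c_{km}=\mathcal{O}(k^{-m})$ of that kernel's Chebyshev coefficients by repeated integration by parts of $\int_0^{\phi}(\cos\theta-\cos\phi)^m\cos k\theta\,d\theta$; this is essentially a from-scratch proof of the same decay rate you cite, routed through the kernel rather than through $G$ itself. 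Your version is more direct and avoids the Peano machinery, at the cost of citing rather than proving the truncation rate. On the learning term the two arguments are identical in substance, and the obstacle you flag --- justifying the Barron estimate for FeTA's actual GNN-plus-pooling-plus-MLP coefficient network rather than a shallow sigmoidal network --- is present in the paper's proof as well: it simply assumes an injective aggregation and asserts the bound for ``a classical neural network with $h$ hidden units,'' so your proposal is not missing anything the paper actually supplies.
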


The above theorem provides the bound between the desired frequency response and that learned by FeTA. It states that as the filter order $n$ and the hidden dimension of the network $h$ are increased (subject to $n^3 C_f^2 \leq hn^2 \leq \frac{N}{\log{N}}$) the approximation will converge to the desired filter response. 
The condition $h \leq \frac{N}{n}$ can be thought to satisfy the statistical rule that the model parameters must be less than the order of sample size. In the limit of $N \xrightarrow[]{}\infty$, $h$ and $n$ could be increased to as large values as desired subject to $h \geq \mathcal{O}(n)$, which is the order of parameters to be approximated. This is equivalent to say that if we do not consider the generalization error, we can theoretically take a large $h$.  Then, we are left with only the term containing the filter order i.e. the approximation error comes down to $\mathcal{O}(n^{-m})$ as expected. One point to note is that we assume suitable coefficients can be learned from the input. This assumption requires that the graph has the necessary information regarding spatial connectivity and signals. In the current implementation, we only use the information from the signals residing on the graph nodes in the attention heat map and discard the spatial connectivity.
Nevertheless, the current implementation does well, as is evident from the results on the real world (section \ref{exp}) and synthetic (\ref{synthetic_dataset}) datasets.
We do attempt to model the filter coefficients using the connectivity structure imposed by the original graph as input, albeit with minimal empirical gains (Table \ref{t:feta_ipgraph_attn}). The potential reason for limited performance could be that frequencies on the graph cannot be deduce from the structure alone and we may need better approaches to combine the signals and graph structure for more precisely learning the optimal frequency response of the graph beneficial to the task. 

\textbf{Limitations}
The space and time complexity of our method is $\mathcal{O}(N^2)$ for full attention. This could be alleviated by using sparse attention (cf., Table \ref{t:1}). One may also use kernel methods as in \citep{gkat_choromanski2021graph} to reduce the number of nodes in the graph and then apply the transformer on the reduced graph. Similar to vanilla transformer, FeTA cannot induce positional encoding (PE) on its own (is also not the focus of the paper). 
\section{Experiment Results}\label{exp}
We aim to answer following research questions: \textbf{RQ1:} Can \textit{dynamic graph filtering} improve FeTA's ability over base transformer for graph representation learning? \textbf{RQ2:} What is the efficacy when a low-pass filter GNN is induced with FeTA's ability to perform attention over entire graph spectrum?
\textbf{RQ3:} What is the impact of recently proposed position encoding (PE) schemes on FeTA?\\
\textbf{Datasets and Settings}: We use widely popular datasets \citep{graphit2021,DBLP:conf/nips/HuFZDRLCL20}: MUTAG, NCI1, and the OGB-MolHIV for graph classification; PATTERN and CLUSTER for node classification; and ZINC for graph regression task (details in  \ref{ab:dataset}). We borrow experiment settings and baseline values from \citep{dwivedi2020generalization,graphit2021,san2021}, and the number of parameters are same with corresponding baseline in \textbf{RQs}.\\
\textbf{Baselines}
We first benchmark FeTA against vanilla transformer \cite{vaswani_2017_attention}. To study impact of position encoding on FeTA, we compare against recent PE-induced vanilla transformer based approaches, namely Graph-BERT \cite{zhang2020graph}, GT \citep{dwivedi2020benchmarking}, SAN \citep{san2021}, GraphiT \citep{graphit2021}, and Graphormer \citep{ying2021transformers}, along with other message passing GNN models such as GCN \citep{kipf2016gcn}, GIN \citep{xu2018powerful}, GAT \citep{velivckovic2018graph}, GatedGCN \citep{bresson2017gatedGCN}, and PNA \citep{corso2020principal}; also with spectral GNNs such as BankGCN \citep{gao2021message} and Chebnet \citep{defferrard2016convolutional}. Means and uncertainties are derived same as SAN.

\textbf{FeTA Configurations:} For a fair and exhaustive comparison, we provide \underline{seven} variants of FeTA: 1) \textit{FeTA-Base} to compare against vanilla transformer without position encoding module, 2) for generalizability study, \textit{FeTA-GAT} replacing transformer attention in Step 1 of Figure \ref{fig:my_archi} with GAT's attention. 
3) \textit{FeTA+LapE} contains static position encoding from \cite{graphit2021}. 4) \textit{FeTA+3RW} consists of position encoding based on 3-step RW kernel from \cite{graphit2021}.  5) \textit{FeTA+GCKN+3RW} that uses GCKN \citep{chen2020convolutional} in addition with 3-step RW kernel to induce graph topology, 6) \textit{FeTA+LPE+Full} with learnable position encoding \citep{san2021} with full attention. 7) \textit{FeTA+LPE+Sparse} inheriting learnable position encoding from \cite{san2021} (cf., \ref{sec:position} for implementation details).
Using PE-induced FeTA variants, in experiments, we aim to study if FeTA's ability to attend over graph spectrum complements the ability of position encodings in learning topological information, and is there any performance gain?

\begin{table*}[!h]
\tiny
\resizebox{\textwidth}{!}{
\begin{tabular}{lcccccc}
Models & \textbf{MUTAG}       & \textbf{NCI1}   & \textbf{ZINC}  & \textbf{ogb-MOLHIV}       & \textbf{PATTERN}   & \textbf{CLUSTER}  \\
& \textbf{\% ACC} &\textbf{\% ACC} &\textbf{MAE} &\textbf{\% ROC-AUC} &\textbf{\% ACC} &\textbf{\% ACC} \\
\midrule
Vanilla Transformer &82.2 $\pm$6.3 &70.0 $\pm$ 4.5  &0.696 $\pm$ 0.007 &65.22 $\pm$ 5.52 &75.77 $\pm$ 0.4875 &21.001 $\pm$ 1.013 \\
FeTA-Base &$\textcolor{red}{87.2 \pm 2.6}$&$\textcolor{red}{73.7 \pm 1.4}$&$\textcolor{red}{0.412 \pm 0.004}$ 	&$\textcolor{red}{67.59 \pm 1.83}$	&$\textcolor{red}{78.65 \pm 2.509}$&$\textcolor{red}{30.351 \pm 2.669}$ \\
\end{tabular}
}
\caption{Results on graph/node classification/regression Tasks (\textbf{RQ1)}). Higher (in red) value is better (except for ZINC). }\label{t:0}
\end{table*}
\begin{table*}[!h]
\tiny
\resizebox{\textwidth}{!}{
\begin{tabular}{lcccccc}
Models & \textbf{MUTAG}       & \textbf{NCI1}   & \textbf{ZINC}  & \textbf{ogb-MOLHIV}       & \textbf{PATTERN}   & \textbf{CLUSTER}  \\
\midrule
GAT  &80.3 $\pm$ 8.5 &74.8 $\pm$ 4.1 &0.384 $\pm$ 0.007 &71.18$\pm$0.27* &78.271 $\pm$ 0.186 &70.587 $\pm$ 0.447\\
FeTA-GAT &$\textcolor{red}{85.18 \pm 2.8}$&$\textcolor{red}{78.67 \pm 1.2}$&$\textcolor{red}{0.279 \pm 0.008}$ &$\textcolor{red}{76.69  \pm 0.17}$	&$\textcolor{red}{84.756 \pm 0.128}$&$\textcolor{red}{71.848 \pm 0.448}$ \\
\end{tabular}
}\vspace{-2mm}
\caption{FeTA-GAT performance increases when FeTA's ability to perform attention over entire graph spectrum has been induced in GAT which is a low-pass filter (\textbf{RQ2}), empowering GAT to attend to all frequencies. (red is higher except for Zinc. * is calculated by us.), }\label{t:01}
\end{table*}
\subsection{Results} \label{sec:results1}
FeTA-Base outperforms vanilla transformer (Table \ref{t:0}) across all datasets establishing the positive impact of combining \textit{dynamic filtering} with spatial attention into FeTA. 

\textbf{Generalizability Study}: For \textbf{RQ2}, we study if FeTA's ability to learn dynamic filters for covering entire graph spectrum improves performance of a GNN. 
Hence, we chose GAT, that learns low-pass filters \cite{balcilar_bridging}.
Table \ref{t:01} illustrates that FeTA-GAT outperforms GAT on all datasets concluding the generalizability of our approach to GNNs, also proving FeTA architecture works agnostic of specific form of attention mechanism and position encodings (PEs). 

\textbf{Effect of PE:}
To understand effect of PEs, we replace vanilla transformer with FeTA in original implementation of \cite{dwivedi2020generalization,graphit2021,san2021}. We keep nearly same parameters to understand fair impact. 
We observe (Table \ref{t:1}) that for the smaller MUTAG and NCI1 datasets, FeTA achieves the best results against baselines using the structural encoding of GCKN. It indicates that these datasets benefit more from structural information. Also, lower results of spectral GNN-based models (ChebNet and BankGCN) than FeTA configurations support our choice for combining both spatial and spectral properties in FeTA. 
On OGB-MolHIV and PATTERN/CLUSTER's graph and node classification tasks, learnable position encoding has most positive impact on FeTA. Graphformer reports the highest value on MolHIV. However, its parameters are 47M compared to $\approx$ 500K from FeTA+LPE, GCN-based models, and SAN. 
Notably, we see that FeTA performs exceptionally well on the graph regression task of ZINC with a relative decrease in the error of up to $50\%$. Potential reason could be FeTA's ability to learn diverse filters that are distinct for each graph (c.f, Appendix \ref{filter_freq_res_apndx} for frequency images). An important observation is that \ul{no PE scheme provides homogeneous performance gain across all datasets} for FeTA (neither for vanilla transformers). Also, FeTA's ability complements PEs performance for further improving the overall performance (answering \textbf{RQ3}) . 
\begin{table*}[!h]
\resizebox{\textwidth}{!}{
\begin{tabular}{lcccccc}
Models & \textbf{MUTAG}       & \textbf{NCI1}   & \textbf{ZINC}  & \textbf{ogb-MOLHIV}       & \textbf{PATTERN}   & \textbf{CLUSTER}  \\
\midrule
GCN  &78.9$\pm$10.1 &75.9 $\pm$ 1.6 &0.367 $\pm$ 0.011 &76.06 $\pm$ 0.97 &71.892 $\pm$ 0.334 &68.498 $\pm$ 0.976 \\
GatedGCN  &NA &NA &0.282 $\pm$ 0.015 &NA &85.568 $\pm$ 0.088 &73.840 $\pm$ 0.326 \\
GAT  &80.3 $\pm$ 8.5 &74.8 $\pm$ 4.1 &0.384 $\pm$ 0.007 &71.18$\pm$0.278*&78.271 $\pm$ 0.186 &70.587 $\pm$ 0.447\\
PNA &NA &NA &0.142 $\pm$ 0.010 &$\textcolor{blue}{79.05 \pm 1.32}$ &NA &NA \\
GIN  &82.6 $\pm$ 6.2&$\textcolor{blue}{81.7 \pm  1.7}$ &0.526 $\pm$ 0.051 &75.58 $\pm$ 1.40&85.387 $\pm$ 0.136 &64.716 $\pm$ 1.553 \\
ChebNet  &82.5 $\pm$ 1.58 &81.8 $\pm$  2.35 &NA &74.69 $\pm$ 2.08 &NA &NA \\
BankGCN  &82.89 $\pm$ 1.61 &82.06 $\pm$  1.75 &NA &77.95 $\pm$ 1.56 &NA &NA \\
\hdashline 
Graph-BERT  &NA &NA &0.267 $\pm$ 0.012 &NA &75.489 $\pm$ 0.216 &70.790 $\pm$ 0.537 \\
Graphormer  &NA &NA &0.122 $\pm$ 0.006&$\textcolor{red}{80.51 \pm 0.53}$ &NA &NA \\
GT-sparse &NA &NA &0.226 $\pm$ 0.014 &NA &84.808 $\pm$ 0.068 &73.169 $\pm$ 0.662 \\
GT-Full  &NA &NA &0.598 $\pm$ 0.049 &NA &56.482 $\pm$ 3.549 &27.121 $\pm$ 8.471 \\
SAN-Sparse &78.8 $\pm$ 2.9* &80.5 $\pm$ 1.3*&0.198 $\pm$ 0.004&76.61 $\pm$ 0.62 &81.329 $\pm$ 2.150 &75.738 $\pm$ 0.106 \\
SAN-Full &71.9 $\pm$ 2.9* &71.93 $\pm$ 3.4* &0.139 $\pm$ 0.006 &77.85 $\pm$ 0.65 &$\textcolor{red}{86.581 \pm 0.037}$ & 76.691 $\pm$ 0.247 \\
GraphiT-LapE  &85.8 $\pm$ 5.9&74.6 $\pm$ 1.9 &0.507 $\pm$ 0.003 &65.10 $\pm$ 1.76* &76.701 $\pm$ 0.738* &18.136 $\pm$ 1.997* \\
GraphiT-3RW  &83.3 $\pm$ 6.3 &77.6 $\pm$ 3.6 &0.244 $\pm$ 0.011 &64.22 $\pm$ 4.94* &76.694 $\pm$ 0.921* &21.311 $\pm$ 0.478* \\
GraphiT-3RW + GCKN  &$\textcolor{blue}{90.5 \pm 7.0}$ & 81.4 $\pm$ 2.2 &0.211 $\pm$ 0.010 &53.77 $\pm$ 2.73* &75.850 $\pm$ 0.192* & 69.658 $\pm$ 0.895* \\
\hline
FeTA + LapE (ours) &87.4 $\pm$ 2.6 &75.4 $\pm$ 2.6 & $\textcolor{blue}{0.077 \pm  0.001}$ & 66.80 $\pm$ 2.18 & 78.808 $\pm$ 1.662 &19.366 $\pm$ 3.818 \\
FeTA + 3RW (ours) &87.0 $\pm$ 2.6 &78.5 $\pm$ 1.3 &0.104 $\pm$ 0.005	&59.95 $\pm$ 3.91	&77.285 $\pm$ 1.146	&68.572 $\pm$ 2.164 \\
FeTA + GCKN + 3RW (ours) &$\textcolor{red}{92.9 \pm 1.6}$&$\textcolor{red}{83.0 \pm 0.5}$& $\textcolor{red}{0.068 \pm 0.002}$& 53.50 $\pm$ 5.89 &77.86 $\pm$ 0.573 &67.507 $\pm$ 2.856 \\
FeTA + LPE + Full (ours) &72.2 $\pm$ 1.6 &72.99 $\pm$ 0.5 &0.1836 $\pm$ 0.002	&76.88 $\pm$ 0.573	&$\textcolor{blue}{86.52 \pm 0.013}$&$\textcolor{blue}{76.750 \pm 0.296}$ \\
FeTA + LPE + Sparse (ours) &79.6 $\pm$  2.6	&81.0 $\pm$ 1.5	&0.1581 $\pm$ 0.001	&78.10 $\pm$ 0.303 & 86.30 $\pm$ 0.024 & $\textcolor{red}{77.224 \pm 0.111}$\\
\end{tabular}
}\vspace{-2mm}
\caption{Impact of external position encoding (PE) schemes (\textbf{RQ3}). For a dataset, red and blue colors represent the highest and the second best result, respectively. The baselines values are from \citep{san2021,graphit2021} and values with * are calculated by us. None of the PE method works agnostic of dataset neither for previous vanilla transformer-based models, nor when induced in FeTA. However, FeTA is able to complement the performance of various PEs, achieving best values on majority of datasets.}\label{t:1}
\end{table*}
\begin{table*}[!h]
\tiny
\resizebox{\textwidth}{!}{
\begin{tabular}{lcccccc}
Models & \textbf{MUTAG}       & \textbf{NCI1}   & \textbf{ZINC}  & \textbf{ogb-MOLHIV}       & \textbf{PATTERN}   & \textbf{CLUSTER}  \\
\midrule
FeTA-Base &$\textcolor{red}{87.2 \pm 2.6}$&$\textcolor{red}{73.7 \pm 1.4}$&0.412 $\pm$ 0.004	&$\textcolor{red}{67.59 \pm 1.83}$	&$\textcolor{red}{78.65 \pm 2.509}$&$\textcolor{red}{30.351 \pm 2.669}$ \\
FeTA-Static &83.3 $\pm$ 1.3	&70.4 $\pm$ 3.8	&0.470 $\pm$ 0.002	&67.10 $\pm$ 3.647 &76.03 $\pm$ 0.861	&20.995 $\pm$ 0.005 \\
FeTA-ARMA &85.1 $\pm$1.3	&72.2 $\pm$ 2.1	&$\textcolor{red}{0.355  \pm 0.007}$ &65.45 $\pm$ 4.237	&76.93 $\pm$ 0.279	&20.390 $\pm$ 2.859 \\
\end{tabular}
}\vspace{-2mm}
\caption{For Ablation studies, comparing FeTA-Base against: 1) FeTA-Static that uses static filter and 2) FeTA-ARMA which changes the filter in base configuration to ARMA \cite{isufi2016autoregressive}, illustrating non-dependency of architecture on particular type of filter as Chebyshev filters are used in base configuration.}\label{t:2}
\end{table*}

\textbf{Filter Ablations:}
We created a configuration (FeTA-Static) where the filter is static per dataset based on attention heads (i.e., directly adapt Equation \ref{eq_xg_conv_approx} for $H^{h}$ with the learnable parameter $\alpha$ instead of computing $\alpha$ from the attention weights). Our idea here is to understand the impact of combining multi-headed attentions with a \textit{graph-specific dynamic filter}. From Table \ref{t:2}, we clearly observe the empirical advantage of FeTA-Base. For second ablation,
we note, equation \ref{filter_freqres} represents a polynomial filter using the Chebyshev polynomials \citep{defferrard2016convolutional}. We used Chebyshev polynomials for spectral approximation as it has been a defacto in the literature. Polynomial filters \citep{hammond2011wavelets} are smooth and have restrictions that they cannot model filter responses with sharp edges. Hence, we devise FeTA-ARMA that uses rational filters such as ARMA \citep{isufi2016autoregressive}. 
It can be observed from Table \ref{t:2} that FeTA-ARMA resulted in lower performance than FeTA-Base on most of the datasets. However, on ZINC, there is an improvement using ARMA compared to Chebyshev polynomial filters.
Hence, we remark that filter response and empirical predictive performance could be orthogonal objectives, and it becomes a trade-off to decide which type of filter to apply for a given task.

\textbf{Learnt Filter Frequency Response:}
\begin{figure*}[ht!]
  \centering
  \subfigure[]{ 
  \centering
    \includegraphics[width=0.16\linewidth, height=0.16\linewidth]{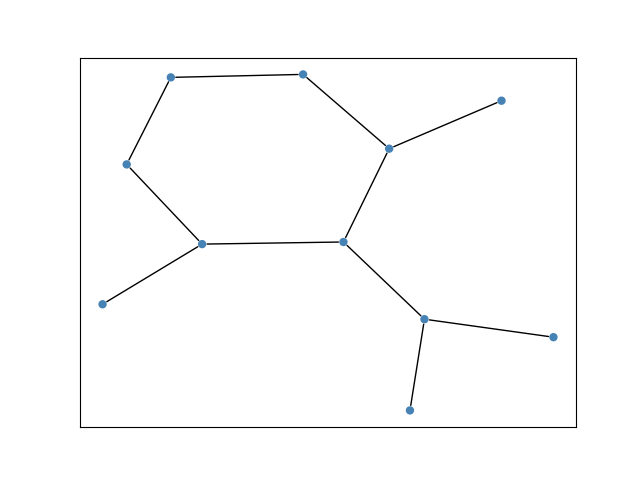}
  }
  \subfigure[]{ 
  \centering
    \includegraphics[width=0.16\linewidth,height=0.16\linewidth]{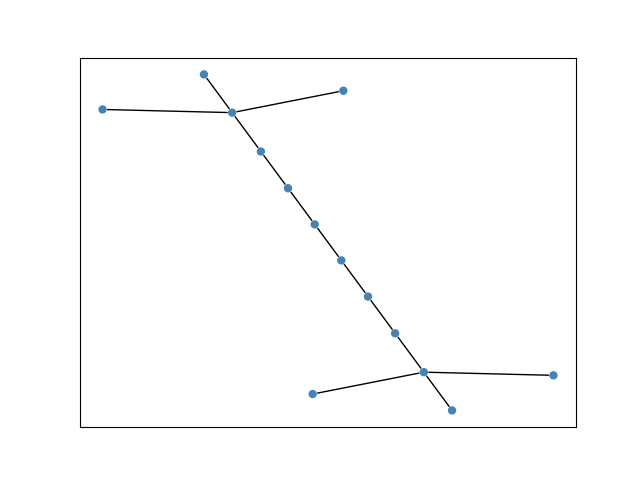}
  }
  \subfigure[]{ 
  \centering
    \includegraphics[width=0.16\linewidth,height=0.16\linewidth]{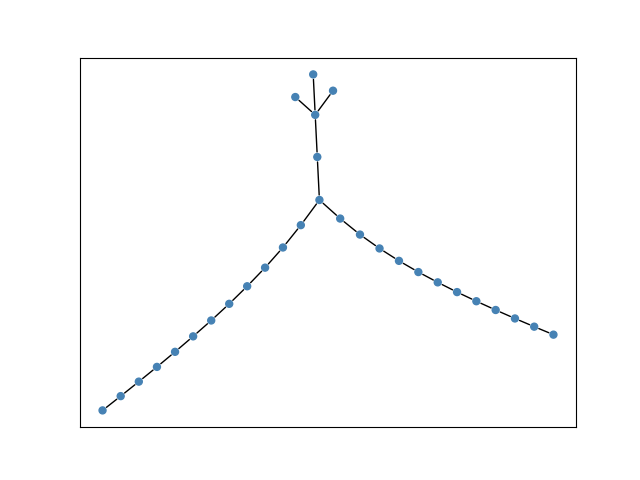}
  }
  \subfigure[]{ 
  \centering
    \includegraphics[width=0.16\linewidth,height=0.16\linewidth]{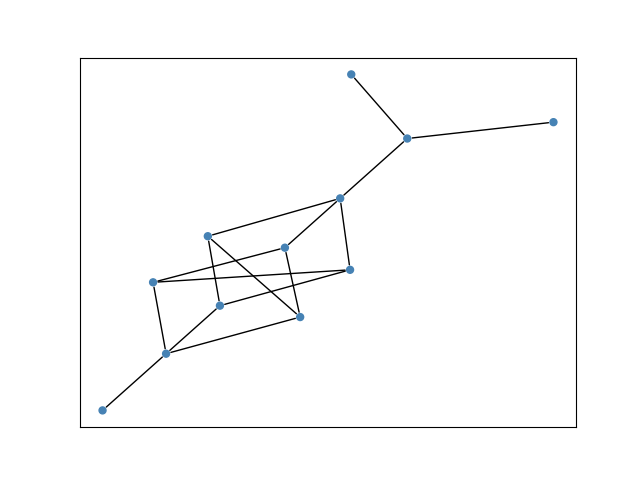}
  }
  \subfigure[]{ 
  \centering
    \includegraphics[width=0.16\linewidth,height=0.16\linewidth]{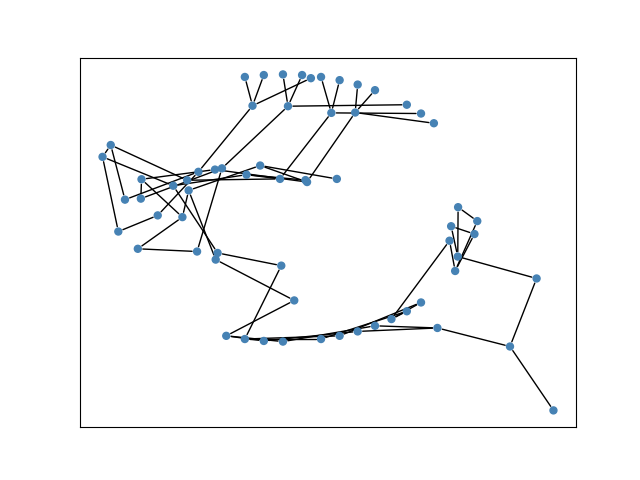}
  }
  \subfigure[\tiny  $K=8$.]{ 
  \centering
    \includegraphics[width=0.16\linewidth, height=0.16\linewidth]{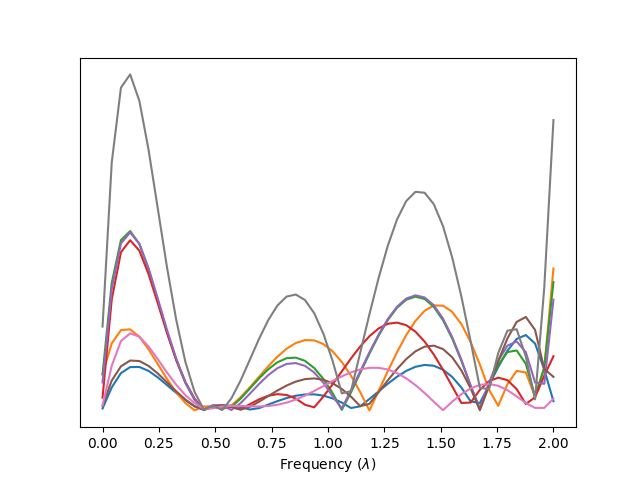}
  }
  \subfigure[\tiny  $K=8$.]{ 
  \centering
    \includegraphics[width=0.16\linewidth,height=0.16\linewidth]{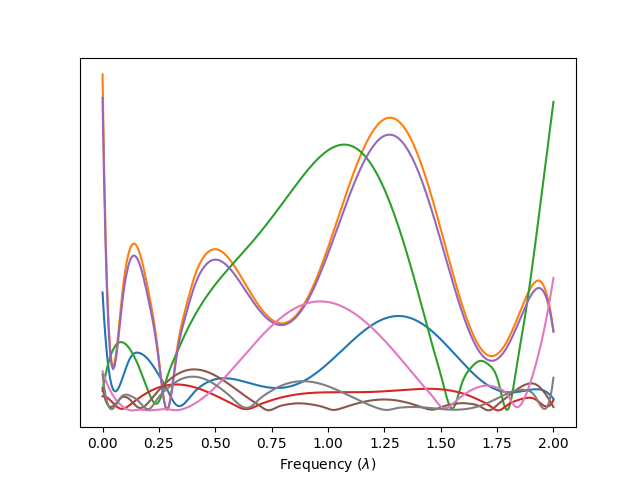}
  } 
  \subfigure[\tiny  $K=8$.]{ 
  \centering
    \includegraphics[width=0.16\linewidth,height=0.16\linewidth]{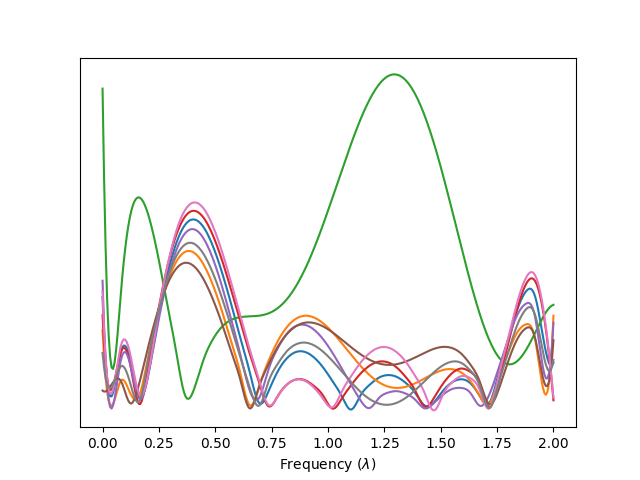}
  } 
  \subfigure[\tiny  $K=8$.]{ 
  \centering
    \includegraphics[width=0.16\linewidth,height=0.16\linewidth]{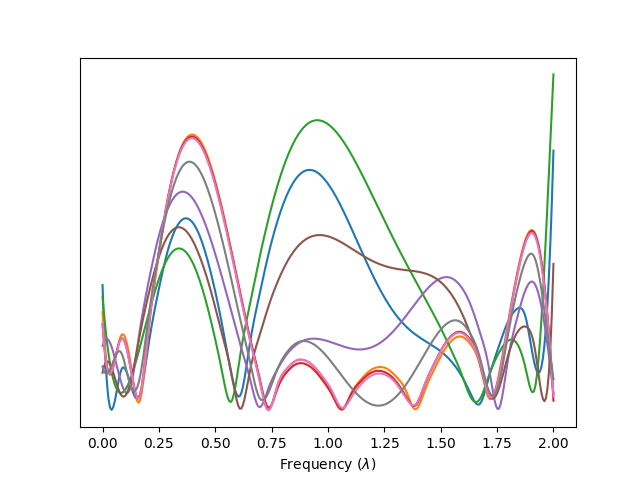}
  }
  \subfigure[\tiny  $K=8$.]{ 
  \centering
    \includegraphics[width=0.16\linewidth,height=0.16\linewidth]{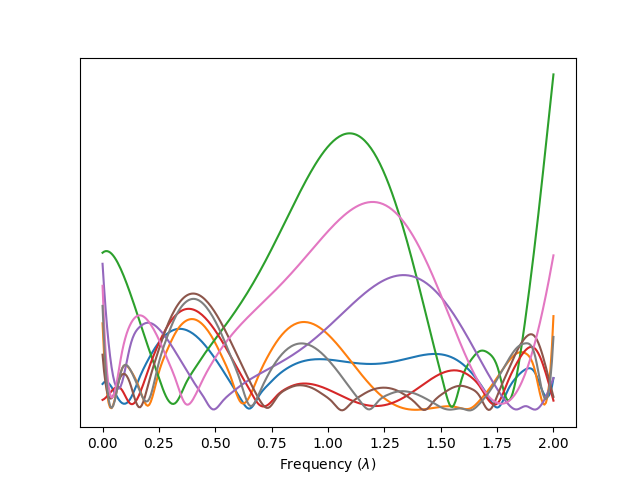}
  } 
 \caption{Filter Frequency response of FeTA on individual graphs. Graph (a) is from MUTAG and, (b) $\sim$ (e) are from the NCI1 dataset and Figures (f) $\sim$ (j) are the corresponding frequency responses. X axis shows the normalized frequency with magnitudes on the Y axis.}
  \label{fig:ind_freq_res_mutag_nci1}
\end{figure*}
In this section we analyze the frequency response of the filters learned on graphs of some of the datasets. Further plots and analysis on other datasets can be found in the appendix (cf. \ref{filter_freq_res_apndx}).
In Figure \ref{fig:ind_freq_res_mutag_nci1}, we see that for sparse graphs (graph (a) - (c)) the filter response has a prominent magnitude for the lower components of the spectrum along with some components in the middle regions of the spectrum. This could indicate that for smaller graphs (a) the immediate neighborhood signal benefits the task. For graphs with larger diameter (b,c) we may interpret that along with neighbor signals aggregating information from the nodes farther in the chain is important as the model learns to focus on frequencies in the middle region. 
On the other hand, for denser graphs, we see a relatively less prominent low frequency response and many heads learning to focus on the higher frequency components of the spectrum. In a sense, this enables aggregation of nodes that are distant in the graph(eg. in different clusters) and helps in providing \textit{interpretation} as to which nodes interact in the \textit{spectral domain}. Note that existing \textit{graph-specific} attention mechanisms, such as GAT, learn only low pass filters \citep{muhammet2020spectral} and cannot perform such an aggregation. 
Our model aims to learn filters having different frequency responses specific to the data distribution per graph and they are tuned to be beneficial to the downstream task.
These observations justify our rationale to propose \textit{graph-specific} filters. 

Additional experiments are in the appendix. Section \ref{addnl_exp} contains experiments with another recently proposed PE \citep{lspe}, section \ref{synthetic_dataset} has experiments on a synthetic dataset further illustrating impact of dynamic filtering vs static filtering, and section \ref{sec:inter} explains interpretability of FeTA.

\section{Future Directions and Conclusion}\label{conclusion}
This work primirily focuses on the understanding of transformers from the spectral perspective and
establishes the theoretical evidence on its limitations. Hence, our work opens up a new direction to study transformers more rigorously in spectral domain unveiling several new insights. Our proposed FeTA framework effectively learns multiple filters per attention head to capture heterogeneous information spread over a wider frequency domain. Experiments on standard datasets of various tasks suggest a clear empirical edge on vanilla transformers. Results in Table \ref{t:01} provide a critical finding that GAT, when induced with the ability to attend to the entire graph spectrum, shows significant performance jump across tasks. Additionally, our work provides a conclusive statement on the limitations of position encodings. \\
\textbf{What's Next?:} We leave readers with the following open research directions:
1) How can the ability to learn topological information be inherently induced in FeTA? 2) Is it possible to develop a universal position encoding that provides a consistent performance agnostic of dataset? As our work provides findings on limitations of the existing PEs and paves way for universal PEs for transformer which is a future direction. Considering graphs have also been viewed in literature from signal processing side, hence, the need for signal processing(filtering, noise removal etc.) is orthogonal to the need for position encodings (inducing relative node positions). 
3) Is it possible to learn the desired filter response with better precision using an iterative algorithm?
4) Can the method be scaled to large graphs by making the computational complexity (sub)linear?
5) Can the self-attention of the transformer be altered to fix the transformer's limitation in expressing arbitrary spectral filters?

\bibliography{example_paper}

\begin{thebibliography}{61}
\providecommand{\natexlab}[1]{#1}
\providecommand{\url}[1]{\texttt{#1}}
\expandafter\ifx\csname urlstyle\endcsname\relax
  \providecommand{\doi}[1]{doi: #1}\else
  \providecommand{\doi}{doi: \begingroup \urlstyle{rm}\Url}\fi

\bibitem[Alon \& Yahav(2021)Alon and Yahav]{DBLP:conf/iclr/0002Y21}
Alon, U. and Yahav, E.
\newblock On the bottleneck of graph neural networks and its practical
  implications.
\newblock In \emph{9th International Conference on Learning Representations,
  {ICLR} 2021, Virtual Event, Austria, May 3-7, 2021}. OpenReview.net, 2021.

\bibitem[Balcilar et~al.(2020{\natexlab{a}})Balcilar, Renton, H{\'e}roux,
  Ga{\"u}z{\`e}re, Adam, and Honeine]{balcilar2020analyzing}
Balcilar, M., Renton, G., H{\'e}roux, P., Ga{\"u}z{\`e}re, B., Adam, S., and
  Honeine, P.
\newblock Analyzing the expressive power of graph neural networks in a spectral
  perspective.
\newblock In \emph{International Conference on Learning Representations},
  2020{\natexlab{a}}.

\bibitem[Balcilar et~al.(2020{\natexlab{b}})Balcilar, Renton, H{\'{e}}roux,
  Ga{\"{u}}z{\`{e}}re, Adam, and Honeine]{balcilar_bridging}
Balcilar, M., Renton, G., H{\'{e}}roux, P., Ga{\"{u}}z{\`{e}}re, B., Adam, S.,
  and Honeine, P.
\newblock Bridging the gap between spectral and spatial domains in graph neural
  networks.
\newblock \emph{CoRR}, abs/2003.11702, 2020{\natexlab{b}}.

\bibitem[Barron(1991)]{Barron1991ApproximationAE}
Barron, A.
\newblock Approximation and estimation bounds for artificial neural networks.
\newblock \emph{Machine Learning}, 14:\penalty0 115--133, 1991.

\bibitem[Beaini et~al.(2021)Beaini, Passaro, Létourneau, Hamilton, Corso, and
  Liò]{beaini_directional_2021}
Beaini, D., Passaro, S., Létourneau, V., Hamilton, W.~L., Corso, G., and Liò,
  P.
\newblock Directional graph networks.
\newblock \emph{ICML2021}, 2021.

\bibitem[Bianchi et~al.(2021)Bianchi, Grattarola, Livi, and
  Alippi]{bianchi2021graph}
Bianchi, F.~M., Grattarola, D., Livi, L., and Alippi, C.
\newblock Graph neural networks with convolutional arma filters.
\newblock \emph{IEEE Transactions on Pattern Analysis and Machine
  Intelligence}, 2021.

\bibitem[Blackledge(2005)]{conv_theorem}
Blackledge, J.~M.
\newblock Chapter 2 - 2d fourier theory.
\newblock In \emph{Digital Image Processing}, Woodhead Publishing Series in
  Electronic and Optical Materials, pp.\  30--49. Woodhead Publishing, 2005.

\bibitem[Bresson \& Laurent(2017)Bresson and Laurent]{bresson2017gatedGCN}
Bresson, X. and Laurent, T.
\newblock Residual gated graph convnets.
\newblock \emph{arXiv preprint arXiv:1711.07553}, 2017.

\bibitem[Bruna et~al.(2013)Bruna, Zaremba, Szlam, and LeCun]{bruna2013}
Bruna, J., Zaremba, W., Szlam, A., and LeCun, Y.
\newblock Spectral networks and locally connected networks on graphs, 2013.
\newblock URL \url{https://arxiv.org/abs/1312.6203}.

\bibitem[Chen et~al.(2020{\natexlab{a}})Chen, Jacob, and
  Mairal]{chen2020convolutional}
Chen, D., Jacob, L., and Mairal, J.
\newblock Convolutional kernel networks for graph-structured data.
\newblock In \emph{International Conference on Machine Learning}, pp.\
  1576--1586. PMLR, 2020{\natexlab{a}}.

\bibitem[Chen et~al.(2020{\natexlab{b}})Chen, Wang, Wang, and
  Kuo]{chen2020graph}
Chen, F., Wang, Y.-C., Wang, B., and Kuo, C.-C.~J.
\newblock Graph representation learning: A survey.
\newblock \emph{APSIPA Transactions on Signal and Information Processing}, 9,
  2020{\natexlab{b}}.

\bibitem[Choromanski et~al.(2021)Choromanski, Lin, Chen, and
  Parker-Holder]{gkat_choromanski2021graph}
Choromanski, K., Lin, H., Chen, H., and Parker-Holder, J.
\newblock Graph kernel attention transformers, 2021.

\bibitem[Chung et~al.(1997)Chung, Graham, on~Recent Advances~in Spectral
  Graph~Theory, (U.S.), Society, and of~the
  Mathematical~Sciences]{chung1997spectral}
Chung, F., Graham, F., on~Recent Advances~in Spectral Graph~Theory, C.~C.,
  (U.S.), N. S.~F., Society, A.~M., and of~the Mathematical~Sciences, C.~B.
\newblock \emph{Spectral Graph Theory}.
\newblock CBMS Regional Conference Series. Conference Board of the mathematical
  sciences, 1997.

\bibitem[Corso et~al.(2020)Corso, Cavalleri, Beaini, Li{\`o}, and
  Veli{\v{c}}kovi{\'c}]{corso2020principal}
Corso, G., Cavalleri, L., Beaini, D., Li{\`o}, P., and Veli{\v{c}}kovi{\'c}, P.
\newblock Principal neighbourhood aggregation for graph nets.
\newblock \emph{Advances in Neural Information Processing Systems}, 33, 2020.

\bibitem[Defferrard et~al.(2016)Defferrard, Bresson, and
  Vandergheynst]{defferrard2016convolutional}
Defferrard, M., Bresson, X., and Vandergheynst, P.
\newblock Convolutional neural networks on graphs with fast localized spectral
  filtering.
\newblock \emph{Advances in neural information processing systems},
  29:\penalty0 3844--3852, 2016.

\bibitem[Dwivedi \& Bresson(2020)Dwivedi and
  Bresson]{dwivedi2020generalization}
Dwivedi, V.~P. and Bresson, X.
\newblock A generalization of transformer networks to graphs, 2020.

\bibitem[Dwivedi et~al.(2020)Dwivedi, Joshi, Laurent, Bengio, and
  Bresson]{dwivedi2020benchmarking}
Dwivedi, V.~P., Joshi, C.~K., Laurent, T., Bengio, Y., and Bresson, X.
\newblock Benchmarking graph neural networks.
\newblock \emph{arXiv preprint arXiv:2003.00982}, 2020.

\bibitem[Dwivedi et~al.(2022)Dwivedi, Luu, Laurent, Bengio, and Bresson]{lspe}
Dwivedi, V.~P., Luu, A.~T., Laurent, T., Bengio, Y., and Bresson, X.
\newblock Graph neural networks with learnable structural and positional
  representations.
\newblock \emph{ICLR}, 2022.

\bibitem[Gao et~al.(2021)Gao, Dai, Li, Zou, Xiong, and
  Frossard]{gao2021message}
Gao, X., Dai, W., Li, C., Zou, J., Xiong, H., and Frossard, P.
\newblock Message passing in graph convolution networks via adaptive filter
  banks.
\newblock \emph{arXiv preprint arXiv:2106.09910}, 2021.

\bibitem[Gass \& Fu(2013)Gass and Fu]{KKTref1}
Gass, S.~I. and Fu, M.~C. (eds.).
\newblock \emph{Karush-Kuhn-Tucker (KKT) Conditions}, pp.\  833--834.
\newblock Springer US, Boston, MA, 2013.
\newblock ISBN 978-1-4419-1153-7.

\bibitem[Gerschgorin(1931)]{gerschgorin_circle_theorem}
Gerschgorin, S.
\newblock {}.
\newblock \emph{Uber die Abgrenzung der Eigenwerte einer Matrix, Izv. Akad.
  Nauk. USSR Otd. Fiz.-Mat. Nauk (in German)}, 6:\penalty0 749--754, 1931.

\bibitem[Gilmer et~al.(2017)Gilmer, Schoenholz, Riley, Vinyals, and
  Dahl]{gilmer2017mpnn}
Gilmer, J., Schoenholz, S.~S., Riley, P.~F., Vinyals, O., and Dahl, G.~E.
\newblock Neural message passing for quantum chemistry.
\newblock In \emph{Proceedings of the 34th International Conference on Machine
  Learning-Volume 70}, pp.\  1263--1272. JMLR. org, 2017.

\bibitem[Gordon \& Tibshirani(2012)Gordon and Tibshirani]{gordon2012karush}
Gordon, G. and Tibshirani, R.
\newblock Karush-kuhn-tucker conditions.
\newblock \emph{Optimization}, 10\penalty0 (725/36):\penalty0 725, 2012.

\bibitem[Hamilton et~al.(2017)Hamilton, Ying, and
  Leskovec]{hamilton2017inductive}
Hamilton, W., Ying, Z., and Leskovec, J.
\newblock Inductive representation learning on large graphs.
\newblock In \emph{Advances in neural information processing systems}, pp.\
  1024--1034, 2017.

\bibitem[Hammond et~al.(2011)Hammond, Vandergheynst, and
  Gribonval]{hammond2011wavelets}
Hammond, D.~K., Vandergheynst, P., and Gribonval, R.
\newblock Wavelets on graphs via spectral graph theory.
\newblock \emph{Applied and Computational Harmonic Analysis}, 30\penalty0
  (2):\penalty0 129--150, 2011.

\bibitem[He et~al.(2021)He, Wei, Huang, and Xu]{he2021bernnet}
He, M., Wei, Z., Huang, Z., and Xu, H.
\newblock Bernnet: Learning arbitrary graph spectral filters via bernstein
  approximation.
\newblock In \emph{Advances in Neural Information Processing Systems}, 2021.

\bibitem[Horn \& Johnson(1990)Horn and Johnson]{horn1990norms}
Horn, R.~A. and Johnson, C.~R.
\newblock Norms for vectors and matrices.
\newblock \emph{Matrix analysis}, pp.\  313--386, 1990.

\bibitem[Hu et~al.(2020)Hu, Fey, Zitnik, Dong, Ren, Liu, Catasta, and
  Leskovec]{DBLP:conf/nips/HuFZDRLCL20}
Hu, W., Fey, M., Zitnik, M., Dong, Y., Ren, H., Liu, B., Catasta, M., and
  Leskovec, J.
\newblock Open graph benchmark: Datasets for machine learning on graphs.
\newblock In \emph{In Advances in Neural Information Processing Systems}, 2020.

\bibitem[Isufi et~al.(2016)Isufi, Loukas, Simonetto, and
  Leus]{isufi2016autoregressive}
Isufi, E., Loukas, A., Simonetto, A., and Leus, G.
\newblock Autoregressive moving average graph filtering.
\newblock \emph{IEEE Transactions on Signal Processing}, 65\penalty0
  (2):\penalty0 274--288, 2016.

\bibitem[Kipf \& Welling(2017)Kipf and Welling]{kipf2016gcn}
Kipf, T.~N. and Welling, M.
\newblock Semi-supervised classification with graph convolutional networks.
\newblock In \emph{5th International Conference on Learning Representations,
  {ICLR} 2017, Toulon, France, April 24-26, 2017, Conference Track
  Proceedings}. OpenReview.net, 2017.

\bibitem[Kreuzer et~al.(2021)Kreuzer, Beaini, Hamilton, L{\'{e}}tourneau, and
  Tossou]{san2021}
Kreuzer, D., Beaini, D., Hamilton, W.~L., L{\'{e}}tourneau, V., and Tossou, P.
\newblock Rethinking graph transformers with spectral attention.
\newblock \emph{NeurlPS 2021}, abs/2106.03893, 2021.

\bibitem[Levie et~al.(2018)Levie, Monti, Bresson, and
  Bronstein]{levie2018cayleynets}
Levie, R., Monti, F., Bresson, X., and Bronstein, M.~M.
\newblock Cayleynets: Graph convolutional neural networks with complex rational
  spectral filters.
\newblock \emph{IEEE Transactions on Signal Processing}, 67\penalty0
  (1):\penalty0 97--109, 2018.

\bibitem[Li et~al.(2018)Li, Liang, Hu, Chen, and Xing]{li2018graph}
Li, Y., Liang, X., Hu, Z., Chen, Y., and Xing, E.~P.
\newblock Graph transformer.
\newblock 2018.

\bibitem[Mialon et~al.(2021)Mialon, Chen, Selosse, and Mairal]{graphit2021}
Mialon, G., Chen, D., Selosse, M., and Mairal, J.
\newblock Graphit: Encoding graph structure in transformers.
\newblock \emph{CoRR}, abs/2106.05667, 2021.

\bibitem[Morris et~al.(2019)Morris, Ritzert, Fey, Hamilton, Lenssen, Rattan,
  and Grohe]{morris2019weisfeiler}
Morris, C., Ritzert, M., Fey, M., Hamilton, W.~L., Lenssen, J.~E., Rattan, G.,
  and Grohe, M.
\newblock Weisfeiler and leman go neural: Higher-order graph neural networks.
\newblock In \emph{Proceedings of the AAAI Conference on Artificial
  Intelligence}, volume~33, pp.\  4602--4609, 2019.

\bibitem[Morris et~al.(2020)Morris, Kriege, Bause, Kersting, Mutzel, and
  Neumann]{Morris+2020}
Morris, C., Kriege, N.~M., Bause, F., Kersting, K., Mutzel, P., and Neumann, M.
\newblock Tudataset: A collection of benchmark datasets for learning with
  graphs.
\newblock In \emph{ICML 2020 Workshop on Graph Representation Learning and
  Beyond (GRL+ 2020)}, 2020.

\bibitem[Muhammet et~al.(2020)Muhammet, Guillaume, Pierre, Benoit,
  S{\'e}bastien, and Honeine]{muhammet2020spectral}
Muhammet, B., Guillaume, R., Pierre, H., Benoit, G., S{\'e}bastien, A., and
  Honeine, P.
\newblock When spectral domain meets spatial domain in graph neural networks.
\newblock In \emph{Thirty-seventh International Conference on Machine Learning
  (ICML 2020)-Workshop on Graph Representation Learning and Beyond (GRL+
  2020)}, 2020.

\bibitem[Nt \& Maehara(2019)Nt and Maehara]{nt2019revisiting}
Nt, H. and Maehara, T.
\newblock Revisiting graph neural networks: All we have is low-pass filters.
\newblock \emph{arXiv preprint arXiv:1905.09550}, 2019.

\bibitem[Ortega et~al.(2018)Ortega, Frossard, Kova{\v{c}}evi{\'c}, Moura, and
  Vandergheynst]{ortega2018graph}
Ortega, A., Frossard, P., Kova{\v{c}}evi{\'c}, J., Moura, J.~M., and
  Vandergheynst, P.
\newblock Graph signal processing: Overview, challenges, and applications.
\newblock \emph{Proceedings of the IEEE}, 106\penalty0 (5):\penalty0 808--828,
  2018.

\bibitem[Park \& Kim(2022)Park and Kim]{park2022vision}
Park, N. and Kim, S.
\newblock How do vision transformers work?
\newblock \emph{arXiv preprint arXiv:2202.06709}, 2022.

\bibitem[Peano(1913)]{peano1913resto}
Peano, G.
\newblock \emph{Resto nelle formule di quadratura espresso con un integrale
  definito}.
\newblock Tipografia della R. Accademia dei Lincei, 1913.

\bibitem[Scarselli et~al.(2008)Scarselli, Gori, Tsoi, Hagenbuchner, and
  Monfardini]{scarselli2008graph}
Scarselli, F., Gori, M., Tsoi, A.~C., Hagenbuchner, M., and Monfardini, G.
\newblock The graph neural network model.
\newblock \emph{IEEE transactions on neural networks}, 20\penalty0
  (1):\penalty0 61--80, 2008.

\bibitem[Schur(1911)]{schur_product_theorem}
Schur, J.
\newblock Bemerkungen zur theorie der beschränkten bilinearformen mit
  unendlich vielen veränderlichen.
\newblock \emph{Journal für die reine und angewandte Mathematik (Crelles
  Journal)}, 1911\penalty0 (140):\penalty0 1--28, July 1911.
\newblock ISSN 0075-4102, 1435-5345.

\bibitem[Shi et~al.(2020)Shi, Huang, Wang, Zhong, Feng, and Sun]{shi2020masked}
Shi, Y., Huang, Z., Wang, W., Zhong, H., Feng, S., and Sun, Y.
\newblock Masked label prediction: Unified message passing model for
  semi-supervised classification.
\newblock \emph{arXiv preprint arXiv:2009.03509}, 2020.

\bibitem[Shuman et~al.(2013)Shuman, Narang, Frossard, Ortega, and
  Vandergheynst]{shuman2013emerging}
Shuman, D.~I., Narang, S.~K., Frossard, P., Ortega, A., and Vandergheynst, P.
\newblock The emerging field of signal processing on graphs: Extending
  high-dimensional data analysis to networks and other irregular domains.
\newblock \emph{IEEE signal processing magazine}, 30\penalty0 (3):\penalty0
  83--98, 2013.

\bibitem[Smith et~al.(1997)]{smith1997scientist}
Smith, S.~W. et~al.
\newblock The scientist and engineer's guide to digital signal processing.
\newblock 1997.

\bibitem[Thekumparampil et~al.(2018)Thekumparampil, Wang, Oh, and
  Li]{thekumparampil2018attention}
Thekumparampil, K.~K., Wang, C., Oh, S., and Li, L.-J.
\newblock Attention-based graph neural network for semi-supervised learning.
\newblock \emph{arXiv preprint arXiv:1803.03735}, 2018.

\bibitem[Vaswani et~al.(2017)Vaswani, Shazeer, Parmar, Uszkoreit, Jones, Gomez,
  Kaiser, and Polosukhin]{vaswani_2017_attention}
Vaswani, A., Shazeer, N., Parmar, N., Uszkoreit, J., Jones, L., Gomez, A.~N.,
  Kaiser, {\L}., and Polosukhin, I.
\newblock Attention is all you need.
\newblock In \emph{Advances in neural information processing systems}, pp.\
  5998--6008, 2017.

\bibitem[Veli{\v{c}}kovi{\'c} et~al.(2018)Veli{\v{c}}kovi{\'c}, Cucurull,
  Casanova, Romero, Li{\`o}, and Bengio]{velivckovic2018graph}
Veli{\v{c}}kovi{\'c}, P., Cucurull, G., Casanova, A., Romero, A., Li{\`o}, P.,
  and Bengio, Y.
\newblock Graph attention networks.
\newblock In \emph{International Conference on Learning Representations}, 2018.

\bibitem[Wu et~al.(2019)Wu, Souza, Zhang, Fifty, Yu, and
  Weinberger]{wu2019simplifying}
Wu, F., Souza, A., Zhang, T., Fifty, C., Yu, T., and Weinberger, K.
\newblock Simplifying graph convolutional networks.
\newblock In \emph{International conference on machine learning}, pp.\
  6861--6871. PMLR, 2019.

\bibitem[Wu(2007)]{wu2007karush}
Wu, H.-C.
\newblock The karush--kuhn--tucker optimality conditions in an optimization
  problem with interval-valued objective function.
\newblock \emph{European Journal of operational research}, 176\penalty0
  (1):\penalty0 46--59, 2007.

\bibitem[Xu et~al.(2018)Xu, Hu, Leskovec, and Jegelka]{xu2018powerful}
Xu, K., Hu, W., Leskovec, J., and Jegelka, S.
\newblock How powerful are graph neural networks?
\newblock \emph{arXiv preprint arXiv:1810.00826}, 2018.

\bibitem[Xu et~al.(2019{\natexlab{a}})Xu, Hu, Leskovec, and
  Jegelka]{DBLP:conf/iclr/XuHLJ19}
Xu, K., Hu, W., Leskovec, J., and Jegelka, S.
\newblock How powerful are graph neural networks?
\newblock In \emph{7th International Conference on Learning Representations,
  {ICLR} 2019, New Orleans, LA, USA, May 6-9, 2019}. OpenReview.net,
  2019{\natexlab{a}}.

\bibitem[Xu et~al.(2019{\natexlab{b}})Xu, Hu, Leskovec, and Jegelka]{xu2019gin}
Xu, K., Hu, W., Leskovec, J., and Jegelka, S.
\newblock How powerful are graph neural networks?, 2019{\natexlab{b}}.

\bibitem[Ying et~al.(2021)Ying, Cai, Luo, Zheng, Ke, He, Shen, and
  Liu]{ying2021transformers}
Ying, C., Cai, T., Luo, S., Zheng, S., Ke, G., He, D., Shen, Y., and Liu, T.-Y.
\newblock Do transformers really perform bad for graph representation?
\newblock \emph{NeurlPS 2021}, arXiv preprint arXiv:2106.05234, 2021.

\bibitem[Yun et~al.(2019)Yun, Bhojanapalli, Rawat, Reddi, and
  Kumar]{yun2019transformers}
Yun, C., Bhojanapalli, S., Rawat, A.~S., Reddi, S., and Kumar, S.
\newblock Are transformers universal approximators of sequence-to-sequence
  functions?
\newblock In \emph{International Conference on Learning Representations}, 2019.

\bibitem[Yun et~al.(2020)Yun, Chang, Bhojanapalli, Rawat, Reddi, and
  Kumar]{yun2020n}
Yun, C., Chang, Y.-W., Bhojanapalli, S., Rawat, A.~S., Reddi, S.~J., and Kumar,
  S.
\newblock O (n) connections are expressive enough: Universal approximability of
  sparse transformers.
\newblock In \emph{NeurIPS}, 2020.

\bibitem[Zhang \& Meng(2019)Zhang and Meng]{DBLP:journals/corr/abs-1909-05729}
Zhang, J. and Meng, L.
\newblock Gresnet: Graph residual network for reviving deep gnns from suspended
  animation.
\newblock \emph{CoRR}, abs/1909.05729, 2019.

\bibitem[Zhang et~al.(2020)Zhang, Zhang, Xia, and Sun]{zhang2020graph}
Zhang, J., Zhang, H., Xia, C., and Sun, L.
\newblock Graph-bert: Only attention is needed for learning graph
  representations.
\newblock \emph{arXiv preprint arXiv:2001.05140}, 2020.

\bibitem[Zhao \& Akoglu(2020)Zhao and Akoglu]{DBLP:conf/iclr/ZhaoA20}
Zhao, L. and Akoglu, L.
\newblock Pairnorm: Tackling oversmoothing in gnns.
\newblock In \emph{8th International Conference on Learning Representations,
  {ICLR} 2020, Addis Ababa, Ethiopia, April 26-30, 2020}. OpenReview.net, 2020.

\bibitem[Zhou et~al.(2020)Zhou, Cui, Hu, Zhang, Yang, Liu, Wang, Li, and
  Sun]{zhou2020graph}
Zhou, J., Cui, G., Hu, S., Zhang, Z., Yang, C., Liu, Z., Wang, L., Li, C., and
  Sun, M.
\newblock Graph neural networks: A review of methods and applications.
\newblock \emph{AI Open}, 1:\penalty0 57--81, 2020.

\end{thebibliography}
\bibliographystyle{icml2021}

\appendix

\section{Appendix} \label{sec:appendix}

\subsection{Theoretical Motivations and Justification of the approach} \label{thrtcl_jst}

In the following result we intend to draw an equivalence between transformers and spatial GNNs as defined in \cite{balcilar_bridging}. Taking inspiration from previous works \cite{balcilar_bridging, balcilar2020analyzing} we define the convolution support as the matrix $C^{s}$ that performs the desired transformation (such as convolution operation or filtering in the spectral space). Consider $U$ as the matrix whose columns are the eigenvectors of the graph laplacian and F is the diagonal impulse response of the filter. Let G be the convolution matrix that performs the desired filtering. Then we have for input $X$ (which could be transformed using a projection matrix $W$ as $X \xrightarrow[]{} WX$),
\begin{align*}
    X \ast G &= G \ast X \\
    &= \hat{G} X \\
    &= UFU^T X \\
    &= C^{s} X \\
\end{align*}
where $C^s = UFU^T$ in this case of spectral filters.

\begin{lemma}\label{lemma_equiv_spatial_trans_apndx}
The attention mechanism of the transformer defined as below,
\[Attention^h(Q,K,V) = softmax(\frac{QK^T}{\sqrt{d_{out}}}) V\]
is a special case of the spatial ConvGNN defined as 
\[H^{l+1} = \sigma(\sum C^{s} H^{l} W^{(l,s)})\]
with the convolution kernel set to the transformer attention
where $W^{(l,s)}$ is the trainable matrix for the $l$-th layer’s $s$-th convolution
kernel and $S$ is the desired number of convolution kernels.
\end{lemma}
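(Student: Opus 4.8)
The plan is to exhibit an explicit choice of the ConvGNN hyper-parameters (number of kernels $S$, convolution supports $C^{s}$, trainable matrices $W^{(l,s)}$, and activation $\sigma$) under which the general update $H^{l+1} = \sigma(\sum_s C^{s} H^{l} W^{(l,s)})$ collapses exactly to a single transformer attention head $\mathrm{Attention}^h(Q,K,V) = \mathrm{softmax}(QK^T/\sqrt{d_{out}})V$. First I would take $S=1$, let $\sigma$ be the identity, and identify the layer input $H^{l}$ with the node-feature matrix $X$ (so that $Q^T = W_Q^h X^T$, $K^T = W_K^h X^T$, $V^T = W_V^h X^T$). Then I make the two key identifications: set the convolution support to the attention weight matrix, $C^{1} := \mathrm{softmax}(QK^T/\sqrt{d_{out}})$, and set the trainable matrix to the value projection, $W^{(l,1)} := (W_V^h)^T$. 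Substituting these into the ConvGNN template immediately yields $H^{l+1} = C^{1} X (W_V^h)^T = \mathrm{softmax}(QK^T/\sqrt{d_{out}})\,V$, which is precisely the attention output, so the algebraic part of the argument is a one-line verification.

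Next I would justify that $C^{1}$ is a legitimate convolution support in the sense of \cite{balcilar_bridging}: the ConvGNN formalism does not require the supports to be fixed functions of the graph topology alone; it explicitly admits \emph{data-dependent} supports, exactly as in GAT, whose attention coefficients are themselves functions of the current node embeddings. Since $\mathrm{softmax}(QK^T/\sqrt{d_{out}})$ is an $n\times n$ row-stochastic matrix assembled from the node features, it lies in the same admissible class; I would spell out this parallel with GAT and note (as the surrounding text does) that the resulting support is non-static, which is exactly the property that lets the transformer realise input-adaptive kernels rather than a fixed one. The softmax normalisation and the $1/\sqrt{d_{out}}$ scaling are internal to the construction of $C^{s}$ and therefore present no obstruction.

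For multi-head attention I would take $S$ equal to the number of heads, set $C^{s} = \mathrm{softmax}(Q^{s}(K^{s})^T/\sqrt{d_{out}})$ and $W^{(l,s)} = (W_V^{s})^T$ for each head $s$, so that $\sum_s C^{s} H^{l} W^{(l,s)}$ reproduces the sum of per-head attention outputs; the concatenation-style multi-head output is then recovered by the standard block-diagonal reparametrisation of the $W^{(l,s)}$ followed by the output projection, which can be folded into the next layer's weights (or into $\sigma$).

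I expect the main obstacle to be conceptual rather than computational. One must argue carefully that permitting $C^{s}$ to depend on the input features still lies within the ConvGNN template of \cite{balcilar_bridging}, so that the claimed inclusion is neither vacuous nor a category error; and one must note that the additional operations a full transformer layer performs — residual connections, layer normalisation, the feed-forward sub-block — are either outside the scope of the attention map being modelled here or can be absorbed into $\sigma$ and the subsequent layer's parameters. Once these modelling choices are made explicit, the substitution step closing the proof is immediate.
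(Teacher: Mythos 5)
Your proposal is correct and follows essentially the same route as the paper's proof: identify the convolution support $C^{s}$ with the row-stochastic attention matrix $\mathrm{softmax}(QK^T/\sqrt{d_{out}})$ and the trainable matrix $W^{(l,s)}$ with the (transposed) value projection, then observe that data-dependent supports are admissible in the ConvGNN template of \cite{balcilar_bridging} by the same token as GAT. The only cosmetic difference is in the multi-head bookkeeping — you absorb the concatenation and output projection into a block-diagonal reparametrisation of the $W^{(l,s)}$, whereas the paper folds them into $\sigma$ — but both are standard and equivalent.
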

\begin{proof}
Consider the vanilla transformer
\citep{vaswani_2017_attention} that learns the pairwise similarity between the graph nodes using its attention mechanism as follows:
\begin{equation}\label{attn_gt_estlemma}
    AttentionWeights^h(Q,K) = softmax(\frac{QK^T}{\sqrt{d_{out}}})
\end{equation}
Here, $Q^T=W_Q^h X^T$ and $K^T=W_K^h X^T$ where $W_Q^h, W_K^h, W_V^h \in R^{d_{out} \times d_{in}}$ are the projection matrices for the query, key and Value respectively for the head $h$. 
The output at the head $h$ can be obtained from:
\begin{align*}\label{attn_out_h_estlemma}
    Attention^h(Q,K,V) &= softmax(\frac{QK^T}{\sqrt{d_{out}}}) V \\
    &= softmax(\frac{QK^T}{\sqrt{d_{out}}}) (X W_V^{h^T}) \\
    &= C^h X W_V^{h^T}
\end{align*}
Thus from the above equation each head can be viewed as a convolution kernel acting on the input graph with the support defined as below
\begin{equation}
    C = softmax(\frac{QK^T}{\sqrt{d_{out}}})
\end{equation}
This is the spatial ConvGNN defined in \cite{balcilar_bridging} as
\[H^{l+1} = \sigma(\sum C^{s} H^{l} W^{(l,s)})\]
with the number of convolution supports per head as 1 and $\sigma$ as the concatenation followed by the projection layer of the multi headed attention module.
\end{proof}
The lemma \ref{lemma_equiv_spatial_trans_apndx} brings the transformer attention into the space of spatial ConvGNN. \cite{balcilar_bridging} also shows that the spectral GNNs are a special case of spatial ConvGNN. 
We can now study the relation between the transformer (attention) and spectral GNNs and see if there exists an equivalence between them.

We now attempt to characterize an equivalence between the transformer attention and spectral GNNs. To do so we begin by defining a measure of error between matrices in the two spaces. Consider $C_t$ to be the space of all transformer attention maps possible. This is the set of $n \times n$ matrices , with $n$ being the number of nodes in the graph, such that the rows form a probability distribution i.e. each value is greater than or equal to 0 and the rows sum to 1 and is the row wise softmax of the gram matrix. Thus $C_t$ is a subset of the space of all stochastic matrices of size $n$. Consider $F(\Lambda)$ or $F$ to be the diagonal matrix containing the desired frequency response of the spectral GNN. $\Lambda = diag(\lambda_1, \lambda_2, \dots \lambda_n)$ is the diagonal matrix containing the desired filter response and we let $[\lambda_1, \lambda_2, \dots \lambda_n] \in R^n$ i.e. we consider the entire vector space. We take $U \in R^{n \times n}$ to be the eigen vectors of the laplacian of the graph. Thus the convolutional support of the desired spectral GNN is $C_g = UFU^T$. We define the error($E$) between an arbitrary attention map $C_t$ and filter $F$, for a given graph, as the frobenius norm of the difference between $C_t$ and $C_g$ as below
\begin{equation}\label{error_eq_apndx}
    E(C_t,C_g) = \norm{C_t - C_g}_F = \norm{C_t - UFU^T}_F
\end{equation}

Next we define and prove a few lemmas that give some auxillary results some of which would be helpful in the following proofs. The next result is for the case when the attention kernel is positive semi definite with $Q=K$ as defined in \cite{graphit2021}.
\begin{lemma}\label{lemma_E_wrt_F_cnvx_apndx}
The function $E$ as defined in \ref{error_eq_apndx} over $F$ with $C_t$ fixed is a convex function.
\end{lemma}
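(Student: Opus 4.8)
The plan is to recognize $E(C_t,C_g)$, regarded as a function of $F$ with $C_t$ (and $U$) held fixed, as the composition of a norm with an affine map, and then invoke the elementary fact that such a composition is convex.

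First I would pin down the domain: $F=\mathrm{diag}(\lambda_1,\dots,\lambda_n)$ ranges over the space of $n\times n$ diagonal matrices, which we identify with $\mathbb{R}^n$ through the diagonal entries — a convex set (in fact all of $\mathbb{R}^n$), so convexity of the objective is well posed. Next, I would observe that $F\mapsto UFU^{T}$ is \emph{linear} in $F$: every entry of $UFU^{T}$ is a fixed linear combination of $\lambda_1,\dots,\lambda_n$ whose coefficients are determined by the fixed eigenvector matrix $U$. Consequently $g(F):=C_t-UFU^{T}$ is affine in $F$, since $C_t$ is a constant matrix.

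The second step is to write $E(C_t,C_g)=\norm{g(F)}_F=\bigl(\norm{\,\cdot\,}_F\circ g\bigr)(F)$ and note that the Frobenius norm is a genuine norm on the vector space of $n\times n$ matrices, hence convex; precomposition of a convex function with an affine map preserves convexity, so $E$ is convex in $F$. If a self-contained argument is preferred, the same conclusion drops out of the triangle inequality and positive homogeneity of $\norm{\,\cdot\,}_F$ directly: for $F_1,F_2$ and $t\in[0,1]$ one uses $C_t-U\bigl(tF_1+(1-t)F_2\bigr)U^{T}=t\,(C_t-UF_1U^{T})+(1-t)\,(C_t-UF_2U^{T})$ and bounds the norm of the sum by the sum of the norms, then collects terms.

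There is essentially no real obstacle here; the only points deserving a moment of care are making explicit that $F\mapsto UFU^{T}$ is linear (so that subtracting the constant $C_t$ yields an affine, not merely continuous, map) and recording that the parameter domain is convex. The positive-semidefinite / $Q=K$ hypothesis stated just before the lemma is not used in this particular argument — $C_t$ enters only as a fixed matrix — it becomes relevant only later, when this convexity is combined with results describing the range of attainable $C_t$.
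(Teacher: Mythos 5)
Your proof is correct, and it takes a genuinely different and more economical route than the paper's. The paper passes to $f=E^2$, computes the gradient and Hessian of $f$ with respect to the diagonal entries of $F$, obtains diagonal Hessian entries of the form $2(u_i^T C_t u_i)$, and then spends most of the proof establishing that $C_t=\mathrm{softmax}(Q^TQ)$ is positive semidefinite (via the Schur product theorem), which is where the $Q=K$ hypothesis enters; finally it converts convexity of $E^2$ back to convexity of $E$ by appealing to monotonicity of the square root. Your argument --- $F\mapsto C_t-UFU^T$ is affine, the Frobenius norm is convex, and a convex function precomposed with an affine map is convex --- bypasses all of this. What your approach buys is twofold: (i) generality, since as you correctly note the $Q=K$ / positive-semidefiniteness assumption plays no role, so the lemma holds for an arbitrary fixed $C_t$; and (ii) robustness, since it avoids the delicate final step of the paper's route (deducing convexity of $E$ from convexity of $E^2$ is not licensed by the composition rule in general, because $t\mapsto\sqrt{t}$ is concave --- the conclusion is rescued precisely by the norm-of-affine structure that your proof uses directly). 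What the paper's computation buys in exchange is the explicit gradient $\nabla_{\lambda_i}f=-2(u_i^TC_tu_i-\lambda_i)$, which is reused immediately afterwards to derive the closed-form minimizer $F^*=\mathrm{diag}^{-1}(U^TC_tU)$; your proof, being coordinate-free, does not produce that formula as a byproduct, but it is not needed for the convexity claim itself.
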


\textbf{Note:} This Lemma is a standalone case to illustrate a property in a particular case proposed by GraphiT \cite{graphit2021} that contributes a kernel-based position encoding for transformers. Considering this case, unveil the convex properties of the objective for future work. 
In this particular case, GraphiT authors assume $Q=K$ for transformers. Neither the theoretical results presented in main paper nor in next section are affected by this assumption of the Lemma. Our rationale is also to cover exceptional cases such as GraphiT besides the general proof proposed by us where we have not assumed $Q=K$. 
\begin{proof}
For simplicity of the proof let's consider the square of the error function i.e. $f = E^2$. If we show that $f$ is convex we can say that $E$ is convex since square root is convex and monotonically increasing in the domain of the function($R_+$). 

We proceed by finding the gradient of f ($\nabla f$) wrt the frequency responses $\lambda$. By definition the frobenius norm squared is the sum of squares of the entries in the matrix.
\begin{align*}
    \nabla_{\lambda_i} f &= -2(u_i^T C_t u_i - \lambda_i) 
\end{align*}
where $u_i$ is the $i$th column of $U$ and $\lambda_i$ is the $i$th diagonal entry of $F$.
The Hessian would then be obtained from the above expression as
\begin{align}\label{hessian_error_wrt_F}
    \nabla^2_{\lambda_i\lambda_j} f &=
    \begin{dcases}
    2(u_i^T C_t u_i) ,& \text{if } \text{$i=j$}\\
    0,              & \text{otherwise}
    \end{dcases} \\
\end{align}
We now show that $C_t$ is positive semi-definite. We begin by noting that in the case of $Q=K$, $C_t = softmax(Q^T Q)$. $Q^T Q$ is a gram matrix which is positive definite ($\succeq 0$). 
Also we have by the Schur Product Theorem \cite{schur_product_theorem} that if $A, B \succeq 0$, then $A \circ B \succeq 0$, where $\circ$ is the Hadamard product. 

Now we can verify that $softmax(M) = D \circ g(M)$ where $g(M) = \exp{M}$ which is the element-wise exponential of the matrix and 
$D = [e_1, e_2 \dots e_n]^T [1, 1, \dots 1]$, where $e_i = \frac{1}{\sum_j g(M[i,j])} \geq 0$.
By the Taylor's expansion of the exponential function we can write $f(M) = \sum_{k=0}^{\infty} c_k M^k$ where $M^k$ is element wise and $c_k = \frac{1}{k!} \geq 0$. Thus we have $g(M) \succeq 0$ [cite polya szego] if $M \succeq 0$. 
Now we show that D is positive semi-definite. We consider the mathematical program below
\begin{align*}
    P = \underset{x \in R^n}{min} x^T D x
\end{align*}
We can check the hessian of $x^T D x$ is 
\begin{align*}
    \nabla^2_{x_i x_j} (x^T D x) &=
    \begin{dcases}
    \frac{2}{e_i} ,& \text{if } \text{$i=j$}\\
    0,              & \text{otherwise}
    \end{dcases} \\
\end{align*}
which is positive semidefinite and the optimal solution is $x=0$ i.e. the minimum is obtained when $x_1=x_2=\dots=x_n=0$. Thus the value of P is 0. $\therefore x^T D x \geq 0$ and thus $D \succeq 0$.
Finally using the Schur Product Theorem we have $softmax(Q^T Q) = D \circ f(M) \succeq 0$.

Thus, from \ref{hessian_error_wrt_F} we have $\nabla^2_{\lambda_i\lambda_j} f \succeq 0$ and $\therefore f$ is convex which implies $E$ is convex.

\end{proof}

Now, consider the optimization problem $\underset{F}{min} \norm{C_t - UFU^T}_F$, we wish to find an F which minimizes $E$. Intuitively it may seem that $F = diag(U^T C_t U)$. The next result proves this.
\begin{lemma}\label{lemma_optimal_F_apndx}
The function $E$ as defined in \ref{error_eq_apndx} over $F$ with $C_t$ fixed has an optimal solution at $F = diag^{-1}(U^T C_t U)$ and the optimal value of the function is $\sqrt{trace(C_t^T C_t - D^2 )}$, where $D = diag( diag^{-1}(U^T C_t U) )$.
\end{lemma}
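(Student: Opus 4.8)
The plan is to exploit the unitary invariance of the Frobenius norm. Since $U$ diagonalizes the symmetric normalized Laplacian, it is orthogonal, so $U^T U = U U^T = I$. Hence for the squared error $f = E^2$ I would first write
\[
f = \norm{C_t - U F U^T}_F^2 = \norm{U^T(C_t - U F U^T)U}_F^2 = \norm{U^T C_t U - F}_F^2 .
\]
Setting $M := U^T C_t U$ and recalling that $F = diag(\lambda_1,\dots,\lambda_n)$ is diagonal, I split
\[
f = \sum_{i \neq j} M_{ij}^2 \;+\; \sum_{i=1}^n (M_{ii} - \lambda_i)^2 .
\]

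Next I would observe that the off-diagonal sum $\sum_{i\neq j} M_{ij}^2$ does not involve $F$ at all, so minimizing $f$ over diagonal $F$ reduces to minimizing the second sum, which is a sum of independent nonnegative squares. It attains its minimum value $0$ exactly at $\lambda_i = M_{ii} = u_i^T C_t u_i$ for every $i$, i.e. $F = diag^{-1}(U^T C_t U)$. (Equivalently, this is the unique stationary point of the gradient $\nabla_{\lambda_i} f = -2(u_i^T C_t u_i - \lambda_i)$ already computed in Lemma~\ref{lemma_E_wrt_F_cnvx_apndx}, and since $f$ is separable and quadratic with nonnegative leading coefficients in the $\lambda_i$, this stationary point is the global minimizer.)

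It then remains to evaluate the optimal value. At the optimum the diagonal contribution vanishes, so $f^\ast = \sum_{i\neq j} M_{ij}^2 = \norm{M}_F^2 - \sum_i M_{ii}^2$. Using unitary invariance once more, $\norm{M}_F^2 = \norm{U^T C_t U}_F^2 = \norm{C_t}_F^2 = trace(C_t^T C_t)$; and since $D = diag(M_{11},\dots,M_{nn})$ is precisely the diagonal matrix built from the diagonal of $M = U^T C_t U$, we have $\sum_i M_{ii}^2 = trace(D^2)$. Therefore $f^\ast = trace(C_t^T C_t) - trace(D^2) = trace(C_t^T C_t - D^2)$, and the optimal value of $E$ is $\sqrt{trace(C_t^T C_t - D^2)}$, as claimed.

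There is no deep obstacle here; the point deserving the most care is the reduction via unitary invariance — in particular the observation that only the diagonal of $U^T C_t U$ is ``reachable'' by a diagonal $F$ — together with keeping the $diag/diag^{-1}$ bookkeeping straight so that $D = diag(diag^{-1}(U^T C_t U))$. If one prefers not to invoke norm invariance, the identical conclusion follows by differentiating $f$ directly in the $\lambda_i$ as in Lemma~\ref{lemma_E_wrt_F_cnvx_apndx} and substituting the minimizer back, but the invariance route is cleaner and makes the residual formula transparent.
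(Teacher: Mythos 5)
Your proof is correct and follows essentially the same route as the paper's: both reduce $\norm{C_t - UFU^T}_F$ to $\norm{U^TC_tU - F}_F$ via orthogonal invariance, identify the minimizer $F^* = diag^{-1}(U^TC_tU)$ (you by separability of the diagonal squares, the paper by setting the same gradient to zero), and evaluate the residual as $\sqrt{trace(C_t^TC_t) - trace(D^2)}$. The explicit off-diagonal/diagonal split you give is a slightly more transparent way of seeing why only the diagonal of $U^TC_tU$ is reachable, but it is the same argument.
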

\begin{proof}
Consider the optimization problem 
\begin{align*}
    P &= \underset{F \in R^n}{min} \norm{C_t - Udiag(F)U^T}_F \\
    &= \underset{F \in R^n}{min} \norm{U^T C_t U - diag(F)}_F
\end{align*}
The second equality follows from the fact that the frobenius norm doesn't change by multiplying with orthogonal matrices. 
Since the domain of the objective $f=\norm{C_t - U diag(F) U^T}_F$ is open, we could obtain the optimal $F$ by setting the gradient to 0. (Also for the special case that $Q=K$ as in \cite{graphit2021}, the objective is convex from lemma \ref{lemma_E_wrt_F_cnvx_apndx} with respect to $F$ and again we can obtain the optimal solution by setting the gradient to 0.) 

The gradient of the function with respect to $F(\lambda)$ is given as below
\begin{align*}
    \nabla_{\lambda_i} f &= -2(u_i^T C_t u_i - \lambda_i) 
\end{align*}
Thus we get a system of n equations and solving gives
\begin{equation}\label{optimal_F}
    F^{*} = diag^{-1}(U^T C_t U)
\end{equation}
where $diag^{-1}(.)$ is the vector formed by picking the diagonal elements of the input matrix. We can check at this point the objective is minimized as $ \nabla^2_{\lambda_i} f = 2I \succeq 0 $.

Now the optimal value of the objective is attained at this optimal solution in \ref{optimal_F}. Substituting gives
\begin{align*}
    P &= \norm{C_t - U diag(F^{*}) U^T}_F \\
    &= \norm{U^T C_t U - diag(F^{*})}_F \\
    &= \norm{U^T C_t U - diag(diag^{-1}(U^T C_t U))}_F \\
    &= \norm{U^T C_t U - D}_F \\
    &= \sqrt{\norm{U^T C_t U}_F^2 - trace( D^2 )} \\
    &= \sqrt{trace(C_t^T C_t) - trace( D^2 )} \\
    &= \sqrt{trace(C_t^T C_t - D^2)}
\end{align*}
\end{proof}
The above result gives some insight into the error between the convolution support of the transformer (attention map) and that of the desired filter, when the convolution support $C_t$ is fixed. The form of the minima tells us that there exists an error($\geq 0$) between the two spaces when trying to bring matrix ($Udiag(F)U^T$) in the spectral space closer to $C_t$. But when is the error exactly equal to 0 for a given $F$ and what are the class of response functions where the error is 0 i.e. the transformer would be able to exactly learn the desired filter response? 

For the set of filter responses that the transformer is not capable of learning with 0 error, what are the bounds with which the transformer would approximate the filter response of the spectral GNN for any given graph. 
The next theorems help us understand the answer to these questions. 

\begin{theorem}\label{thm_unique_optimum_apndx}
The error function $E(C_t,C_g)$ between the convolution supports of the transformer attention and that in the space of spectral GNNs has a minimum value of 0. Considering the case of non-negative weighted graphs, this minima can be attained at only those frequency responses for which the magnitudes at all frequencies $\lambda_{i}(C_g) \leq 1$ and the low(0) frequency response $\lambda_0(C_g) = 1$.
\end{theorem}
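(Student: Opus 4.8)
The plan is to reduce the equality $E=0$ to the statement that the symmetric matrix $C_g=UFU^{T}$ literally coincides with a transformer attention map, and then to extract the two conditions on $F$ — equivalently on the $\lambda_i(C_g)$, which are exactly the diagonal entries of $F$ since $U$ is orthogonal — from the Perron--Frobenius theory of nonnegative row‑stochastic matrices together with the known form of the zero‑frequency eigenvector of the normalized Laplacian. Since $E(C_t,C_g)=\norm{C_t-C_g}_F$ by \eqref{error_eq_apndx}, we always have $E\ge 0$, with equality iff $C_t=C_g$; and the all‑pass choice $F=I$ gives $C_g=UIU^{T}=I$, which is a limit of row‑wise softmax maps (and lies in the enclosing set of stochastic matrices of Theorem~\ref{thm_error_bounds}), so the value $0$ is attained. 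Note $F=I$ already satisfies the two claimed conditions, which is the consistency check one wants before proving necessity.

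For the ``only those'' direction, suppose $E(C_t,C_g)=0$, so that $M:=C_t=UFU^{T}$ is simultaneously (i) a nonnegative matrix whose every row sums to $1$ (being an attention map, or an element of the stochastic superset), and (ii) symmetric with eigenvalues precisely the diagonal entries of $F$, i.e. the $\lambda_i(C_g)$, with eigenvectors the columns $u_i$ of $U$. From (i), $\rho(M)\le\max_i\sum_j M_{ij}=1$, and since $M$ is symmetric its spectrum is real, hence every eigenvalue lies in $[-1,1]$; combined with (ii) this is exactly $|\lambda_i(C_g)|\le 1$ for all $i$.

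It remains to get $\lambda_0(C_g)=1$. Row‑stochasticity also gives $M\mathbf{1}=\mathbf{1}$; writing $UFU^{T}\mathbf{1}=\mathbf{1}$ and multiplying on the left by $U^{T}$ yields $F(U^{T}\mathbf{1})=U^{T}\mathbf{1}$, i.e. $(F_{ii}-1)\,(u_i^{T}\mathbf{1})=0$ for every $i$. For a graph with non‑negative edge weights and no isolated vertices, the $0$‑eigenvalue eigenvector of $L=I-D^{-1/2}AD^{-1/2}$ is $u_0=D^{1/2}\mathbf{1}/\norm{D^{1/2}\mathbf{1}}$ (because $A\mathbf{1}=D\mathbf{1}$), whose entries are all positive, so $u_0^{T}\mathbf{1}=\sum_i\sqrt{d_i}/\norm{D^{1/2}\mathbf{1}}>0$; the corresponding equation then forces $F_{00}=\lambda_0(C_g)=1$. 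Together with the previous paragraph this proves the theorem. (This is also where the hypothesis of \emph{non‑negative weights} is used: it is what guarantees the DC eigenvector is sign‑definite, hence not orthogonal to $\mathbf{1}$.)

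The main obstacle is the first step, namely pinning down the feasible set of $C_t$: a softmax produces strictly positive off‑diagonal entries, so exact attainment of $E=0$ requires either passing to the closure of the attention maps (which contains $I$) or to the enclosing set of stochastic matrices as in Theorem~\ref{thm_error_bounds}; fortunately Steps~2--3 only use nonnegativity, unit row sums, and symmetry, all of which survive this relaxation, so the argument is robust to exactly how one formalizes ``$C_t$''. A secondary subtlety is that on a disconnected graph the eigenvalue $0$ has multiplicity $>1$, so $(F_{ii}-1)(u_i^{T}\mathbf{1})=0$ only forces $F=1$ on \emph{some} zero‑frequency eigenvector; one should therefore either restrict to connected graphs (multiplicity one) or interpret ``$\lambda_0(C_g)=1$'' as the value of the filter response at frequency $0$, which is consistent with responses being functions of the eigenvalue as in \eqref{poly_approx}.
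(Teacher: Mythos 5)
Your proposal is correct, and while its skeleton (attain $0$ at the all--pass filter; use row--stochasticity plus a spectral bound for $|\lambda_i|\le 1$; use the DC eigenvector's non--orthogonality to $\mathbf{1}$ for $\lambda_0=1$) matches the paper's, the two key technical steps are genuinely different. For the magnitude bound the paper invokes the Gerschgorin circle theorem on $C_t^{*}$, whereas you use $\rho(M)\le\max_i\sum_j M_{ij}=1$ together with the symmetry of $UFU^{T}$ to place the (real) spectrum in $[-1,1]$; these are essentially interchangeable. The more substantive difference is the zero--frequency step: the paper only extracts the single scalar identity $\mathbf{1}^{T}C_t^{*}\mathbf{1}=n$, rewrites it as $\sum_i\lambda_i^{*}(\mathbf{1}^{T}u_i/\sqrt{n})^2=1$, and then needs a page of analysis of the component--indicator eigenvectors of the (unnormalized) Laplacian to conclude; you instead use the full vector equation $M\mathbf{1}=\mathbf{1}$, i.e.\ $(F_{ii}-1)(u_i^{T}\mathbf{1})=0$ for every $i$, and close the argument by noting that the DC eigenvector of $L=I-D^{-1/2}AD^{-1/2}$ is $D^{1/2}\mathbf{1}/\lVert D^{1/2}\mathbf{1}\rVert$ with strictly positive entries. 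This is shorter, uses strictly more of the available information, and is actually more faithful to the paper's own definition of $L$ (the paper's appendix silently switches to the unnormalized Laplacian, whose DC eigenvector is $\mathbf{1}/\sqrt{n}$). Your two caveats are also well placed: the paper glosses over the fact that $I$ is only in the closure of the softmax image, and over the basis ambiguity of a degenerate $0$--eigenspace on disconnected graphs; your resolution (interpret $F$ as a function of the eigenvalue, so one non--orthogonal basis vector suffices) is the right one and recovers the paper's conclusion that the response equals $1$ on every component of the $0$--eigenspace.
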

\begin{proof}
From the definition of $E(C_t,C_g) = \norm{C_t - C_g}_F = \norm{C_t - UFU^T}_F$ we can see $E \geq 0$. If we take $C_t=C_g=I$ we can readily see that $E(C_t,C_g) = 0$ which is the minimum value. Thus $I$ is a minima of the function in both the spaces.

We now have to show that if $E(C_t,C_g) = \norm{C_t - C_g}_F = \norm{C_t - UFU^T}_F = 0$ then we must have 
$\lambda_{0} = 1$ and $\lambda_{i} \leq 1$ for the desired frequency response.
Consider the optimization problem below
\begin{align*}
    P &= \underset{C_t,F}{min} \norm{C_t - UFU^T}_F \\
    &= \underset{C_t}{min} \ \underset{F}{min} \norm{C_t - UFU^T}_F \\
    &= \underset{C_t}{min} \norm{C_t - U diag(diag^{-1}(U^T C_t U)) U^T}_F \\
    &= \underset{C_t}{min} \norm{U^T C_t U - diag(diag^{-1}(U^T C_t U))}_F \\
\end{align*}
The third equality is obtained using lemma \ref{optimal_F}. Let $C_t^{*}$ be a point where the abobe optimization problem is optimized. In this case we have
\begin{align*}
    \underset{C_t^{*}}{min} \norm{U^T C_t^{*} U - diag(diag^{-1}(U^T C_t^{*} U))}_F  &= 0 \\
    \underset{C_t^{*}}{min} \norm{U^T C_t^{*} U - D^{*}}_F  &= 0 \\
    C_t^{*} &= U D^{*} U^T \\
\end{align*}
where $D^{*}$ is the diagonal matrix $diag(diag^{-1}(U^T C_t^{*} U))$.
Now we know by the definition of the transformer attention map $C_t$
\begin{align*}
    C_t^{*} \mathbf{1} &= \mathbf{1} \\
    \therefore \mathbf{1}^T C_t^{*} \mathbf{1} &= n \\
    \mathbf{1}^T (U D^{*} U^T) \mathbf{1} &= n \\
    \mathbf{1}^T (\sum_i \lambda_i^{*} u_i u_i^T) \mathbf{1} &= n \\
    \sum_i \lambda_i^{*} (\mathbf{1}^T u_i)^2 &= n \\
    n \sum_i \lambda_i^{*} (\mathbf{\frac{1}{\sqrt{n}}}^T u_i)^2 &= n \\
    \sum_i \lambda_i^{*} (\mathbf{\frac{1}{\sqrt{n}}}^T u_i)^2 &= 1
\end{align*}
Thus we have the equation,
\begin{equation}\label{eq_lambda_g1}
    \sum_i \lambda_i^{*} (\mathbf{\frac{1}{\sqrt{n}}}^T u_i)^2 = 1
\end{equation}
Also we have,
\begin{align*}
    \sum_i \lambda_i^{*} (\mathbf{\frac{1}{\sqrt{n}}}^T u_i)^2 &\leq \lambda_{max}^{*} \sum_i (\mathbf{\frac{1}{\sqrt{n}}}^T u_i)^2 \\
    &= \lambda_{max}^{*} \\
    1 &\leq \lambda_{max}^{*}
\end{align*}
The last equality follows as the sum of squares of the projections of a unit vector on an orthonormal basis is 1. Since $u_i$s are the eigenvectors of the graph laplacian which by definition form an orthogonal basis and $\mathbf{\frac{1}{\sqrt{n}}}^T$ is by definition orthonormal, we have $\sum_i (\mathbf{\frac{1}{\sqrt{n}}}^T u_i)^2 = \norm{\mathbf{\frac{1}{\sqrt{n}}}} = 1$. By the Gerschgorin circle theorem \citep{gerschgorin_circle_theorem}, we know that $\lambda_{max}(C_t^{*}) \leq 1$. 
Thus, from this and above inequality $1 \leq \lambda_{max}(C_t^{*})$, we have that for the optimal solution $\lambda_{max}(C_t^{*}) = 1$ and so for any frequency component $i$, $\lambda_{i}(C_t^{*}) \leq 1$.

Next we consider the case of non-negative weighted graphs, as with negative weights the laplacian may have negative eigen values or may not be diagonalizable at all and in which case we may have to resort to singular values for the frequencies which we leave for future works. 
We note the variational form of the laplacian $L$ is $ x^T L x = \sum_{(i,j) \in E} w_{ij}(x_i-x_j)^2$, where x can be considered the vector of signals on the graph, $E$ is the set of edges, $w_{ij}$ are the corresponding edge weights. Since we consider $w_{ij} \geq 0$ we can be sure that $L$ is diagonalizable and so the eigenvectors(basis frequency vectors) are well defined for the graph. Let $u_0, u_1, u_2 \dots u_{n-1}$ represent these basis vectors corresponding to the eigenvalues(frequencies) $e_0, e_1, e_2 \dots e_{n-1}$ in increasing order of magnitudes. We know that for a connected graph $e_0=0$, for a graph with 2 disjoint components $e_0=e_1=0$ and so on. Let $C$ be the number of components in the graph. The eigenvector corresponding to the $i$th component(0 eigenvalue, $i < C$) $S_i$ is given by,
\begin{align*}
    u_i[j] &= 
    \begin{dcases}
    &\frac{1}{\sqrt{\lvert S_i \lvert}},\\
    & \text{if } \text{$j$th node belongs to the $i$th component}\\
    &\\
    &0,              \\
    & \text{otherwise}
    \end{dcases}
\end{align*}
Due to the orthogonality of the eigenvectors we have $< u_i, u_j > = 0 \forall i \neq j$.
Thus for $j \geq C, i< C$ we have,
\begin{align*}
    < u_i, u_j > &= \sum_{k} \frac{1}{\sqrt{\lvert S_i \lvert}} I_i[k] u_j[k] \\
    0 &= \frac{1}{\sqrt{\lvert S_i \lvert}} \sum_{k} I[k] u_j[k] \\
\end{align*}
where $I_i$ is the indicator variable such that
\begin{align*}
    I[k] &= 
    \begin{dcases}
    &1,\\
    & \text{if } \text{$k$th node belongs to the $i$th component}\\
    &\\
    &0,              \\
    & \text{otherwise}
    \end{dcases}
\end{align*}
Thus we have for the $i$th component,
\begin{equation}\label{eq_sum_uiuk_0}
    \sum_{k} I_i[k] u_j[k] = 0
\end{equation}
Summing equation \ref{eq_sum_uiuk_0} over all components,
\begin{align*}
    \sum_i < I_i, u_j > &= \sum_i \sum_{k} I_i[k] u_j[k] \\
    &= \sum_{k} I_{c(k)}[k] u_j[k] \\
    0 &= \sum_{k} u_j[k] \\
\end{align*}
where $c(k)$ represents the component that the $k$th node belongs to. The second equality is as the components are mutually exclusive and exhaustive in the vertex domain.
Thus we have for $j>C$,
\begin{equation}\label{eq_sum_uk_one}
    \sum_{k} u_j[k] = 0
\end{equation}
Also for $i< C$ the square of the inner product with vector $\mathbf{\frac{1}{\sqrt{n}}}$ gives,
\begin{align*}
    (< \mathbf{\frac{1}{\sqrt{n}}}, u_i >)^2 &= (\sum_{k} \frac{I_i}{\sqrt{\lvert S_i \lvert}} \frac{1}{\sqrt{n}})^2 \\
    &=  \frac{\lvert S_i \lvert}{n}
\end{align*}
Also since  $\sum_{i} \lvert S_i \lvert = n$,
\begin{align*}
    \sum_{i<C} (< \mathbf{\frac{1}{\sqrt{n}}}, u_i >)^2 &= \sum_{i<C} \frac{\lvert S_i \lvert}{n} \\
    &= \frac{1}{n} \sum_{i<C} \lvert S_i \lvert \\
    &= \frac{n}{n}
\end{align*}
\begin{equation}\label{eq_ui_1}
    \sum_{i<C} (< \mathbf{\frac{1}{\sqrt{n}}}, u_i >)^2 = 1
\end{equation}

Now from \ref{eq_sum_uk_one}, the square of the inner product with vector $\bold{\frac{1}{\sqrt{n}}}$ for $j \geq C$ is,
\begin{align*}
    (< \bold{\frac{1}{\sqrt{n}}}, u_j >)^2 &= (\sum_{k} \frac{1}{\sqrt{n}} u_j[k])^2 \\
    &= (\frac{1}{\sqrt{n}} \sum_{k} u_j[k])^2 \\
    &= 0
\end{align*}
Using this result and from equation \ref{eq_lambda_g1} we have,
\begin{align*}
    \sum_i \lambda_i^{*} (\mathbf{\frac{1}{\sqrt{n}}}^T u_i)^2 &= 1 \\
    \sum_{i < C} \lambda_i^{*} (\mathbf{\frac{1}{\sqrt{n}}}^T u_i)^2 &= 1 \\
\end{align*}
We have shown that the response for the $i$th frequency component $\lambda_{i}(C_t^{*}) \leq 1$ and the maximum value of $\sum_{i < C} \lambda_i^{*} (\mathbf{\frac{1}{\sqrt{n}}}^T u_i)^2$ is attained when $\lambda_i^{*}$s are maximized. Thus from \ref{eq_ui_1} we can show that if $\lambda_{i}(C_t^{*}) < 1$, then $\sum_{i < C} \lambda_i^{*}  (\mathbf{\frac{1}{\sqrt{n}}}^T u_i)^2 < 1$.
$\therefore$ We must have for optimal solution of $P$ that $\lambda_{i}(C_t^{*}) = 1, \forall i<C$. This completes the proof.

\end{proof}
Theorem \ref{thm_unique_optimum_apndx} has an important implication that the set of filter functions the transformer can learn effectively i.e., with 0 error, is a subset of the filter responses having a 1 in the low-frequency component(maximum magnitude of any component $\leq$ 1). This is essentially the class of filters consisting of frequency responses such as low pass, band reject etc., that have a maximum magnitude of 1 in the pass-band(and 1 in the 0-frequency component). For other filters such as high pass, band pass, etc., the transformer attention map can at best, only be an approximation to the desired frequency response with a non-zero error. Note that due to the condition that the magnitude of the response at any given frequency component $\lambda_{i} \leq 1$, there may still be some error even in cases where the desired frequency response contains the 0 frequency component. Please note that our results are in line with the empirical observation made by \citep{park2022vision} for vision transformers. However, our proof and theoretical foundations are for the more general domain of non-euclidean data. 
\\
\\
\\
\\

In the remaining section, we intend to provide the lower and upper bounds on the error of objective. The theorem \ref{thm_error_bounds_apndx} explains the error bound.

\begin{theorem}\label{thm_error_bounds_apndx}
The minimum of the error given by $E(C_t,C_g)$ over $C_t$, between the convolution supports of the set of stochastic matrices (which is a superset of the transformer attention map) and spectral GNN, for a given filter response $F$, is bounded below and above by the inequalities
\[\abs{\lambda_{min}} \leq E(C_t^{*},C_g) \leq \abs{\lambda_{max}}\]
where $\lambda_{min}$, $\lambda_{max}$ are the minimum and maximum of the absolute of the eigenvalues of $U(F-I)U^T$ respectively and $C_t^{*} = \underset{C_t}{arg\ min}\ E(C_t,C_g)$.
For the set of transformer attention maps $C_t$ this bound is relaxed as,
\[\abs{\lambda_{min}} \leq E(C_t^{*},C_g) \leq \sqrt{\sum_i \lambda_i^2}\]
, where $\lambda_{i}$ are the eigenvalues of $U(F-I)U^T$
\end{theorem}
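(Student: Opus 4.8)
The plan is to collapse the whole statement onto the single matrix $M := C_g - I = U(F-I)U^{T}$. Since $L$ (hence $U$) is real and $F$ is a real diagonal, $M$ is real symmetric with eigenvalues exactly $\lambda_i = F_{ii}-1$; thus $\norm{M}_F^{2} = \sum_i \lambda_i^{2}$, and $\lambda_{min},\lambda_{max}$ are the smallest and largest of $\{\abs{\lambda_i}\}$. The one structural property I will use repeatedly is that every transformer attention map, and more generally every row-stochastic matrix $C_t$, fixes the all-ones vector: $C_t\mathbf{1}=\mathbf{1}$. Writing the unit vector $v=\mathbf{1}/\sqrt{n}$, this forces $(C_t-C_g)v = C_tv - C_gv = v - C_gv = (I-C_g)v = -Mv$, i.e. the action of $C_t-C_g$ on $v$ is the same ($-Mv$) for every admissible $C_t$.

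\textbf{Lower bound.} For any admissible $C_t$, $E(C_t,C_g)=\norm{C_t-C_g}_F \ge \norm{C_t-C_g}_2 \ge \norm{(C_t-C_g)v} = \norm{Mv}$. Diagonalising $M=U(F-I)U^{T}$ and setting $w=U^{T}v$ gives $\norm{Mv}^{2}=\sum_i\lambda_i^{2}w_i^{2}$ with $\sum_i w_i^{2}=\norm{v}^{2}=1$, so $\norm{Mv}^{2}$ is a convex combination of the $\lambda_i^{2}$ and hence lies in $[\lambda_{min}^{2},\lambda_{max}^{2}]$. In particular $E(C_t,C_g)\ge\abs{\lambda_{min}}$ for every $C_t$, which, taking the minimiser, yields $E(C_t^{*},C_g)\ge\abs{\lambda_{min}}$ for both the stochastic case and the (smaller) transformer-attention case.

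\textbf{Upper bounds.} These I would get from explicit near-optimal choices of $C_t$. For the transformer-attention set, take $C_t=I$ (attainable in the limit of row-softmax Gram matrices by driving the diagonal of $QK^{T}$ to dominate each row): then $E(I,C_g)=\norm{I-UFU^{T}}_F=\norm{I-F}_F=\sqrt{\sum_i(1-F_{ii})^{2}}=\sqrt{\sum_i\lambda_i^{2}}$, giving the relaxed upper bound. For the larger stochastic set I would instead Frobenius-project $C_g$ onto the affine set $\{C: C\mathbf{1}=\mathbf{1}\}$, which decouples row by row and yields $\tilde C = C_g + \tfrac{1}{n}(\mathbf{1}-C_g\mathbf{1})\mathbf{1}^{T}$; one checks $\tilde C\mathbf{1}=\mathbf{1}$ and $\norm{\tilde C - C_g}_F = \tfrac{1}{\sqrt{n}}\norm{(I-C_g)\mathbf{1}} = \norm{Mv}$, which by the convex-combination estimate above is $\le\abs{\lambda_{max}}$. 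Combined with the lower bound this pins $\min_{C\mathbf{1}=\mathbf{1}}E(C,C_g)$ to $[\abs{\lambda_{min}},\abs{\lambda_{max}}]$.

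\textbf{Main obstacle.} The subtlety is entrywise nonnegativity: $\tilde C$ is the optimal \emph{row-sum-one} matrix, but to certify the sharp upper bound $\abs{\lambda_{max}}$ for genuine stochastic matrices one must still exhibit a \emph{nonnegative} competitor at distance $\le\abs{\lambda_{max}}$ from $C_g$. This is exactly where the hypotheses carried over from Theorem~\ref{thm_unique_optimum_apndx} (non-negatively weighted graph, response magnitudes $\le 1$, $\lambda_0(C_g)=1$) enter: in that regime $C_g\mathbf{1}=\mathbf{1}$ already, so $\tilde C=C_g$ and $C_g=UFU^{T}$ carries enough positivity, whereas for a wholly unconstrained $F$ the projected matrix can leave the nonnegative orthant and one is left only with the weaker $\sqrt{\sum_i\lambda_i^{2}}$ bound supplied by $C_t=I$. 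I would therefore sequence the proof as: (i) the reduction to $M$ and the identity $(C_t-C_g)v=-Mv$; (ii) a short lemma that $\norm{Mv}\in[\abs{\lambda_{min}},\abs{\lambda_{max}}]$; (iii) the lower bound via $\norm{\cdot}_F\ge\norm{\cdot}_2$; (iv) the two explicit constructions $\tilde C$ and $I$ for the upper bounds, handling the nonnegativity check last.
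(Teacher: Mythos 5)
Your proof is correct and reaches the same bounds, but by a genuinely more elementary route than the paper. The paper sets the problem up as a constrained convex program over $\{C_t : C_t\mathbf{1}=\mathbf{1},\ C_t\ge 0\}$, runs through the KKT conditions to obtain the closed-form minimizer $C_t^{*}=UFU^{T}-\tfrac{1}{n}(UFU^{T}\mathbf{1}-\mathbf{1})\otimes\mathbf{1}^{T}$, computes the optimal value as $\sum_i\lambda_i^{2}(\mathbf{1}^{T}u_i/\sqrt{n})^{2}$, and only then sandwiches that convex combination between $\lambda_{min}^{2}$ and $\lambda_{max}^{2}$; you instead extract the same key quantity $\norm{Mv}$ directly from the invariant $(C_t-C_g)v=-Mv$, get the lower bound as a pointwise inequality $\norm{C_t-C_g}_F\ge\norm{C_t-C_g}_2\ge\norm{Mv}\ge\abs{\lambda_{min}}$ valid for \emph{every} row-stochastic $C_t$ (no optimization needed), and get the upper bound by exhibiting the affine projection $\tilde C$ --- which is in fact identical to the paper's KKT solution --- together with the same $C_t=I$ evaluation for the relaxed transformer bound. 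What your route buys is transparency: the lower bound no longer depends on having solved the optimization exactly, and the role of the single constraint $C_t\mathbf{1}=\mathbf{1}$ is laid bare. What the paper's route buys is the explicit minimizer itself. Your ``main obstacle'' paragraph also correctly identifies a gap that the paper's own proof silently skips: the KKT derivation sets the multiplier of the nonnegativity constraint to zero without verifying that the resulting $C_t^{*}$ is entrywise nonnegative, so the upper bound $\abs{\lambda_{max}}$ over genuine stochastic matrices is only certified when the projected matrix stays in the nonnegative orthant --- you are being more careful than the source here, not less. The only minor looseness you share with the paper is treating $I$ as a member (rather than a closure point) of the set of row-softmaxed Gram matrices.
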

\begin{proof}
Consider the optimization problem below
\begin{equation}\label{cp_wrt_ct}
\begin{aligned}
    P = \min_{C_t} \quad & \norm{C_t - UFU^T}_F^2\\
    \textrm{s.t.} \quad C_t \mathbf{1} &= \mathbf{1}\\
    \quad C_t &> 0\\
\end{aligned}
\end{equation}
Now,
\begin{align*}
\begin{aligned}
    &\min_{C_t} \norm{C_t - UFU^T}_F^2 \\
    &= \min_{C_t} trace((C_t - UFU^T)^T(C_t - UFU^T)) \\
\end{aligned}
\end{align*}
Taking the gradient of the objective we get
\begin{align*}
    &\nabla_{C_t} (\norm{C_t - UFU^T}_F^2) \\
    &= \nabla_{C_t} trace(C_t^T C_t - 2C_t^T(UFU^T) + UF^2U^T) \\
    &= 2C_t -2UFU^T \\
\end{align*}
and for the hessian we squish the $n \times n$ gradient matrix above into an $n^2$ vector to get
\begin{align*}
    \nabla_{C_t}^2 (\norm{C_t - UFU^T}_F^2) &= \nabla_{C_t} (2C_t -2UFU^T) \\
    &= 2I_{n^2 \times n^2} \\
    \succeq 0 \\
\end{align*}
Thus we see the hessian of the objective is positive semi-definite and so the function is convex.
Also we can check the condition $C_t \mathbf{1} = \mathbf{1}$ is affine and the domain as defined in \ref{cp_wrt_ct} is convex. 
\\
\\
Thus we can apply the Karush-Kuhn-Tucker (KKT) \cite{KKTref1,gordon2012karush,wu2007karush} conditions to get the optimal solution. 
\\
We introduce the slack variable $s \in R^{n}$ such that $s^T(C_t \mathbf{1}-\mathbf{1}) = 0$. 
From the KKT conditions for optimality we have,
\begin{align*}
    C_t \mathbf{1} &= \mathbf{1} \\
    -C_t &\leq 0 \\
\end{align*}
which are the feasibility conditions. For the complementary slackness conditions we introduce $s \in R^n$ and $t \in R^n$ as the slack variables.
\begin{align*}
    s &\geq 0 \\
    t &\geq 0 \\
    s^T (C_t \mathbf{1} - \mathbf{1}) &= 0 \quad \forall \quad C_t \\
    -C_t t &= 0 \quad \forall \quad C_t \\
\end{align*}
From the above we have $t=0$. Also we have the gradient conditions as below,
\begin{align*}
    \nabla_{C_t} \norm{C_t - UFU^T}_F^2 + \nabla_{C_t} (s^T(C_t \mathbf{1}-\mathbf{1})) &= 0 \\
    2C_t - 2UFU^T + s \otimes \mathbf{1}^T &= 0 \\
    C_t = UFU^T - \frac{s}{2} \otimes \mathbf{1}^T & \\
\end{align*}

where $\otimes$ is the kronecker product.

Also from the conditions,
\begin{align*}
    C_t \mathbf{1} &= \mathbf{1} \\
    (UFU^T - \frac{s}{2} \otimes \mathbf{1}) \mathbf{1} &= \mathbf{1} \\
    UFU^T \mathbf{1} - \frac{n}{2} s \\
    \therefore s &= \frac{2}{n} (UFU^T \mathbf{1} - \mathbf{1}) \\
\end{align*}
Substituting we get the optimal solution $C_t^{*}$ as,
\begin{align*}
    C_t^{*} &= UFU^T - \frac{1}{n} (UFU^T \mathbf{1}  - \mathbf{1}) \otimes \mathbf{1}^T
\end{align*}
and the optimal value of the error as,
\begin{align*}
\begin{aligned}
    &\min_{C_t} \norm{C_t - UFU^T}_F^2 \\
    &= \norm{C_t^{*} - UFU^T}_F^2 \\
    &= \norm{- \frac{1}{n} (UFU^T \mathbf{1}  - \mathbf{1}) \otimes \mathbf{1}^T}_F^2 \\
    &= \frac{1}{n^2} \norm{U(F-I)U^T \mathbf{1} \otimes \mathbf{1}^T}_F^2 \\
    &= \frac{1}{n^2} trace( (U(F-I)U^T \mathbf{1} \otimes \mathbf{1}^T)^T (U(F-I)U^T \mathbf{1} \otimes \mathbf{1}^T) ) \\
    &= \frac{1}{n^2} trace( (\mathbf{1} \otimes \mathbf{1} U(F-I)U^T) (U(F-I)U^T \mathbf{1} \otimes \mathbf{1}^T) ) \\
    &= \frac{1}{n^2} trace( \mathbf{1} \otimes \mathbf{1} U (F-I)^2 U^T \mathbf{1} \otimes \mathbf{1}^T ) \\
    &= \frac{1}{n^2} trace( U (F-I)^2 U^T (\mathbf{1} \otimes \mathbf{1}^T) (\mathbf{1} \otimes \mathbf{1}^T) ) \\
    &= \frac{1}{n^2} trace( U (F-I)^2 U^T n(\mathbf{1} \otimes \mathbf{1}^T) ) \\
    &= \frac{1}{n} trace( U (F-I)^2 U^T \mathbf{1} \otimes \mathbf{1}^T ) \\
    &= \frac{1}{n} trace( \mathbf{1}^T U (F-I)^2 U^T \mathbf{1} ) \\
    &= \frac{1}{n} trace( \mathbf{1}^T (\sum_{i=1}^{n} \lambda_{f_i}^2 u_i u_i^T) \mathbf{1} ) \\
    &= \frac{1}{n} trace( \sum_{i=1}^{n} \lambda_{f_i}^2 (\mathbf{1}^T u_i)^2 ) \\
    &= \sum_{i=1}^{n} \lambda_{f_i}^2 (\mathbf{\frac{1}{\sqrt{n}}}^T u_i)^2 )
\end{aligned}
\end{align*}
Thus we have the below lower bound,
\begin{align*}
\begin{aligned}
    \min_{C_t} \norm{C_t - UFU^T}_F^2 &= \sum_{i=1}^{n} \lambda_{f_i}^2 (\mathbf{\frac{1}{\sqrt{n}}}^T u_i)^2  \\
    &\geq \lambda_{f_{min}}^2 \sum_{i} (\mathbf{\frac{1}{\sqrt{n}}}^T u_i)^2 \\
    &\text{As sum of squares of projection } \\
    &\text{of unit vector on an orthonormal basis is 1}\\
    \min_{C_t} \norm{C_t - UFU^T}_F^2 &= \lambda_{f_{min}}^2 \\
\end{aligned}
\end{align*}
and the upper bound as,
\begin{align*}
\begin{aligned}
    \min_{C_t} \norm{C_t - UFU^T}_F^2 &= \sum_{i=1}^{n} \lambda_{f_i}^2 (\mathbf{\frac{1}{\sqrt{n}}}^T u_i)^2  \\
    &\leq \lambda_{f_{max}}^2 \sum_{i} (\mathbf{\frac{1}{\sqrt{n}}}^T u_i)^2 \\
    &= \lambda_{f_{max}}^2
\end{aligned}
\end{align*}
Thus we have the bounds for the mathematical program in \ref{cp_wrt_ct} as 
\begin{align*}
    \lambda_{min}^2 &\leq \norm{C_t^{*} - UFU^T}_F^2 \leq \lambda_{max}^2 \\
\end{align*}
Since square is a monotonically increasing function in the non-negative domain and $E \geq 0$,
\begin{align*}
    \min_{C_t} \norm{C_t - UFU^T}_F^2 &= (\min_{C_t} \norm{C_t - UFU^T}_F)^2 \\
    &= (\norm{C_t^{*} - UFU^T}_F)^2 \\
    &= (E(C_t^{*},C_g))^2 \\
\end{align*}
Thus we have,
\begin{align*}
    \lambda_{min}^2 &\leq E(C_t^{*},C_g)^2 \leq \lambda_{max}^2 \\
    \abs{\lambda_{min}} &\leq E(C_t^{*},C_g) \leq \abs{\lambda_{max}} \\
\end{align*}
This concludes the proof for the minimum of the function over the set of stochastic matrices.

In the above optimization problem we have taken the domain as the set of all stochastic matrices. But the set of transformer attention maps is a subset of the set of stochastic matrices and is not convex. However the lower bound still holds as the objective is convex and in the subset of the domain the values must be greater than or equal to the optimal value. For the upper bound we simply compute the value of the function at $C_t=I$, which is a point in the set of convolution supports obtained from the transformer attention. Then the minimum value of the error must be greater than or equal to this value.
\begin{align*}
    E(C_t,C_g)_{\mid C_t=I} &= \norm{C_t - C_g}_F \\
    &= \norm{I - UFU^T}_F \\
    &= \norm{UU^T - UFU^T}_F \\
    &= \norm{UIU^T - UFU^T}_F \\
    &= \norm{U(I-F)U^T}_F \\
    &= \norm{U(F-I)U^T}_F \\
    &= \norm{(F-I)}_F \\
    &= \sqrt{\sum_i \lambda_i^2} \\
\end{align*}
where $\lambda_i$ are the eigenvalues of $F-I$. Thus we have for $C_t, C_t^{*}$ belonging to the set of transformer attention maps,
\[\abs{\lambda_{min}} \leq E(C_t^{*},C_g) \leq \sqrt{\sum_i \lambda_i^2}\]

\end{proof}
The above result gives us a lower and upper bounds to the error. From the term $(F-I)$ in the proof we can see that the error increases as the filter deviates from the identity all pass matrix. This indicates that the error is unbounded as $F \xrightarrow[]{} \infty$. 
We leave the task of obtaining a better bounds for the set of transformer attention maps to future works.

Next we characterize the properties of the filters in the proposed architecture and provide the precision upto which the filter responses could be learnt.

\begin{property}\label{allpass_lemma_apndx}
The filter coefficients consisting of the vector of all ones is an all-pass filter
\end{property}
\begin{proof}
Consider the filter response given by the chebyshev coefficients $T_n$ of the first kind as below
\begin{align*}
    G(x) = \sum_{i=0}^{\infty} T_i(x) t^n
\end{align*}
where $t^i$ represent the coefficients for the $i^{th}$ polynomial. It can be verified that the generating function for $G$ above could be given by the below equation
\begin{align}
    G(x) = \frac{1-tx}{1-2tx+t^2}
\end{align}
setting $t=1$ in this equation gives $G(x)=\frac{1}{2}$ which does not depend on $x$. Thus $G$ would behave as an all-pass filter. 
\end{proof}
Note that $t=0$ is also an alternative but this would cause a degenerate learning of the filter coefficients which may always remain $0$ after aggregation and the MLP layers (in the absence of bias).
\\
\begin{theorem}\label{filter_tm_appndx}
Assume the desired filter response $G(x)$ has $m+1$ continuous derivatives on the domain $[-1,1]$. Let $S_{n}^{T}G(x)$ denote the $n^{th}$ order approximation by the polynomial(chebyshev) filter and $S_{n}^{T'}G(x)$ denote the learned filter, $C_f$ be the first absolute moment of the distribution of the fourier magnitudes of $f$ (function learned by the network), $h$ the number of hidden units in the network and N the number of training samples. Then the error between the learned and desired frequency response is bounded by the below expression
\[\lvert G(x) - S_{n}^{T'}G(x) \lvert = \mathcal{O}(\frac{nC_f^2}{h} + \frac{hn^2}{N}log(N) + n^{-m})\]
\end{theorem}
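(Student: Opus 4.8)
The plan is to split the total error with the triangle inequality into a \emph{truncation} part and a \emph{learning} part,
\[
\lvert G(x) - S_n^{T'}G(x)\rvert \;\le\; \lvert G(x) - S_n^{T}G(x)\rvert \;+\; \lvert S_n^{T}G(x) - S_n^{T'}G(x)\rvert ,
\]
and bound the two pieces independently, then optimise the resulting expression. For the truncation term I would invoke the classical estimate of Chebyshev approximation theory: if $G$ has $m+1$ continuous derivatives on $[-1,1]$, then repeated integration by parts shows the true Chebyshev coefficients decay like $\lvert\alpha^k\rvert = \mathcal{O}(k^{-m-1})$, whence the degree-$n$ partial sum obeys $\lVert G - S_n^{T}G\rVert_\infty = \mathcal{O}(n^{-m})$ (a Jackson-type bound). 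This contributes the $n^{-m}$ term verbatim.

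For the learning term, write $S_n^{T}G(x) = \sum_{k=0}^{n}\alpha^k T_k(x)$ with the exact Chebyshev coefficients and $S_n^{T'}G(x) = \sum_{k=0}^{n}\hat\alpha^k T_k(x)$ with the coefficients actually emitted by the GNN$+$MLP readout (Equation~\ref{final_alpha}). Since $\lvert T_k(x)\rvert\le 1$ on $[-1,1]$, a Cauchy--Schwarz step gives $\lvert S_n^{T}G(x) - S_n^{T'}G(x)\rvert \le \sqrt{n+1}\,\lVert\alpha-\hat\alpha\rVert_2$, so it suffices to control the coefficient-estimation error. I would split $\lVert\alpha-\hat\alpha\rVert_2$ into an \emph{approximation} part (the best a width-$h$ network can do) and an \emph{estimation} part (how far the empirically trained network sits from that best network given $N$ samples). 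The approximation part is handled by a Barron-type theorem: since the target map has first absolute Fourier moment $C_f$, a one-hidden-layer network with $h$ units approximates it with squared $L^2$ error $\mathcal{O}(C_f^2/h)$, and accounting for the $n$ coefficients (equivalently, the $\ell_1$ mass of the Chebyshev basis at a point) upgrades this to the $\mathcal{O}(nC_f^2/h)$ term. The estimation part is handled by a uniform-convergence / covering-number argument: the hypothesis class has $\mathcal{O}(h)$ effective parameters, and the map from parameters to the filter value at a point is Lipschitz with constant polynomial in $n$ (of order $n^2$, since $\lVert T_k'\rVert_\infty = k^2$), so a standard Rademacher/VC bound yields a generalisation gap of order $\mathcal{O}\!\big(\tfrac{hn^2}{N}\log N\big)$. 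Summing the three contributions gives the stated $\mathcal{O}\!\big(\tfrac{nC_f^2}{h} + \tfrac{hn^2}{N}\log N + n^{-m}\big)$.

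I expect the learning term, not the truncation term, to be the main obstacle, and within it three points need care: (i) arguing that the target coefficient map genuinely has a finite Barron constant $C_f$ so the approximation bound applies at all; (ii) transferring the per-coefficient $L^2$ approximation error to a \emph{uniform-in-$x$} bound on the filter value while losing only a factor polynomial in $n$; and (iii) organising the generalisation argument so the degree-$n$ dependence enters only through the Lipschitz factor and the number of hidden units, rather than compounding. Managing the interplay of these $n$-powers is precisely what pins down the regime $n^3 C_f^2 \le h n^2 \le N/\log N$ advertised after the theorem, under which all three terms vanish simultaneously.
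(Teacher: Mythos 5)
Your proposal is correct and its skeleton coincides with the paper's: both split the error into (i) the Chebyshev truncation tail, bounded by $\mathcal{O}(n^{-m})$ using the decay of Chebyshev coefficients for a function with $m+1$ continuous derivatives, and (ii) a learning error on the first $n+1$ coefficients, bounded by invoking Barron's approximation/estimation theorem for a width-$h$ network and multiplying the per-coefficient error $\mathcal{O}(C_f^2/h + \tfrac{hn}{N}\log N)$ by the $n+1$ basis functions. The one visible difference is the object whose coefficients the network is said to approximate. You expand $G$ itself in Chebyshev polynomials and let the network emit $\hat\alpha^k \approx \alpha^k$; the paper first applies Peano's kernel theorem, writing $S_n^TG - G$ as an integral of $G^{(m+1)}$ against the kernel $K_n(x,t)=\tfrac{1}{m!}\{S_n^T(x-t)^m_+ - (x-t)^m_+\}$, and then carries out exactly your two-term split on the Chebyshev expansion of $(x-t)^m_+$, establishing $c_{km}=\mathcal{O}(k^{-m})$ by repeated integration by parts. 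Your direct route is cleaner and avoids that detour; the paper's route makes the smoothness hypothesis enter explicitly through $G^{(m+1)}$ under the integral rather than through a quoted Jackson-type bound. Two minor points of divergence that do not affect correctness: you obtain the $\tfrac{hn^2}{N}\log N$ term from a Lipschitz/covering argument with constant $\|T_k'\|_\infty=k^2$, whereas the paper gets it simply as $n$ copies of Barron's $\tfrac{hd}{N}\log N$ with input dimension $d=n$; and your Cauchy--Schwarz step $\sqrt{n+1}\,\|\alpha-\hat\alpha\|_2$ would actually give a $\sqrt{n}$ factor where the paper uses the cruder $\ell_1$ bound $\sum_k|\epsilon|\,|T_k(x)|\le (n+1)\epsilon$ --- consistent with the stated $\mathcal{O}(nC_f^2/h)$, so you should use the $\ell_1$ version to match the claim. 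The caveats you flag (finiteness of the Barron constant, uniformity in $x$) are exactly the points the paper also leaves as assumptions.
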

\begin{proof}
We begin by defining a bounded linear functional $L$ on the space $C^{m+1}[-1,1]$ with $m+1$ continuous derivatives as below
\[LG = (S_{n}^{T}G)(x) - G(x)\]
Since $G(x)$ has $m+1$ derivatives, it is approximated by a polynomial function of degree $n>m$.
By Peano's kernel theorem \citep{peano1913resto}, we can write $LG$ as below
\begin{equation}\label{error_eq1}
    (S_{n}^{T}G)(x) - G(x) = \int_{-1}^{1} G^{m+1}(t)K_n(x,t) dt
\end{equation}
where
\begin{equation}\label{eq_kn}
    K_n(x,t) = \frac{1}{m!}{S_n^T(x-t)^m_+ - (x-t)^m_+}
\end{equation}
The notation $(.-t)^m_+$ indicates
\begin{align*}
    (x-t)^m_+ = 
    \begin{dcases}
    (x-t)^m & \text{if } \text{$x > t$}\\
    0  & \text{otherwise}
    \end{dcases}
\end{align*}
We note the $n^{th}$ order approximation of the chebyshev expansion of the function $G{x}$ is given by
\[S^T_nG = \sum_{k=0}^{n} c_{k}T_{k}(x)\]
where,
\[c_k = \frac{2}{\pi}\int_{-1}^{1} \frac{G(x)T_k(x)}{\sqrt{1-x^2}} dx \]
Thus we get $S_n^T(x-t)^m_+$ as
\[S_n^T(x-t)^m_+ = \sum_{k=0}^{n} c_{km}T_{k}(x)\]
where,
\[c_{km} = \frac{2}{\pi}\int_{t}^{1} \frac{(x-t)^mT_k(x)}{\sqrt{1-x^2}} dx \]

If a Graph Neural Network approximates $c_{km}$ in the proposed method, then we would have an error due to this approximation. We now find the bounds of this approximation. As the current method relies on the connectivity pattern due to the signals of the nodes of the graph, we desire a different graph for different signal patterns and thus different filter coefficients. Assuming an injective function \citep{xu2019gin} for aggregation, we can always find a unique map for a unique graph corresponding to the frequency response. Now we need to learn a function that maps the output of the previous network to the coefficients of the desired frequency response. A classical neural network with $h$ hidden units could approximate this by the below bound \citep{Barron1991ApproximationAE}
\[\epsilon = c_{km}^{'} - c_{km} = \mathcal{O}(\frac{C_f^2}{h} + \frac{hd}{N}log(N))\]
where $d=n$ as the input dimension is the filter order, $N$ is the number of training samples. Intuitively the first term of the expression states that increasing the number of nodes $h$ decreases the error bounds and the second term acts as a regularizer to limit the increase in $h$ by the number of samples. The latter term can be thought of as enforcing the bound to obey the Occam's razor that parameters should not be increased beyond necessity.
\[C_f = \int \lvert \omega \lvert \lvert F \lvert d\omega \]
$\lvert F \lvert$ is the Fourier magnitude distribution of $f$ which is the function to be learned by the network. Thus we get,
\begin{equation}\label{uni_approx_ckm}
    \epsilon = \mathcal{O}(\frac{C_f^2}{h} + \frac{hn}{N}log(N))
\end{equation}

We represent by $S_n^{T'}(x-t)^m_+$ the approximation obtained by the network of the $n^{th}$ order chebyshev polynomial. Thus
\[S_n^{T'}(x-t)^m_+ = \sum_{k=0}^{n} c_{km}^{'}T_{k}(x)\]
From equation \ref{uni_approx_ckm} and the previous equation we get the below
\begin{align*}\label{eq_bnd1}
    &\lvert S_n^{T'}(x-t)^m_+ - (x-t)^m_+ \lvert \\
    &= \lvert \sum_{k=0}^{n} c_{km}^{'}T_{k}(x) - (x-t)^m_+ \lvert\\
    &= \lvert \sum_{k=0}^{n} (c_{km}+\epsilon)T_{k}(x) - (x-t)^m_+ \lvert\\
    &= \lvert \sum_{k=0}^{n} c_{km}T_{k}(x) - (x-t)^m_+ + \sum_{k=0}^{n} \epsilon T_{k}(x) \lvert\\
    &= \lvert \sum_{k=0}^{\infty} c_{km}T_{k}(x) - (x-t)^m_+ + \sum_{k=0}^{n} \epsilon T_{k}(x) - \sum_{k=n+1}^{\infty} c_{km}T_{k}(x) \lvert\\
    &= \lvert \sum_{k=0}^{n} \epsilon T_{k}(x) - \sum_{k=n+1}^{\infty} c_{km}T_{k}(x) \lvert\\
    &\leq \lvert \sum_{k=0}^{n} \epsilon T_{k}(x)  \lvert + \lvert \sum_{k=n+1}^{\infty} c_{km}T_{k}(x) \lvert \\
    &= \lvert \sum_{k=0}^{n} (\mathcal{O}(\frac{C_f^2}{h} + \frac{hn}{N}log(N))) T_{k}(x) \lvert + \lvert \sum_{k=n+1}^{\infty} c_{km}T_{k}(x) \lvert \\
    &= \mathcal{O}(\frac{nC_f^2}{h} + \frac{hn^2}{N}log(N)) + \lvert \sum_{k=n+1}^{\infty} c_{km}T_{k}(x) \lvert
\end{align*}

We now bound the expression $\lvert \sum_{k=n+1}^{\infty} c_{km}T_{k}(x) \lvert$. We note that 
\[c_{km} = \frac{2}{\pi}\int_{t}^{1} \frac{(x-t)^mT_k(x)}{\sqrt{1-x^2}} dx\]
Using $x=\cos{\theta}$ and $t=\cos{\phi}$ we get
\begin{align*}
    c_{km} &= \frac{2}{\pi}\int_{0}^{\phi} (\cos{\theta}-\cos{\phi})^m \cos{k \theta} d\theta \\
\end{align*}
We now have to solve the integral $I = \int_{0}^{\phi} (\cos{\theta}-\cos{\phi})^m \cos{k \theta} d\theta$
We do this by integrating by parts
\begin{align*}
    I &= [(\cos{\theta}-\cos{\phi})^m \frac{\sin{k \theta}}{k}]_{0}^{\phi} \\&
    - \int_{0}^{\phi} m(\cos{\theta}-\cos{\phi})^{m-1} (-\sin{\theta}) \frac{\sin{k \theta}}{k}  d\theta \\
    &= - \int_{0}^{\phi} m(\cos{\theta}-\cos{\phi})^{m-1} (-\sin{\theta}) \frac{\sin{k \theta}}{k}  d\theta \\
    &= - \int_{0}^{\phi} m(\cos{\theta}-\cos{\phi})^{m-1} \frac{\cos{(k-1) \theta} + \cos{(k+1) \theta}}{k}  d\theta \\
    &= - (\int_{0}^{\phi} m(\cos{\theta}-\cos{\phi})^{m-1} \frac{\cos{(k-1) \theta}}{k}  d\theta \\
    &+ (\int_{0}^{\phi} m(\cos{\theta}-\cos{\phi})^{m-1} \frac{\cos{(k+1) \theta}}{k}  d\theta )\\
    &= -(I_{11} + I_{12})
\end{align*}
Continuing in this manner we get $\mathcal{O}(2^{m})$ integrals, one of which is as below
\begin{align*}
    I_{m1} &= \frac{m(m-1)(m-2)\dots1}{k(k-1)(k-2)\dots(k-m)}(-\sin{\theta})\sin{(k-m)\theta}) \\
    &= \mathcal{O}(m^m k^{-m}) \\
    &= \mathcal{O}(k^{-m}) \text{as $k \xrightarrow[]{} \infty$}
\end{align*}
Thus $I$ evaluates to $\mathcal{O}(k^{-m})$ and $c_{km} = \mathcal{O}(k^{-m})$.

Thus we have,
\begin{align*}
    \lvert \sum_{k=n+1}^{\infty} c_{km}T_{k}(x) \lvert &= \sum_{k=n+1}^{\infty} \lvert c_{km} \lvert \\
    &= \mathcal{O}(n^{-m})
\end{align*}

Using this result in the expression for $\lvert S_n^{T'}(x-t)^m_+ - (x-t)^m_+ \lvert$ we get
\begin{align*}\label{eq_bnd1}
    &\lvert S_n^{T'}(x-t)^m_+ - (x-t)^m_+ \lvert \\
    &= \mathcal{O}(\frac{nC_f^2}{h} + \frac{hn^2}{N}log(N)) + \lvert \sum_{k=n+1}^{\infty} c_{km}T_{k}(x) \lvert \\
    &= \mathcal{O}(\frac{nC_f^2}{h} + \frac{hn^2}{N}log(N)) + \mathcal{O}(n^{-m})
\end{align*}
Finally using equations \ref{eq_kn} and \ref{error_eq1} for $S_n^{T'}G$ we get 
\begin{equation}
    \lvert S_n^{T'}G - G \lvert = \mathcal{O}(\frac{nC_f^2}{h} + \frac{hn^2}{N}log(N) + n^{-m})
\end{equation}
which concludes the proof.

\end{proof}

The bound states that as the filter order $n$ and the hidden dimension of the network $h$ are increased (subject to $n^3 C_f^2 \leq hn^2 \leq \frac{N}{\log{N}}$) the approximation will converge to the desired filter response. The condition $h \leq \frac{N}{n}$ can be thought to satisfy the statistical rule that the model parameters must be less than the sample size. In the limit of $N \xrightarrow[]{}\infty$, $h$ and $n$ could be increased to as large values as desired subject to $h \geq \mathcal{O}(n)$, which is the order of parameters to be approximated. This is equivalent to say that if we do not consider the generalization error, we can theoretically take a large $h$.  Then, we are left with only the term containing the filter order i.e. the approximation error comes down to $\mathcal{O}(n^{-m})$ as expected. One point to note is that we assume suitable coefficients $c_{km}$ can be learned from the input graph. This assumption requires that the graph has the necessary information regarding spatial connectivity and signals. In the current implementation, we only use the information from the signals residing on the graph nodes in the attention heat map and discard the spatial connectivity. It is a straightforward exercise to extend this idea by using multi-relational graphs to include the graph signals and the spatial connectivity information which we leave for future works. Nevertheless, the current implementation does well empirically, as is evident from the results on the real world and synthetic (\ref{synthetic_dataset}) datasets. 

\subsubsection{Transformers and WL test for Spatial Domain}
WL-test has been used as a standard measure to study the expressivity of GNNs. The k-WL test is the variant of the WL test that works on k-tuples instead of one-hop node neighbors compared to the standard 1-WL test. Recently, with the rapid adoption of transformers on graph tasks, the equivalence of transformer and WL test naturally arises. In the following section, we try to argue how a transformer can approximate the WL test. 

Given a sequence, recent works by \citep{yun2019transformers,yun2020n} have theoretically illustrated that Transformers are universal sequence-to-sequence approximators. The core building block of the transformer is a self-attention layer; the self-attention layers compute dynamic attention values between the query and key vectors by attending to all the sequences. This can be viewed as passing messages between all nodes, regardless of the input graph connectivity.

For position encoding, recently, many works have tried using eigenvectors and eigenvalues
as PEs for GNNs \citep{dwivedi2020generalization,san2021}. The recent work by DGN \citep{beaini_directional_2021} shows how using eigenvalues can distinguish non-isomorphic graphs which WL test cannot. 

Transformers are universal approximators coupled with eigenvalues and eigenvectors as position encodings are powerful than the WL test given enough model parameters. 
However, they can only approximate the solution to the graph isomorphism problem with a specific error and not solve them fully which is also proven by \cite{san2021}. The same holds for us in spatial domain as we inherit the identical characteristics from SAN.

\subsection{Positional encoding schemes} \label{sec:position}
In FeTA, we learn the pairwise similarity between graph nodes inheriting the attention mechanism of vanilla transformer as follows:
\begin{equation}\label{attn_gt_gen}
\small
    Attention^h(Q,K,V) = softmax(\frac{QK^T}{\sqrt{d_{out}}}) V
\end{equation}
Here $Q^T = W_Q^h X^T$, $K^T = W_K^h X^T$ and $V^T = W_V^h X^T$, where $W_Q^h, W_K^h, W_V^h \in R^{d_{out} \times d_{in}}$ are the projection matrices for the query, key and values respectively for the head $h$. 
Following GraphiT \citep{graphit2021}, we share the weight matrices for the query and key matrices for learning a positive semi-definite kernel i.e. $Q=K$. For the input features we use the node attributes, if provided, along with the static laplacian position encoding, as in \citep{dwivedi2020generalization}. The static encoding in the form of the laplacian eigen vectors is simply added to the node embeddings. For the relative positional encoding we follow \cite{graphit2021} and use the diffusion ($K_D$) and random walk ($K_{rw}$) kernels (cf., Equations \ref{diff_kernel} and \ref{rw_kernel}). The attention using the relative positional encoding schemes is now:
\begin{equation}
\small
    Attention(Q,V) = softmax(\exp(\frac{QQ^T}{\sqrt{d_{out}}}) \cdot K_p) V
\end{equation}
Here, $K_p$ is the respective kernel being used i.e. $K_p \in {K_D, K_{rw}}$. 
The diffusion kernel $K_D$ for a graph with laplacian $L$ is given by the equation below,
\begin{equation}\label{diff_kernel}
    K_D = e^{-\beta L} = \lim_{p \xrightarrow{} \inf} (I - \frac{\beta}{p}L)^p
\end{equation}
For physical interpretation, diffusion kernels can be interpreted as the amount of a substance that accumulates at a given node if injected into another node and allowed to diffuse through the graph. Similarly, the random walk kernel generalizes this notion of diffusion for fixed-step walks on the graph. It is described by the equation below,
\begin{equation}\label{rw_kernel}
    K_{rw} = (I - \gamma L)^p
\end{equation}
We can see that above equation becomes the diffusion kernel if $\gamma=\frac{\beta}{p}$ and $p \xrightarrow{} \infty$. However, the difference with respect to the diffusion kernel is that the random walk kernel is sparse.
We also borrow from the positional encoding scheme of SAN \citep{san2021} that allow usage of the edge features $E \in R^{N \times N \times d_{in}}$. Formally, the attention weights $w_{ij}^{kl}$ between the nodes $i$ and $j$ in the $l^{th}$ layer and $k^{th}$ attention head is given by the below equations
\begin{align*}
\small
    \hat{w_{ij}^{kl}} &= 
    \begin{dcases}
    &\frac{W^{1,k,l}_{Q} X[i]^T \odot W^{1,k,l}_{K} X[j]^T \odot W^{1,k,l}_{E} E[i,j]^T}{d_{out}},\\
    & \text{if } \text{$i$ and $j$ are connected in sparse graph}\\
    &\\
    &\frac{W^{2,k,l}_{Q} X[i]^T \odot W^{2,k,l}_{K} X[j]^T \odot W^{2,k,l}_{E} E[i,j]^T}{d_{out}},              \\
    & \text{otherwise}
    \end{dcases} \\
    w_{ij}^{kl} &= 
    \begin{dcases}
    &\frac{1}{1+\gamma} softmax(\sum_{d_k} \hat{w_{ij}^{kl}}),\\
    & \text{if } \text{$i$ and $j$ are connected in sparse graph}\\
    &\\
    &\frac{\gamma}{1+\gamma} softmax(\sum_{d_k} \hat{w_{ij}^{kl}}),  \\
    & \text{otherwise}
    \end{dcases}
\end{align*}
where $W^{1,k,l}_{Q}, W^{1,k,l}_{K}, W^{1,k,l}_{E}$ are the projection matrices corresponding to the query, key and edge vectors for the real edges and $W^{2,k,l}_{Q}, W^{2,k,l}_{K}, W^{2,k,l}_{E}$ are the projection matrices corresponding to the respective vectors for the added edges in the $k^{th}$ head and $l^{th}$ layer as in \citep{san2021}.
We further employ GCKN \citep{chen2020convolutional} to encode graph's topological properties.

\subsection{Dataset Details} \label{ab:dataset}
We benchmark the widely used datasets for graph classification, node classification, and graph regression. Namely for graph classification we use MUTAG \citep{Morris+2020}, NCI1 \citep{Morris+2020} and, the ogbg-MolHIV \citep{DBLP:conf/nips/HuFZDRLCL20} dataset, for node classification we use the PATTERN  and CLUSTER datasets \citep{dwivedi2020benchmarking} and for graph regression we run our method on the ZINC \citep{dwivedi2020benchmarking} dataset. 

MUTAG is a collection of nitroaromatic compounds. Here, the goal is to predict the mutagenicity of these compounds. Input graphs represent the compounds with atoms as the vertices and bonds as the edges. Similarly, in NCI1, the graphs represent chemical compounds with nodes representing the atoms and the edges indicating their bonds. The atoms are labeled as one-hot vectors as node features. Ogbg-MolHIV is a molecular dataset in which each graph consists of a molecule, and the nodes have their features encoded as the atomic number, chirality, and other additional features. PATTERN and CLUSTER are node classification datasets constructed using stochastic block models. In PATTERN, the task is to identify subgraphs and CLUSTER aims at identifying clusters in a semi-supervised setting. ZINC is a database of commercially available compounds. The task is to predict the solubility of the compound formulated as a graph regression problem. Each molecule has the type of heavy atom as node features and the type of bond as edge features.

\begin{table}[tp]
    \centering
    \begin{tabular}{l|cccccc}
        \toprule
        DATASET &GPU &Memory &TIME\\
        \midrule
       MUTAG &Geforce P8 &8 &1.5 \\ 
        NCI1 &Geforce P8 &8 &37.8 \\ 
        Molhiv &Tesla V100 &16 &79.0\\
        PATTERN &Tesla V100 &16 &104.2 \\
        CLUSTER &Tesla V100 &16 &115.4\\
        ZINC &Tesla V100 &16 &529.9\\
        \bottomrule
    \end{tabular}
    \caption{Computational details used for the datasets on the FeTA-Base setting. Time is in seconds per epoch.}
    \label{tab:compute_details}
\end{table}

\begin{table*}[htbp]
    \small
    \centering
    \resizebox{\textwidth}{!}{
    \begin{tabular}{l|c|c|c|c|c|c|c|c|c}
        \toprule
        Model &Dataset &PE  &PE layers &PE dim & no. layers &hidden dim &Model Params &Heads &Filter Order\\
        \hline
\multirow{6}{4em}{FeTA-Base} & MUTAG & - & - & - & 3 & 64 & 118074 & 4 & 8\\ 
& NCI1 & - & - & - & 3 & 64 & 122194 & 4 & 8\\ 
& Molhiv & - & - & - & 3 & 64 & 129465 & 4 & 8\\ 
& PATTERN & - & - & - & 3 & 64 & 5465930 & 4 & 8\\ 
& CLUSTER & - & - & - & 3 & 64 & 5467726 & 4 & 8\\ 
& ZINC & - & - & - & 3 & 64 & 351537 & 4 & 8\\ 
\hline
\multirow{6}{4em}{Vanilla-Transformer} & MUTAG & - & - & - & 3 & 64 & 119080 & 4 & 8\\ 
& NCI1 & - & - & - & 3 & 64 & 107074 & 4 & 8\\ 
& Molhiv & - & - & - & 3 & 64 & 127356 & 4 & 8\\
& PATTERN & - & - & - & 3 & 64 & 5445389 & 4 & 8\\ 
& CLUSTER & - & - & - & 3 & 64 & 5447657 & 4 & 8\\  
& ZINC & - & - & - & 3 & 64 & 340737 & 4 & 8\\ 
\hline
    \end{tabular}
    }
    \caption{Model architecture parameters of FeTA-Base and Vanilla-Transformer}
    \label{tab:model_params_spectra_base_transformer}
\end{table*}

\begin{table*}[htbp]
    \small
    \centering
    \resizebox{\textwidth}{!}{
    \begin{tabular}{l|c|c|c|c|c|c|c|c|c}
        \toprule
        Model &Dataset &PE  &PE layers &PE dim & no. layers &hidden dim &Model Params &Heads &Filter Order\\
        \hline
\multirow{6}{4em}{FeTA + LapE} & MUTAG & LapE & 1 & 64 & 3 & 64 & 2216402 & 4 & 8\\ 
& NCI1 & LapE & 1 & 64 & 3 & 64 & 2218322 & 4 & 8\\ 
& Molhiv & LapE & 1 & 64 & 3 & 64 & 129657 & 4 & 8\\ 
& PATTERN & LapE & 1 & 64 & 3 & 64 & 4414026 & 4 & 8\\ 
& CLUSTER & LapE & 1 & 64 & 3 & 64 & 5468494 & 4 & 8\\ 
& ZINC & LapE & 1 & 64 & 3 & 64 & 483401 & 4 & 8\\ 
\hline
\multirow{6}{4em}{GraphiT + LapE} & MUTAG & LapE & 1 & 64 & 3 & 64 & 2195467 & 4 & 8\\ 
& NCI1 & LapE & 1 & 64 & 3 & 64 & 2207266 & 4 & 8\\ 
& Molhiv & LapE & 1 & 64 & 3 & 64 & 127489 & 4 & 8\\ 
& PATTERN & LapE & 1 & 64 & 3 & 64 & 4395278 & 4 & 8\\ 
& CLUSTER & LapE & 1 & 64 & 3 & 64 & 5447384 & 4 & 8\\ 
& ZINC & LapE & 1 & 64 & 3 & 64 & 474456 & 4 & 8\\ 
\hline 
\hline
\multirow{6}{4em}{FeTA + 3RW} & MUTAG & RW & 1 & - & 3 & 64 & 106694 & 4 & 8\\ 
& NCI1 & RW & 1 & - & 3 & 64 & 110698 & 4 & 8\\ 
& Molhiv & RW & 1 & - & 3 & 64 & 129465 & 4 & 8\\ 
& PATTERN & RW & 1 & - & 3 & 64 & 4413258 & 4 & 8\\ 
& CLUSTER & RW & 1 & - & 3 & 64 & 5467726 & 4 & 8\\ 
& ZINC & RW & 1 & - & 3 & 64 & 482825 & 4 & 8\\ 
\hline
\multirow{6}{4em}{GraphiT + 3RW} & MUTAG & RW & 1 & - & 3 & 64 & 104578 & 4 & 8\\ 
& NCI1 & RW & 1 & - & 3 & 64 & 106498 & 4 & 8\\ 
& Molhiv & RW & 1 & - & 3 & 64 & 125745 & 4 & 8\\
& PATTERN & RW & 1 & - & 3 & 64 & 4394786 & 4 & 8\\
& CLUSTER & RW & 1 & - & 3 & 64 & 5445478 & 4 & 8\\
& ZINC & RW & 1 & - & 3 & 64 & 338817 & 4 & 8\\ 
\hline
\hline
\multirow{6}{4em}{FeTA + GCKN + 3RW} & MUTAG & GCKN+RW & 1 & 32 & 3 & 64 & 116783 & 4 & 8\\  
& NCI1 & GCKN+RW & 1 & 32 & 3 & 64 & 2228434 & 4 & 8\\ 
& Molhiv & GCKN+RW & 1 & 32 & 3 & 64 & 5619529 & 4 & 8\\ 
& PATTERN & GCKN+RW & 1 & 32 & 3 & 64 & 37988386 & 4 & 8\\ 
& CLUSTER & GCKN+RW & 1 & 32 & 3 & 64 & 37990182 & 4 & 8\\ 
& ZINC & GCKN+RW & 1 & 32 & 3 & 64 & 499273 & 4 & 8\\ 
\hline
\multirow{6}{4em}{GraphiT + GCKN + 3RW} & MUTAG & GCKN+RW & 1 & 32 & 3 & 64 & 114882 & 4 & 8\\ 
& NCI1 & GCKN+RW & 1 & 32 & 3 & 64 & 2205672 & 4 & 8\\  
& Molhiv & GCKN+RW & 1 & 32 & 3 & 64 & 5598726 & 4 & 8\\
& PATTERN & GCKN+RW & 1 & 32 & 3 & 64 & 37889986 & 4 & 8\\ 
& CLUSTER & GCKN+RW & 1 & 32 & 3 & 64 & 37895163 & 4 & 8\\ 
& ZINC & GCKN+RW & 1 & 32 & 3 & 64 & 478645 & 4 & 8\\ 
\hline
    \end{tabular}
    }
    \caption{Model architecture parameters of FeTA with position embedding from GraphiT and original GraphiT model}
    \label{tab:model_params_spectra_graphit}
\end{table*}

\begin{table*}[htbp]
    \small
    \centering
    \resizebox{\textwidth}{!}{
    \begin{tabular}{l|c|c|c|c|c|c|c|c|c}
        \toprule
        Model &Dataset &PE  &PE layers &PE dim & no. layers &hidden dim &Model Params &Heads &Filter Order\\
        \hline
\multirow{6}{4em}{FeTA + LPE + Sparse} & MUTAG & LPE & 1 & 16 & 6 & 64 & 558322 & 4 & 8\\ 
& NCI1 & LPE & 1 & 16 & 6 & 64 & 559762 & 8 & 8\\ 
& Molhiv & LPE & 2 & 16 & 6 & 96 & 608129 & 4 & 8\\ 
& PATTERN & LPE & 3 & 16 & 6 & 96 & 579013 & 10 & 8\\ 
& CLUSTER & LPE & 1 & 16 & 16 & 56 & 412978 & 8 & 8\\ 
& ZINC & LPE & 3 & 16 & 6 & 96 & 364167 & 8 & 8\\ 
\hline
\multirow{6}{4em}{SAN + LPE + Sparse} & MUTAG & LPE & 1 & 16 & 6 & 64 & 542082 & 4 & 8\\ 
& NCI1 & LPE & 1 & 16 & 6 & 64 & 543522 & 8 & 8\\ 
& Molhiv & LPE & 2 & 16 & 6 & 96 & 602672 & 4 & 8\\ 
& PATTERN & LPE & 3 & 16 & 6 & 96 & 570736 & 10 & 8\\
& CLUSTER & LPE & 1 & 16 & 16 & 56 & 403783 & 8 & 8\\
& ZINC & LPE & 3 & 16 & 6 & 96 & 360617 & 8 & 8\\ 
\hline
\hline
\multirow{6}{4em}{FeTA + LPE + Full} & MUTAG & LPE & 1 & 16 & 6 & 64 & 640242 & 4 & 8\\ 
& NCI1 & LPE & 1 & 16 & 6 & 64 & 641682 & 8 & 8\\ 
& Molhiv & LPE & 2 & 16 & 6 & 96 & 732984 & 4 & 8\\ 
& PATTERN & LPE & 3 & 16 & 6 & 96 & 697003 & 10 & 8\\ 
& CLUSTER & LPE & 1 & 16 & 16 & 56 & 867046 & 8 & 8\\ 
& ZINC & LPE & 3 & 16 & 6 & 96 & 458303 & 8 & 8\\ 
\hline
\multirow{6}{4em}{SAN + LPE + Full} & MUTAG & LPE & 1 & 16 & 6 & 64 & 624002 & 4 & 8\\ 
& NCI1 & LPE & 1 & 16 & 6 & 64 & 625442 & 8 & 8\\ 
& Molhiv & LPE & 2 & 16 & 6 & 96 & 714769 & 4 & 8\\ 
& PATTERN & LPE & 3 & 16 & 6 & 96 & 688534 & 10 & 8\\ 
& CLUSTER & LPE & 1 & 16 & 16 & 96 & 858538 & 8 & 8\\ 
& ZINC & LPE & 3 & 16 & 6 & 96 & 454753 & 8 & 8\\ 
\hline
    \end{tabular}
    }
    \caption{Parameters of FeTA with position embedding from SAN compared with original SAN}
    \label{tab:model_params_spectra_san}
\end{table*}

\begin{table*}[htbp]
    \centering
    \begin{tabular}{l|cccccc}
        \toprule
        Method &MUTAG  &NCI1 &ZINC &PATTERN &CLUSTER &ogbg-molhiv\\
        \midrule
        Avg $|\mathcal{V}|$ &30.32 &29.87 &23.15 &118.89 &117.20 &25.51 \\ 
        Avg $|\mathcal{E}|$ &32.13 &32.30 &24.90 &3039.28 &2,150.86 &27.47 \\ 
        Node feature &L &L &A &L &L &A \\
        Dim(feat) &38 &37 &28 &3 &6 &9\\ 
        \#Classes &2 &2 &NA &2 &6 &2\\
        \#Graphs &4,127 &4,110 &250,000 &14,000 &12,000 &41,127\\
        \bottomrule
    \end{tabular}
    \caption{Dataset statistics (L indicates node categorical features and A denotes node attributes).}
    \label{tab:dataset_details}
\end{table*}
 \subsection{Experiment Settings} \label{ab:experimentsetting}
Table \ref{tab:compute_details} lists the hardware and the run time of the experiments on each dataset. Tables \ref{tab:model_params_spectra_base_transformer}, \ref{tab:model_params_spectra_graphit}, \ref{tab:model_params_spectra_san} lists out the model architecture parameters for each configurations. For the configurations \textit{FeTA-Base}, \textit{FeTA+LapE}, \textit{FeTA+3RW} and \textit{FeTA+GCKN+3RW} we used the default hyper-parameters provided by GraphiT \citep{graphit2021} as we inherited position encodings from GraphiT. For instance, in ZINC, we don't use edge features. In the configurations \textit{FeTA+LPE+Sparse} and \textit{FeTA+LPE+Full} we used the configurations from SAN \citep{san2021}. This is to ensure same experiment settings which these encoding schemes have used while inducing position encodings in the vanilla transformer models. Means and uncertainties are derived from four runs with different seeds, same as SAN. Additionally, with the final optimized parameters, we reran 10 experiments with identical seeds, which is same as SAN’s/Mialon et al/Dwivedi et al’s experiment settings. Vanilla transformer implementation and its values in main paper is taken from \cite{graphit2021}.

\subsection{Additional Experiments}\label{addnl_exp}
In order to further assess the impact of position encodings on the performance of FeTA we conduct additional experiments by inducing the recently proposed learnable structural positional encoding (LSPE) \cite{lspe} in FeTA. The results are in table \ref{t:lspe}. We can see that adding the new position encoding in FeTA improves the values over the original baseline on all the datasets. This further strengthens the argument that the ability of FeTA to selectively learn the graph spectrum complements the position encoding schemes and the various position encodings could be used as a plug-in along with FeTA to improve the performance on a task. 

We also study the effect of inducing the input graph along with the transformer attention map to learn the filter coefficients in FeTA. Here the input graph is taken as is with the node embeddings replaced by the all one vector. A Graph neural network followed by a pooling layer is then used to learn the filter coefficients as in the original configuration. This is then concatenated with the coefficients obtained from the transformer attention map and given to feed forward networks followed by a non linearity to obtain the final filter coefficients. The results are given in table \ref{t:feta_ipgraph_attn}. We see a consistent drop in performance with respect to FeTA-Base indicating that the proposed method to learn the filter coefficients from the graph and attention map is inefficient and a better method is needed. However, we observe that on 4 out of 6 tasks this new configuration (FeTA with i/p graph) outperforms the vanilla transformer reaffirming the benefits of inducing the transformer with the ability to selectively attend to the graph spectrum. It should be possible to learn better \textit{graph specific} filters using the structure of the input graph and the spatial attention of the transformer, which we leave to future works.
\begin{table*}[!h]
\tiny
\resizebox{\textwidth}{!}{
\begin{tabular}{lcccc}
Models & \textbf{MUTAG}       & \textbf{NCI1}   & \textbf{ZINC}  & \textbf{ogb-MOLHIV}  \\
& \textbf{\% ACC} &\textbf{\% ACC} &\textbf{MAE} &\textbf{\% ROC-AUC}  \\
\midrule
Vanilla Transformer + LSPE &83.33 $\pm$ 4.5 &82.16 $\pm$ 0.6  &0.106 $\pm$ 0.002 &76.35 $\pm$ 0.2  \\
FeTA + LSPE &$\textcolor{red}{88.89 \pm 4.5}$ &$\textcolor{red}{83.29 \pm 0.5}$ &$\textcolor{red}{0.0998 \pm 0.004}$ 	&$\textcolor{red}{77.17 \pm 0.9}$ \\
\end{tabular}
}
\vspace{-2mm}
\caption{Results on graph/node classification/regression Tasks using Vanilla Transformer and FeTA induced with learnable structural position encoding (LSPE). Higher (in red) value is better (except for ZINC). We see a performance increase across all datasets when LSPE is induced with FeTA.}\label{t:lspe}
\end{table*}

\begin{table*}[!h]
\tiny
\resizebox{\textwidth}{!}{
\begin{tabular}{lcccccc}
Models & \textbf{MUTAG}       & \textbf{NCI1}   & \textbf{ZINC}  & \textbf{ogb-MOLHIV}       & \textbf{PATTERN}   & \textbf{CLUSTER}  \\
\midrule
Vanilla Transformer &$\textcolor{blue}{82.2 \pm 6.3}$ &70.0 $\pm$ 4.5  &0.696 $\pm$ 0.007 &$\textcolor{blue}{65.22 \pm 5.52}$ &75.77 $\pm$ 0.4875 &21.001 $\pm$ 1.013 \\
FeTA-Base &$\textcolor{red}{87.2 \pm 2.6}$&$\textcolor{red}{73.7 \pm 1.4}$&$\textcolor{red}{0.412 \pm 0.004}$ 	&$\textcolor{red}{65.77 \pm 3.838}$	&$\textcolor{red}{78.65 \pm 2.509}$&$\textcolor{red}{30.351 \pm 2.669}$ \\
FeTA (with i/p graph) &77.78 $\pm$ 4.5 &$\textcolor{blue}{70.0 \pm 2.3}$  &$\textcolor{blue}{0.470 \pm 0.002}$ &63.51 $\pm$ 5.3 &$\textcolor{blue}{76.85 \pm 1.1}$ &$\textcolor{blue}{28.25 \pm 4.02}$ \\
\end{tabular}
}\vspace{-2mm}
\caption{Results on graph/node classification/regression Tasks using Vanilla Transformer, FeTA-Base and FeTA (with i/p graph) in which the input graph is also used to learn the spectral filter coefficients. Higher (in red) value is better (except for ZINC). We see a performance drop  compared to FeTA-Base when we use the input graph to learn the filter.}\label{t:feta_ipgraph_attn}
\end{table*}

\subsection{Synthetic Dataset}\label{synthetic_dataset}
To show the benefit of graph-specific filtering compared to static filtering, we run experiments on synthetic datasets that have different spectral components for different graphs. We begin by noting a few basic properties of the Laplacian and its spectral decomposition before detailing the data generation process. The below equation gives the unnormalized Laplacian ($L=D-A$) of a graph:
\begin{align*}
    L(i,j) = 
    \begin{dcases}
    deg(i) & \text{if } \text{$i=j$}\\
    -1 & \text{if } \text{$(i,j) \in E$}\\
    0  & \text{otherwise}
    \end{dcases}
\end{align*}
where $deg(i)$ is the degree of the node $i$ and $E$ is the set of edges in the graph.
Multiplying $L$ by a vector $v$ gives the below expression
\begin{align*}
    w &= Lv \\
    w(i) &= \sum_{i,j \in E} (v(i) - v(j)) 
\end{align*}
The expression $v^T L v$ gives
\begin{align*}
    v^T L v &= \sum_{i} v(i) \sum_{i,j \in E} (v(i) - v(j)) \\
    &= \sum_{i} \sum_{i,j \in E} v(i)(v(i) - v(j)) \\
    &= \sum_{j>i, (i,j) \in E} v(i)(v(i) - v(j)) + v(j)(v(j) - v(i)) \\
    &= \sum_{j>i, (i,j) \in E}(v(i) - v(j))^2
\end{align*}
Thus we can see that the expression $v^T L v$ evaluates to the sum of squared distances between neighboring nodes in the graph. We also note that due to this property, the laplacian is a positive semi-definite matrix. If $v$ were the eigenvector of the graph, we know that $v^T L v$ would be the eigenvalue corresponding to that vector by the spectral decomposition theory. Thus all the eigenvalues of the laplacian are non-negative. 

\begin{figure*}[htbp]
  \centering

  \subfigure[]{ 
  \centering
    \includegraphics[width=0.30\linewidth,height=0.30\linewidth]{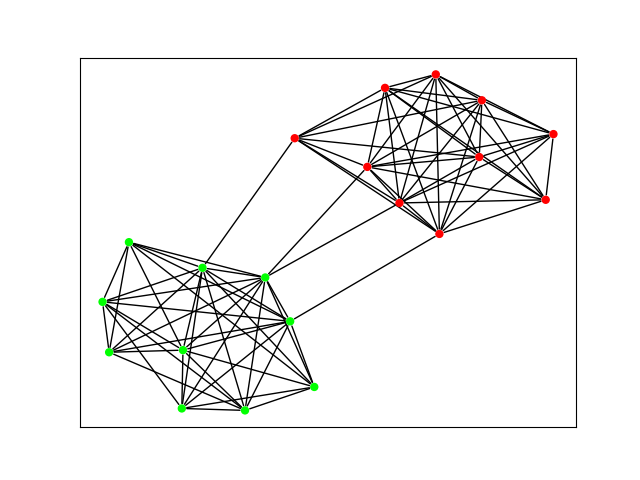}
  }
  \subfigure[]{ 
  \centering
    \includegraphics[width=0.30\linewidth,height=0.30\linewidth]{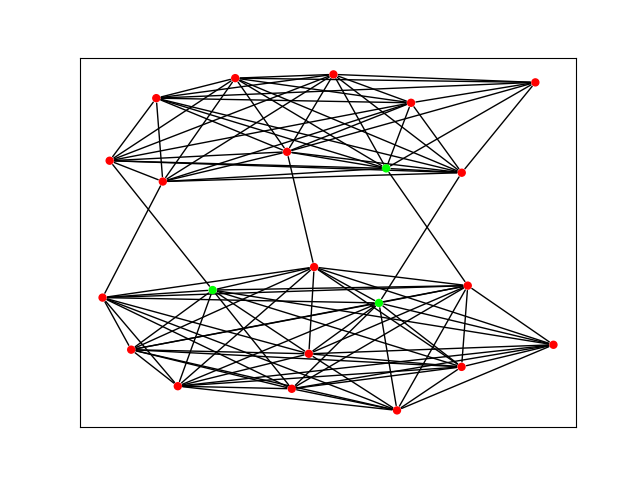}
  }
  \subfigure[]{ 
  \centering
    \includegraphics[width=0.30\linewidth,height=0.30\linewidth]{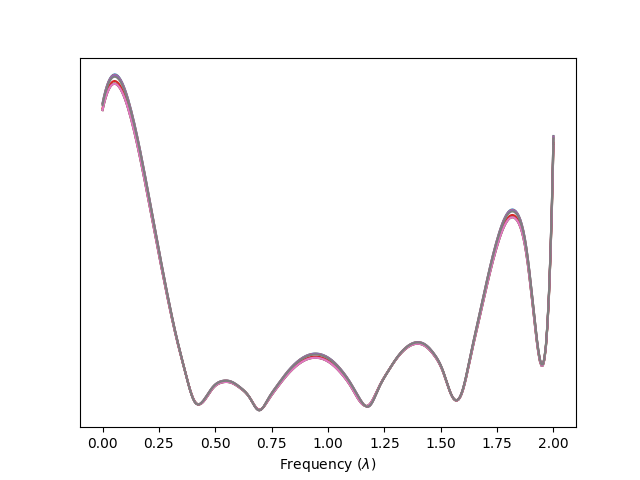}
  }
 \caption{Graphs and Filter Frequency response for the graphs on the $Synthetic_1$ dataset for the case with $K=4$ and $h=1$. Figure (a) is a graph that illustrates low frequency signals (here, we observe the neat clusters as expected for low frequency information). Figure (b) represents a graph with a high frequency component (we observe the mixing of signals in nodes of the same cluster, i.e., different intra-cluster signal values). Finally, figure (c) is the aggregated frequency response across the dataset with the normalized frequency along the X-axis and associated magnitudes on the Y-axis. Here we note the filter has learned the two components of the graph spectrum: the low frequency and the high frequency components.}
  \label{fig:synthetic}
\end{figure*}
We now try to develop an intuition of the laplacian's lowest and largest eigenvalues/vectors. We see from the above equations that $v^T L v$ is the sum of squared differences between values on the nodes. Thus the smallest eigenvalue would correspond to the eigenvector that assigns the same value to all the neighboring nodes, subject to $\lVert v \lVert = 1$. The second eigenvalue would correspond to the orthonormal vector to the first vector and minimizes the sum of squared differences of nodes within a cluster. Thus the second eigenvector would try to keep values of its components similar/closer for nodes that belong to the same cluster. Similarly, the vector corresponding to the largest eigenvalue would try to maximize the sum of squared differences between neighboring nodes and have its nodes(components) belonging to the same take different values subject to $\lVert v \lVert = 1$ while also being orthonormal to the other vectors.

With the above background, we generate synthetic datasets with node signals exhibiting different spectral components for different graphs. Consider $L$ to be the graph laplacian with eigenvalues $\lambda$ and eigenvectors $V$. We generate stochastic block matrices (SBMs) with $B$ number of blocks and $N$ number of nodes per block having an intracluster density of edges as $p_i$ and the inter-cluster edge density as $p_o$. The signals are assigned to the graphs according to the eigenvectors of the selected components of the spectrum. To keep it simple, we use only a single eigenvector ($V_i$) to generate signals per graph. Specifically, out of the $N_G = \mathcal{E}(NB)$ eigenvalues, we select one and take the components of the corresponding eigenvectors. The nodes are then clustered using standard clustering algorithms, such as K-means, into $C$ classes that we keep equal to the number of blocks. Each node is assigned a one-hot vector at the position of the class it belongs to. We then drop this attribute of $50\%$ of the nodes i.e., these nodes are assigned a $0$ vector, and the task is to find the correct assignment for the unknown class based on the connectivity pattern and the spectrum of the known signals on the graph. Thus this task boils down to finding the suitable spectral component in the graph signal and using this information for classification. This should benefit from a graph-specific decomposition and filtering approach, which we confirm from the empirical results.
We generate 3 datasets namely $Synthetic_1$, $Synthetic_2$, and $Synthetic_3$ using different values of $N$, $B$, $p_i$, $p_o$, $V_i$. $Synthetic_1$ has $N=10$, $B=2$, $p_i=0.9$, $p_o=0.05$, $\{V_i \mid i \in \{1,N_G\}\}$, with 1000, 100, 100 graphs in the train, test and valid graphs respectively. $Synthetic_2$ has $N=10$, $B=6$, $p_i=0.9$, $p_o=0.05$, $\{V_i \mid i \in \{1,N_G\}\}$, with 1000, 100, 100 graphs in the train, test and valid graphs respectively. $Synthetic_1$ has $N=10$, $B=6$, $p_i=0.9$, $p_o=0.05$, $\{V_i \mid i \in \{1,\ceil{\frac{N_G}{2}},N_G\}\}$, with 1000, 100, 100 graphs in the train, test and valid graphs respectively.

We train the synthetic datasets on the FeTA-Static and FeTA-Base models to study the effect of graph-specific dynamic filters on the graph. For the $Synthetic_1$ dataset, we use the hidden dimension as 16, and for $Synthetic_2$ and $Synthetic_3$ we keep it to 64. The number of layers is fixed to 1. The number of heads $h$ and filter order $K$ for FeTA-Static are kept at 1 and 4 respectively and for varied for other settings as can be seen in table \ref{tab:synthetic_results} . We can see from table \ref{tab:synthetic_results} that the performance on the synthetic datasets, using dynamic filters of FeTA-Base, has increased by a large margin as compared to the case of static dataset-specific filters in FeTA-Static using the same model parameters. This justifies the benefit and necessity of graph-specific filter design in cases where the spectral information differs from graph to graph. We also observe that as the filter order is increased for a given number of heads, the performance improves.

On the other hand, lower filter order is detrimental to the task. The performance saturates if the filter order is increased beyond a specific limit, as is evident from the $Synthetic_1$ dataset. Also, we do not observe any improvement by increasing the number of heads, keeping the filter order fixed, in this case. This may be due to the nature of the dataset, where we have restricted each graph to contain a single spectral component. We leave it to future studies to determine the effect of number of heads on multiple spectral components in the signal.
\begin{table*}
\label{tab:synthetic_results}
\vskip 0.15in
\centering
\resizebox{\textwidth}{!}{  
\begin{tabular}{l|l|l|ccc|c|c}
\toprule
    Model &\multicolumn{2}{c}{}   &$Synthetic_1$ &$Synthetic_2$ &$Synthetic_3$ &\#Param(dim=16) &\#Param(dim=64)\\
\midrule
FeTA-Static &$K=4$ &$h=1$ &70.04 $\pm$ 4.41 &35.34 $\pm$ 0.25 &33.05  $\pm$  0.48 &4122 &62958\\
\midrule
\multirow{2}{*}{FeTA-Base} &$K=2$ &\multirow{2}{*}{$h=1$} &89.67 $\pm$ 0.50 &45.63 $\pm$ 0.30 &39.67  $\pm$  0.55 &3582 &54738\\
 &$K=4$ & &92.15 $\pm$ 0.20 &46.32 $\pm$ 0.42 &40.22  $\pm$  0.61 &4122 &62958\\
 &$K=8$ & &92.26 $\pm$ 0.40 &47.14 $\pm$ 0.63 &40.27  $\pm$  0.30 &5250 &79446\\
\midrule
\multirow{3}{*}{FeTA-Base} &$K=2$ &\multirow{3}{*}{$h=4$} &79.26 $\pm$ 7.47 &46.26 $\pm$ 0.20 &39.83  $\pm$  0.33 &3090 &47010\\
 &$K=4$ & &91.54 $\pm$ 0.56 &46.59 $\pm$ 0.34 &39.42  $\pm$  0.30 &3150 &47550\\
 &$K=8$ & &92.35 $\pm$ 0.37 &46.24 $\pm$ 0.56 &40.74  $\pm$  0.35 &3318 &48678\\
\midrule
\multirow{3}{*}{FeTA-Base} &$K=2$ &\multirow{3}{*}{$h=8$} &74.99 $\pm$ 0.43 &45.88 $\pm$ 0.50 &38.60  $\pm$  1.05 &3064 &46618\\
 &$K=4$ & &91.32 $\pm$ 0.43 &45.77 $\pm$ 0.73 &38.86  $\pm$  0.17 &3100 &46774\\
 &$K=8$ & &91.60 $\pm$ 0.30 &45.75 $\pm$ 0.73 &39.71  $\pm$  0.50 &3220 &47134\\
\bottomrule
 \end{tabular}}
 \caption{Study on the performance of FeTA-Base vs FeTA-Static on the synthetic datasets. We study the effect of the order $K$ of filters and the number of heads $h$ for FeTA-Base. }\label{table:synthetic}
\vskip -0.1in
\end{table*}

\subsection{Filter Frequency Response on  Datasets}\label{filter_freq_res_apndx}
Here we provide the plots of the frequency response of the filters learned on the other datasets which has not been included in the main paper due to page limit.

\textbf{Filter Frequency response on ZINC}: The frequency response of the filters learned on ZINC with filter order four is given in figure \ref{fig:ind_freq_res_zinc_order4}. Each curve is the frequency response of a filter learned for a single head. We see various filters being learned, such as few low pass, high pass, and band stop filters. Figure \ref{fig:ind_freq_res_zinc_order8} shows the frequency response for filters of order 8. Here we observe the high pass and multi-modal band pass responses being learned. 

\textbf{Filter Frequency response on MolHIV}: Figure \ref{fig:ind_freq_res_molhiv} illustrates the frequency response for filters learned on the MolHIV. Here, we observe the low pass and multi-modal band pass filter responses. 

\textbf{Filter Frequency response on PATTERN}: Figure \ref{fig:ind_freq_res_pattern} shows the frequency response for filters learned on the PATTERN dataset. Here we observe the filters allowing signals in the low and high frequency regime along with some magnitude assigned to the mid-frequency components. The filter order eight (the two right-most plots) shows a surprising result of visually indistinct filters being learned despite the regularization. This indicates the task has a high bias towards signals in the low-frequency region and some components in the middle and high frequency regime to an extent. This could be interpreted as the model trying to learn the two components: the SBM, which corresponds to the low-frequency signal, and the underlying pattern itself, which may benefit from the middle and high frequency components in the spectrum.

\textbf{Filter Frequency response on CLUSTER}: Similar to PATTERN, the filter response plots for CLUSTER as in figure \ref{fig:ind_freq_res_cluster} show low-frequency filters being learned along with bandpass filters. This is intuitive as the task of CLUSTER would indeed benefit by learning low-frequency signals for grouping nodes belonging to the same cluster. All these observations across all datasets validates our hypothesis studied in the scope of this paper.

\begin{figure*}[htbp]
  \centering

  \subfigure[]{ 
  \centering
    \includegraphics[width=0.22\linewidth,height=0.22\linewidth]{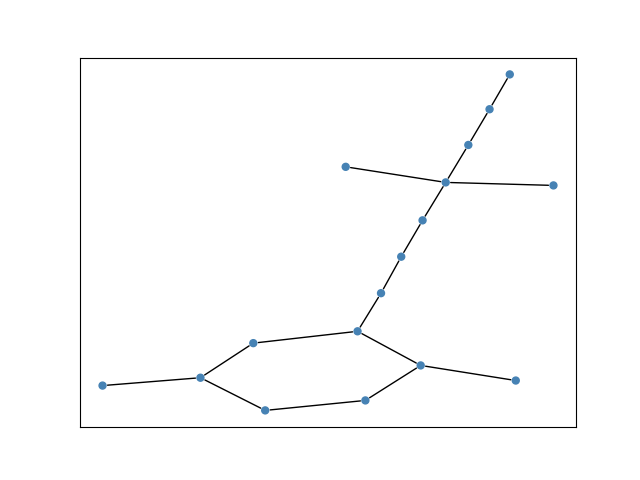}
  }
  \subfigure[]{ 
  \centering
    \includegraphics[width=0.22\linewidth,height=0.22\linewidth]{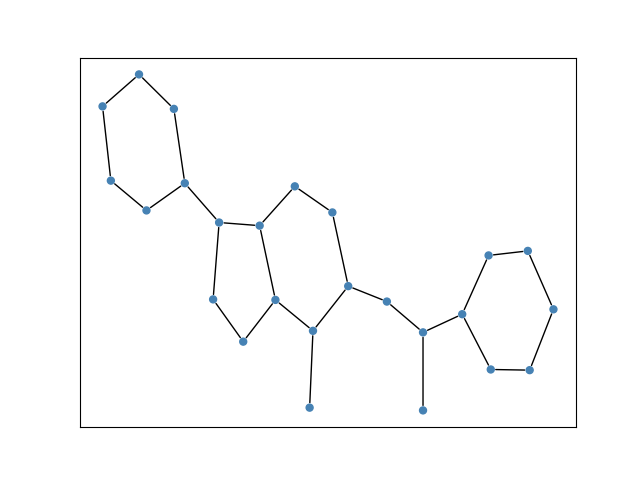}
  }
  \subfigure[]{ 
  \centering
    \includegraphics[width=0.22\linewidth,height=0.22\linewidth]{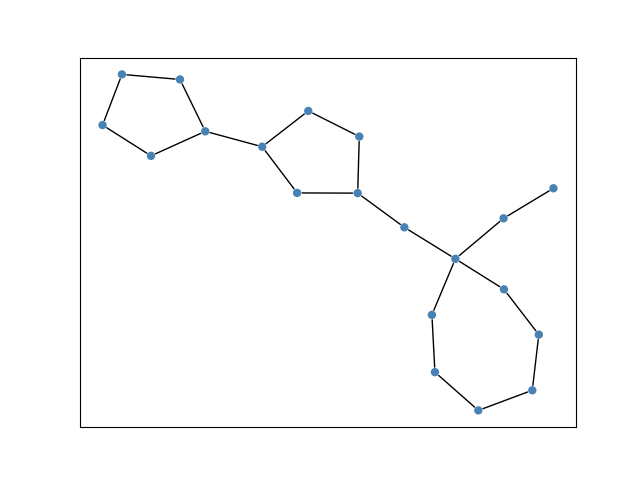}
  }
  \subfigure[]{ 
  \centering
    \includegraphics[width=0.22\linewidth,height=0.22\linewidth]{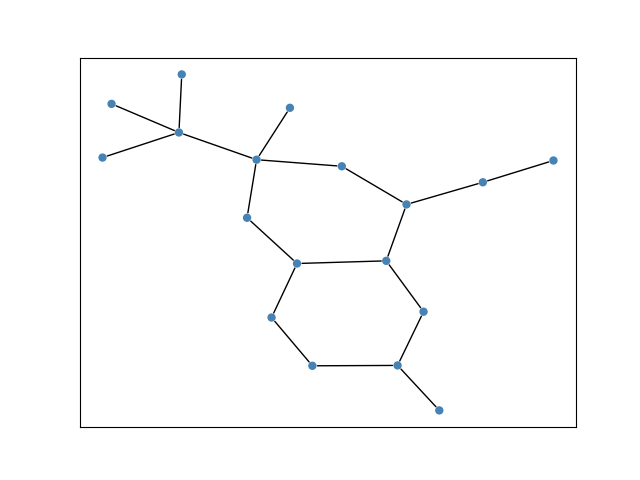}
  }

  \subfigure[\tiny  $K=4$.]{ 
  \centering
    \includegraphics[width=0.22\linewidth, height=0.22\linewidth]{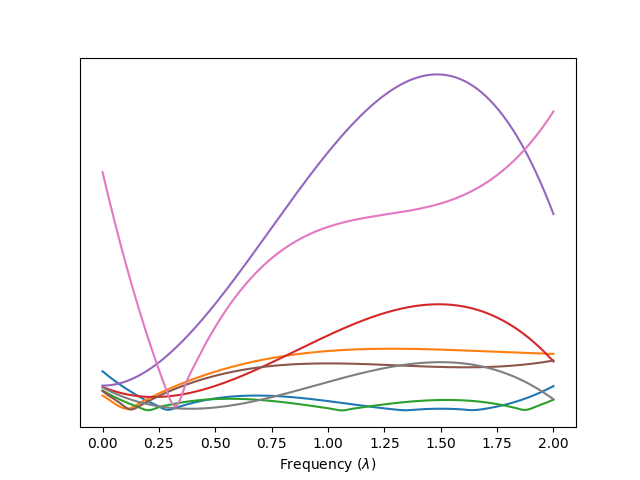}
  }
  \subfigure[\tiny  $K=4$.]{ 
  \centering
    \includegraphics[width=0.22\linewidth,height=0.22\linewidth]{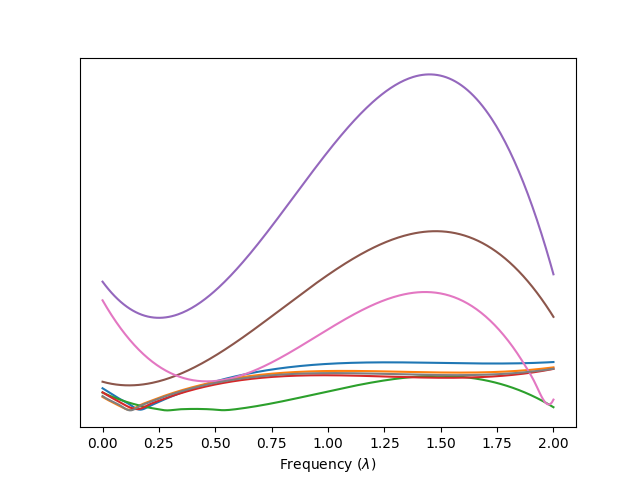}
  } 
  \subfigure[\tiny  $K=4$.]{ 
  \centering
    \includegraphics[width=0.22\linewidth,height=0.22\linewidth]{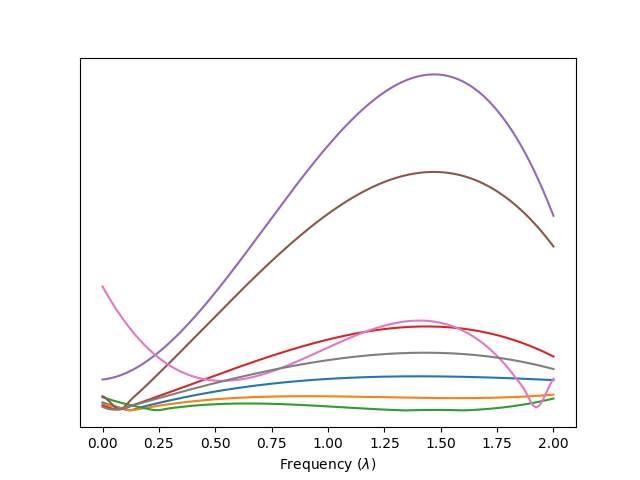}
  } 
  \subfigure[\tiny  $K=4$.]{ 
  \centering
    \includegraphics[width=0.22\linewidth,height=0.22\linewidth]{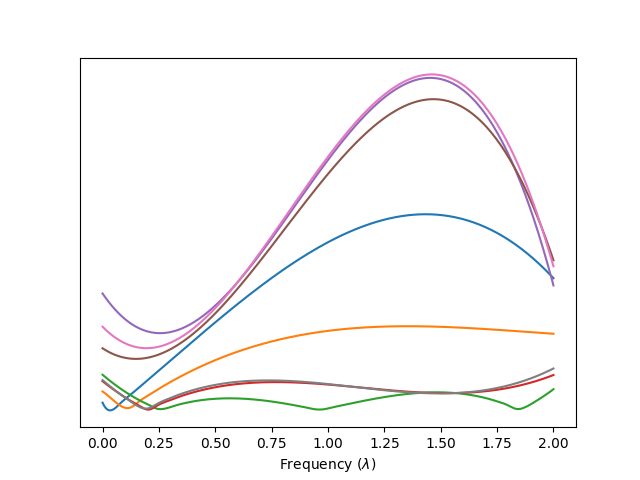}
  }
 \caption{Filter Frequency response on individual graphs on the ZINC dataset for a filter order of 4. Figures (a) $\sim$ (d) are the graphs from the dataset and Figures (e) $\sim$ (g) are the corresponding frequency responses. X axis shows the normalized frequency with magnitudes on the Y axis.}
  \label{fig:ind_freq_res_zinc_order4}
\end{figure*}

\begin{figure*}[htbp]
  \centering

  \subfigure[]{ 
  \centering
    \includegraphics[width=0.22\linewidth,height=0.22\linewidth]{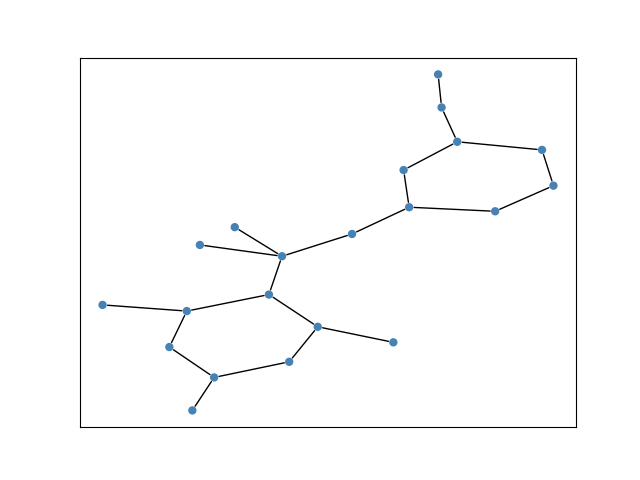}
  }
  \subfigure[]{ 
  \centering
    \includegraphics[width=0.22\linewidth,height=0.22\linewidth]{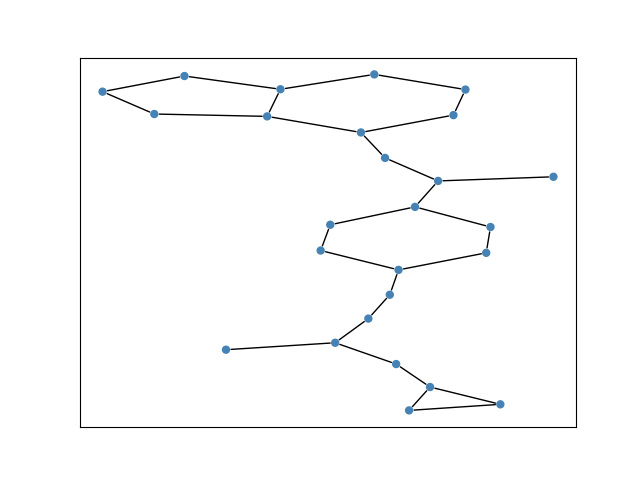}
  }
  \subfigure[]{ 
  \centering
    \includegraphics[width=0.22\linewidth,height=0.22\linewidth]{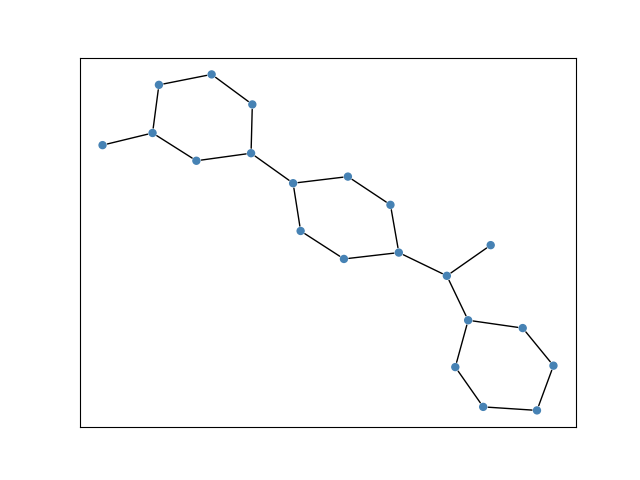}
  }
  \subfigure[]{ 
  \centering
    \includegraphics[width=0.22\linewidth,height=0.22\linewidth]{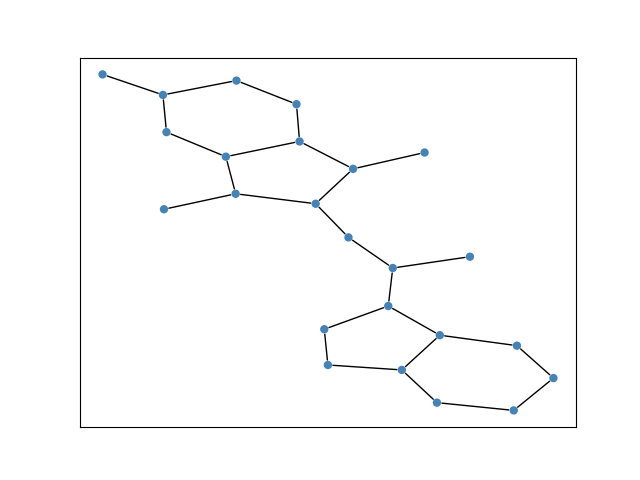}
  }

  \subfigure[\tiny  $K=8$.]{ 
  \centering
    \includegraphics[width=0.22\linewidth, height=0.22\linewidth]{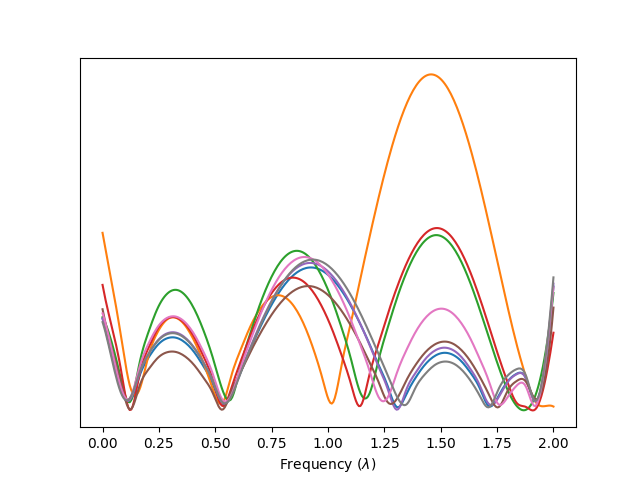}
  }
  \subfigure[\tiny  $K=8$.]{ 
  \centering
    \includegraphics[width=0.22\linewidth,height=0.22\linewidth]{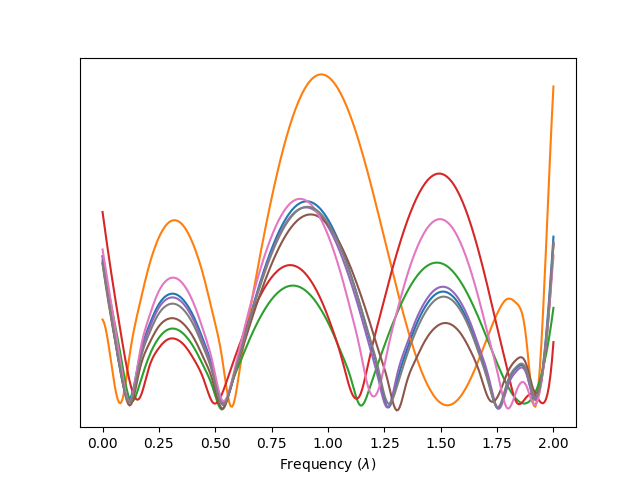}
  } 
  \subfigure[\tiny  $K=8$.]{ 
  \centering
    \includegraphics[width=0.22\linewidth,height=0.22\linewidth]{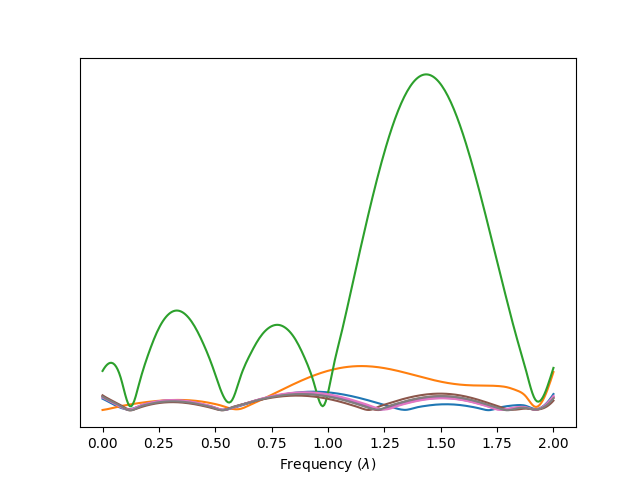}
  } 
  \subfigure[\tiny  $K=8$.]{ 
  \centering
    \includegraphics[width=0.22\linewidth,height=0.22\linewidth]{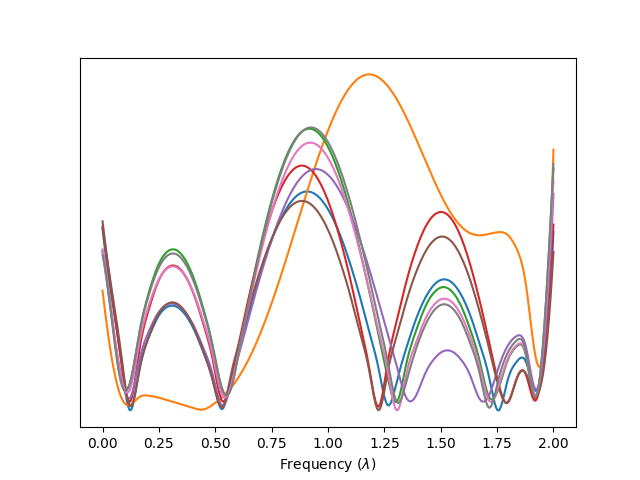}
  }
 \caption{Filter Frequency response on individual graphs on the ZINC dataset for a filter order of 8. Figures (a) $\sim$ (d) are the graphs from the dataset and Figures (e) $\sim$ (g) are the corresponding frequency responses. X axis shows the normalized frequency with magnitudes on the Y axis.}
  \label{fig:ind_freq_res_zinc_order8}
\end{figure*}

\begin{figure*}[htbp]
  \centering

  \subfigure[]{ 
  \centering
    \includegraphics[width=0.22\linewidth,height=0.22\linewidth]{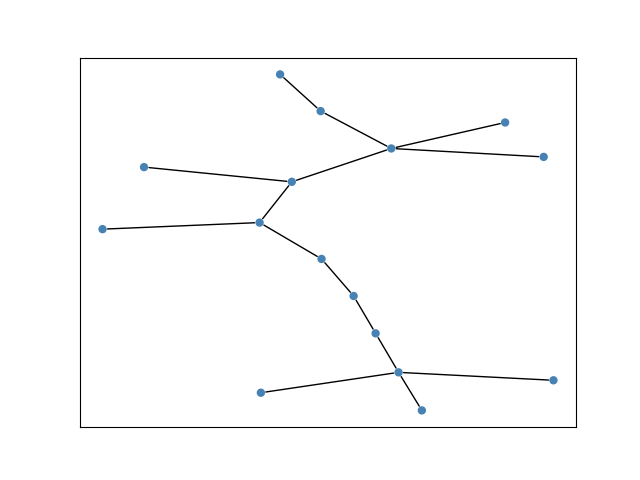}
  }
  \subfigure[]{ 
  \centering
    \includegraphics[width=0.22\linewidth,height=0.22\linewidth]{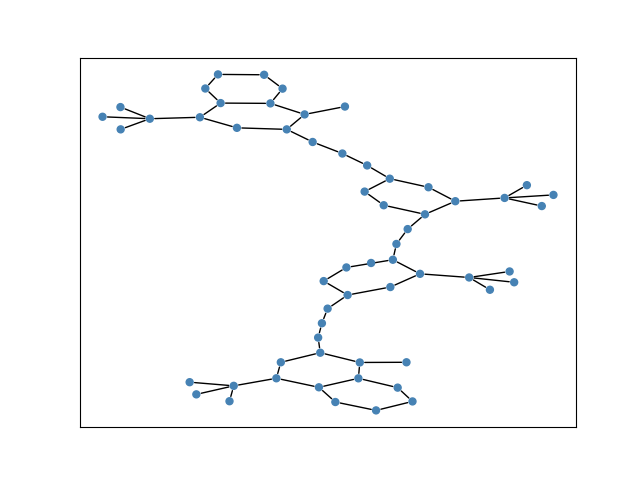}
  }
  \subfigure[]{ 
  \centering
    \includegraphics[width=0.22\linewidth,height=0.22\linewidth]{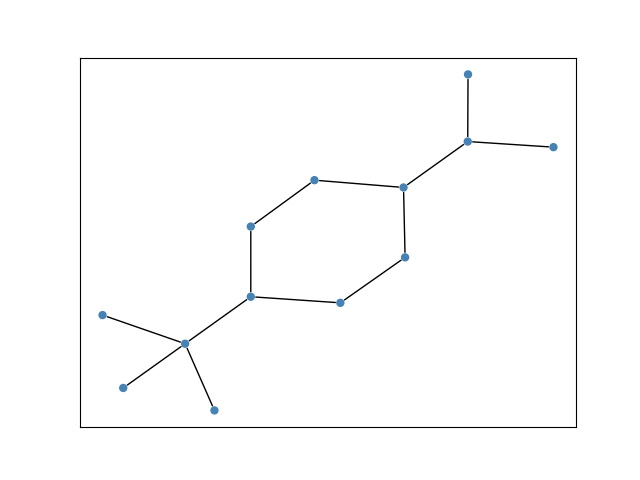}
  }
  \subfigure[]{ 
  \centering
    \includegraphics[width=0.22\linewidth,height=0.22\linewidth]{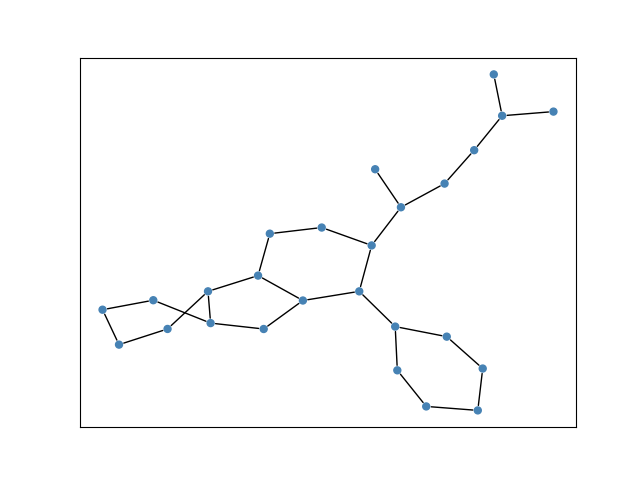}
  }

  \subfigure[\tiny  $K=4$.]{ 
  \centering
    \includegraphics[width=0.22\linewidth, height=0.22\linewidth]{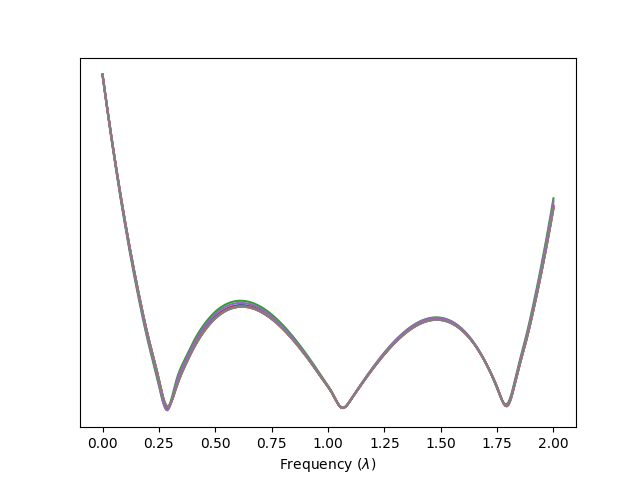}
  }
  \subfigure[\tiny  $K=4$.]{ 
  \centering
    \includegraphics[width=0.22\linewidth,height=0.22\linewidth]{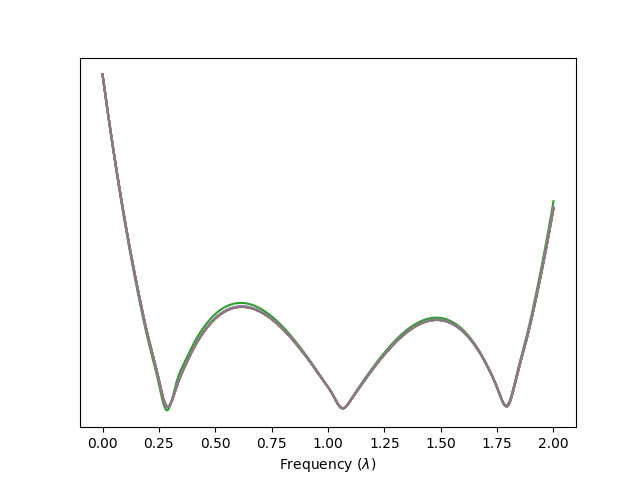}
  } 
  \subfigure[\tiny  $K=8$.]{ 
  \centering
    \includegraphics[width=0.22\linewidth,height=0.22\linewidth]{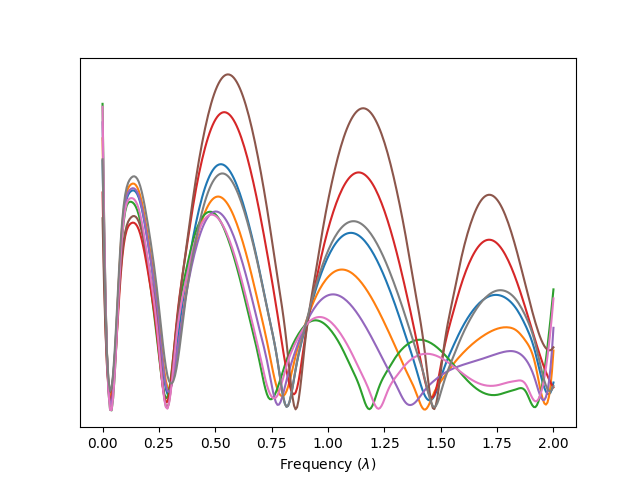}
  } 
  \subfigure[\tiny  $K=8$.]{ 
  \centering
    \includegraphics[width=0.22\linewidth,height=0.22\linewidth]{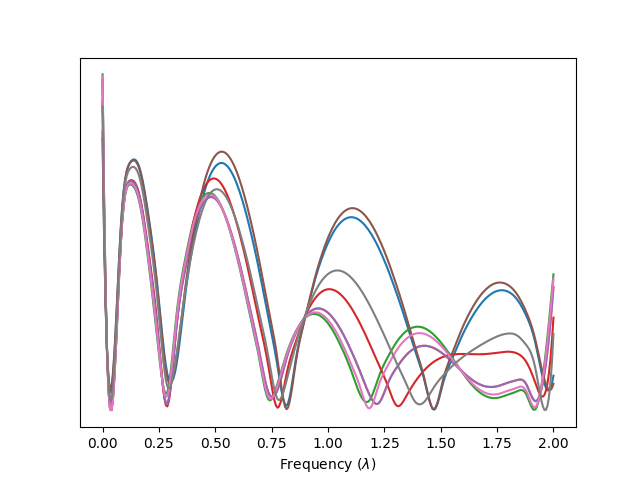}
  }
 \caption{Filter Frequency response on individual graphs on the MolHIV dataset. Figures (a) $\sim$ (d) are the graphs from the dataset and Figures (e) $\sim$ (h) are the corresponding frequency responses. X axis shows the normalized frequency with magnitudes on the Y axis.}
  \label{fig:ind_freq_res_molhiv}
\end{figure*}

\begin{figure*}[htbp]
  \centering

  \subfigure[]{ 
  \centering
    \includegraphics[width=0.22\linewidth,height=0.22\linewidth]{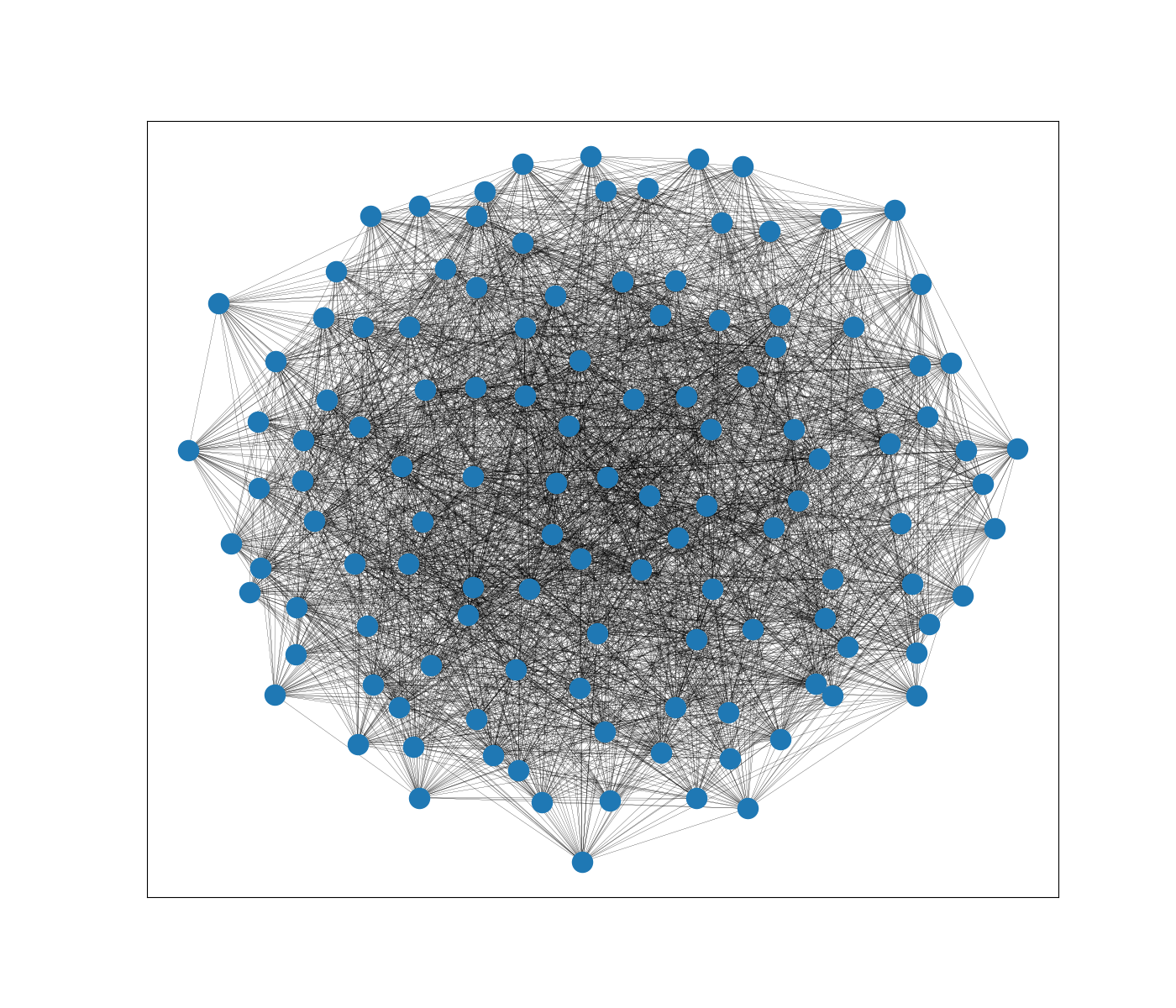}
  }
  \subfigure[]{ 
  \centering
    \includegraphics[width=0.22\linewidth,height=0.22\linewidth]{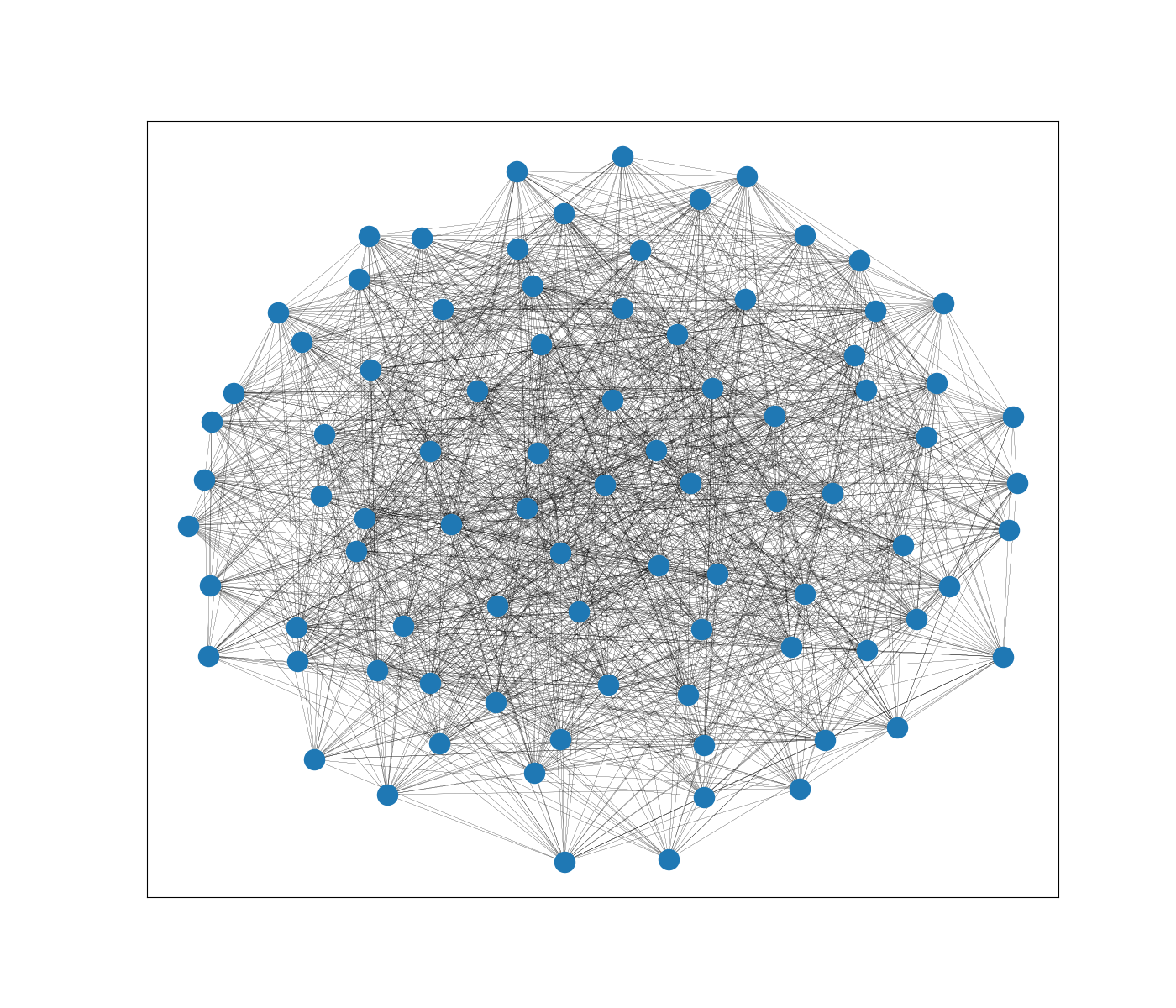}
  }
  \subfigure[]{ 
  \centering
    \includegraphics[width=0.22\linewidth,height=0.22\linewidth]{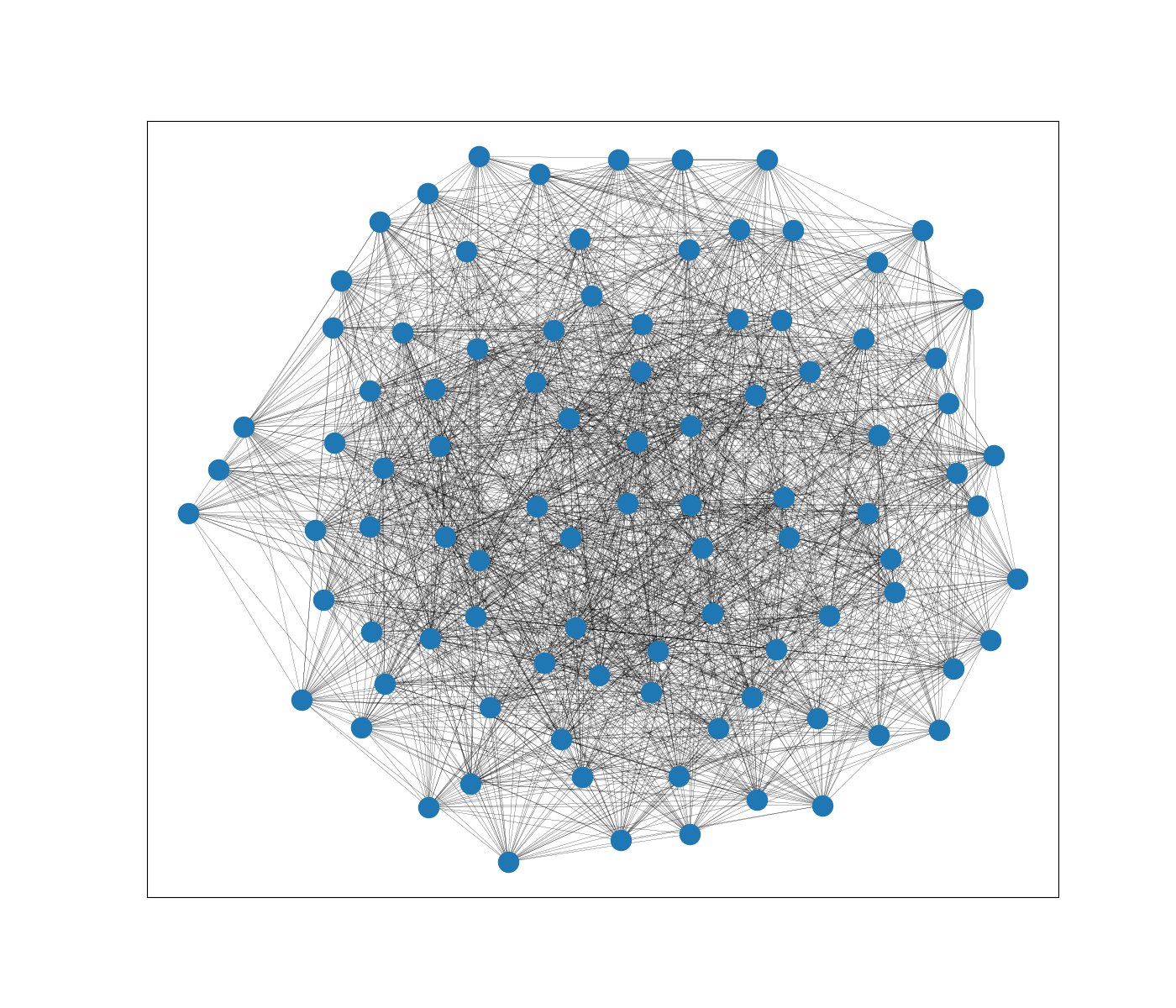}
  }
  \subfigure[]{ 
  \centering
    \includegraphics[width=0.22\linewidth,height=0.22\linewidth]{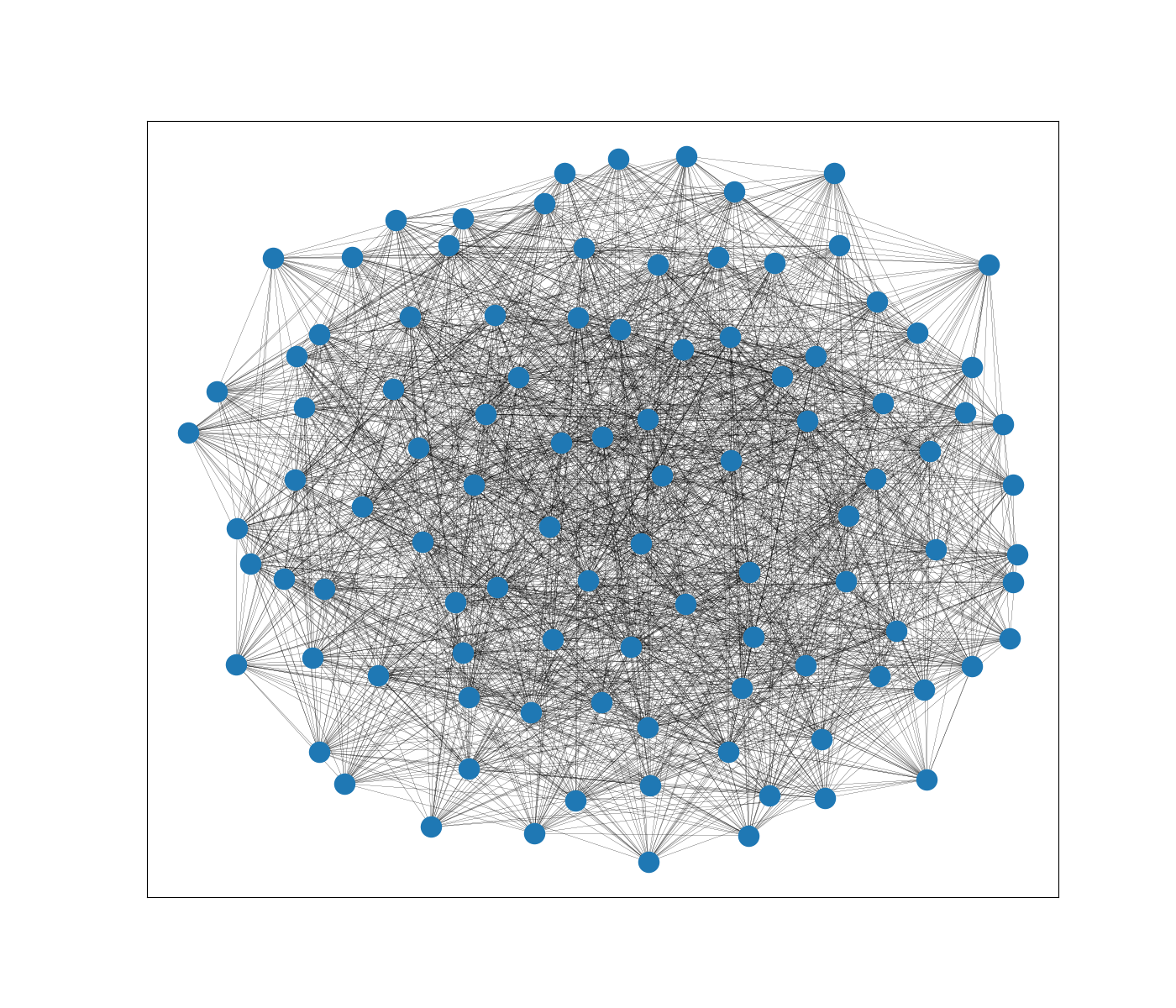}
  }

  \subfigure[\tiny  $K=4$.]{ 
  \centering
    \includegraphics[width=0.22\linewidth, height=0.22\linewidth]{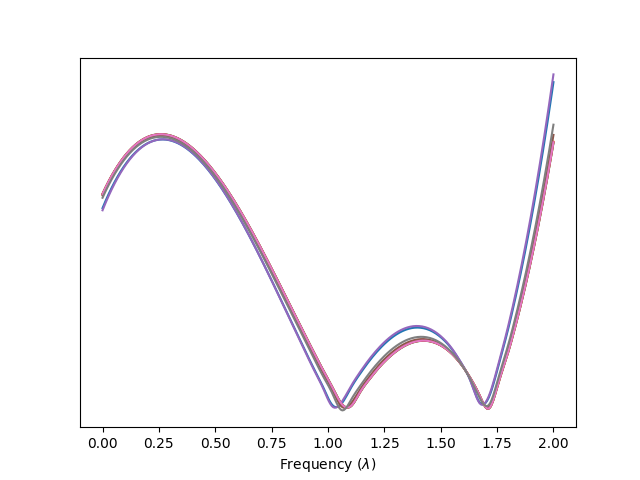}
  }
  \subfigure[\tiny  $K=4$.]{ 
  \centering
    \includegraphics[width=0.22\linewidth,height=0.22\linewidth]{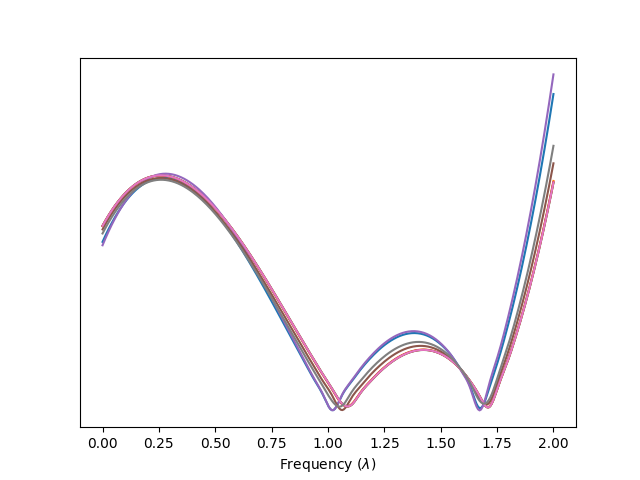}
  } 
  \subfigure[\tiny  $K=8$.]{ 
  \centering
    \includegraphics[width=0.22\linewidth,height=0.22\linewidth]{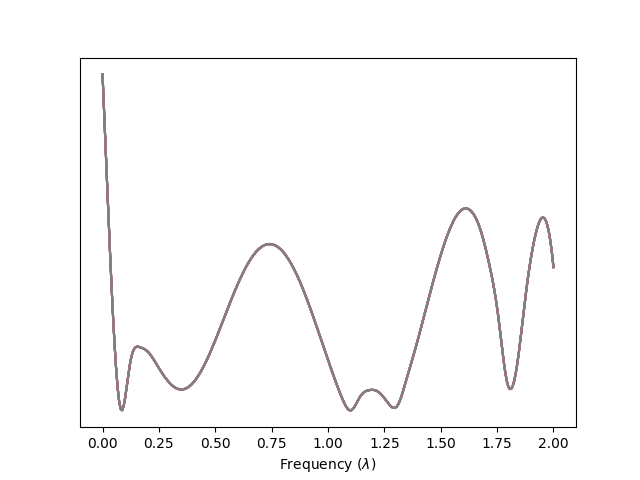}
  } 
  \subfigure[\tiny  $K=8$.]{ 
  \centering
    \includegraphics[width=0.22\linewidth,height=0.22\linewidth]{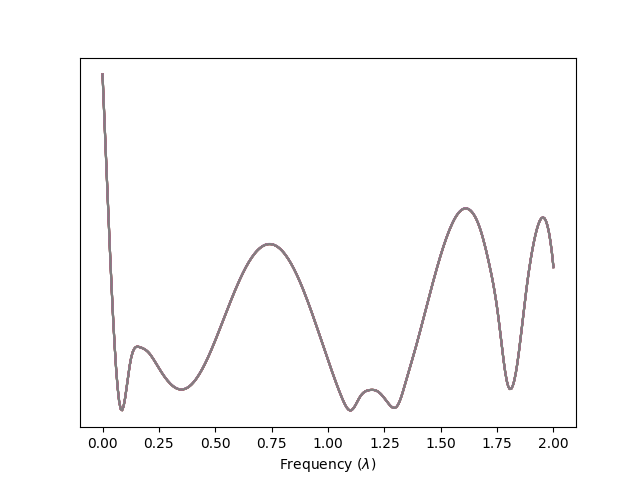}
  }
 \caption{Filter Frequency response on individual graphs on the PATTERN dataset. Figures (a) $\sim$ (d) are the graphs from the dataset and Figures (e) $\sim$ (h) are the corresponding frequency responses. X axis shows the normalized frequency with magnitudes on the Y axis.}
  \label{fig:ind_freq_res_pattern}
\end{figure*}

\begin{figure*}[htbp]
  \centering

  \subfigure[]{ 
  \centering
    \includegraphics[width=0.22\linewidth,height=0.22\linewidth]{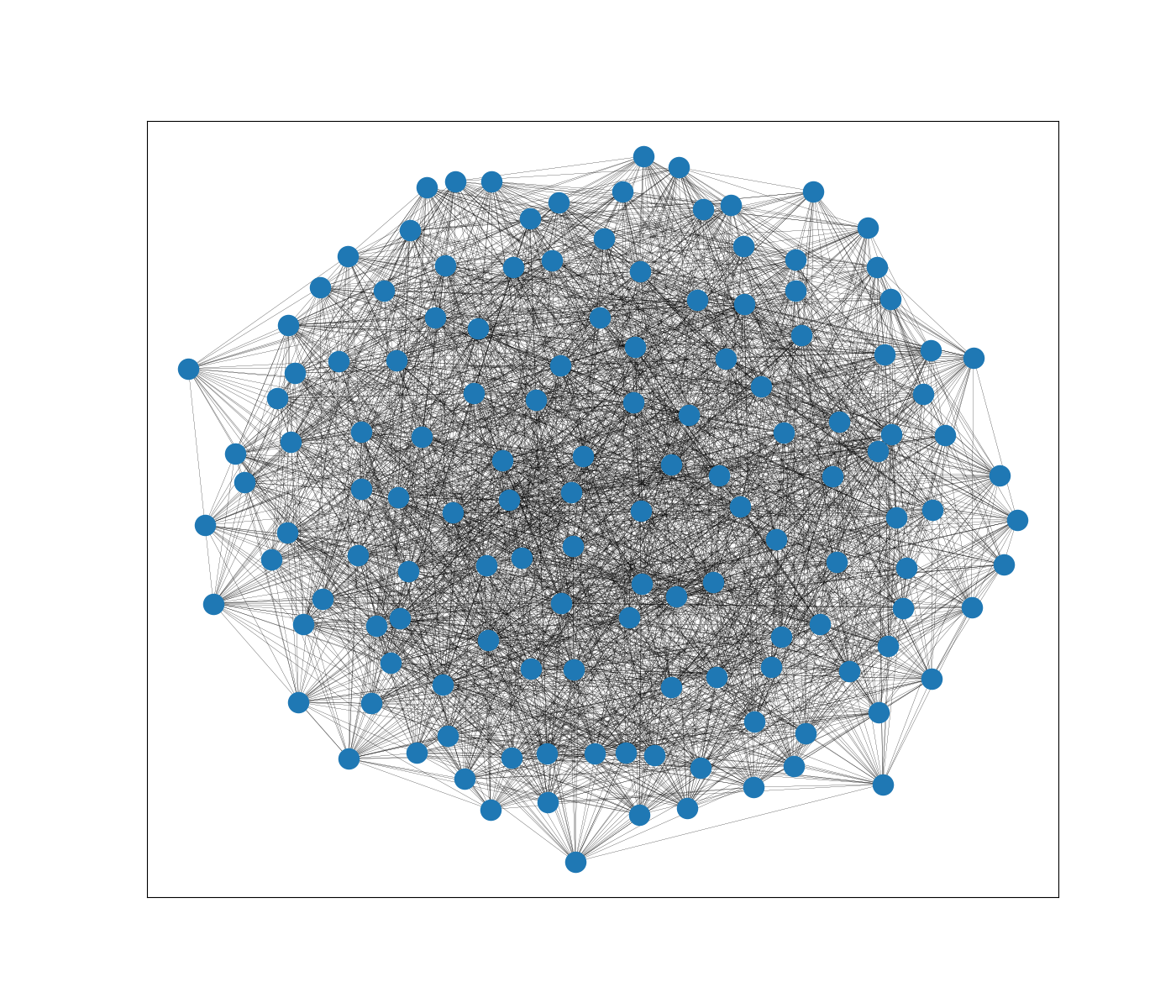}
  }
  \subfigure[]{ 
  \centering
    \includegraphics[width=0.22\linewidth,height=0.22\linewidth]{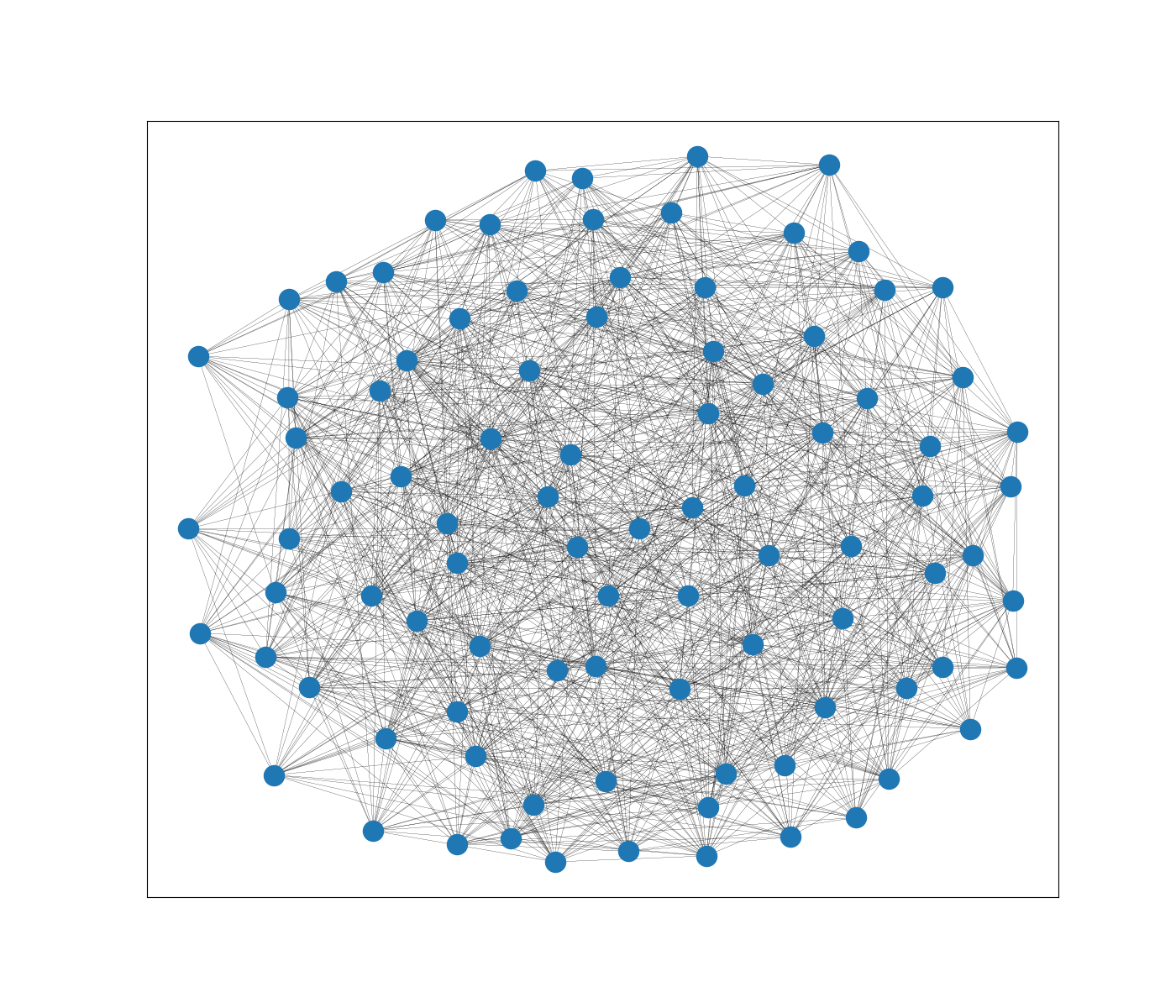}
  }
  \subfigure[]{ 
  \centering
    \includegraphics[width=0.22\linewidth,height=0.22\linewidth]{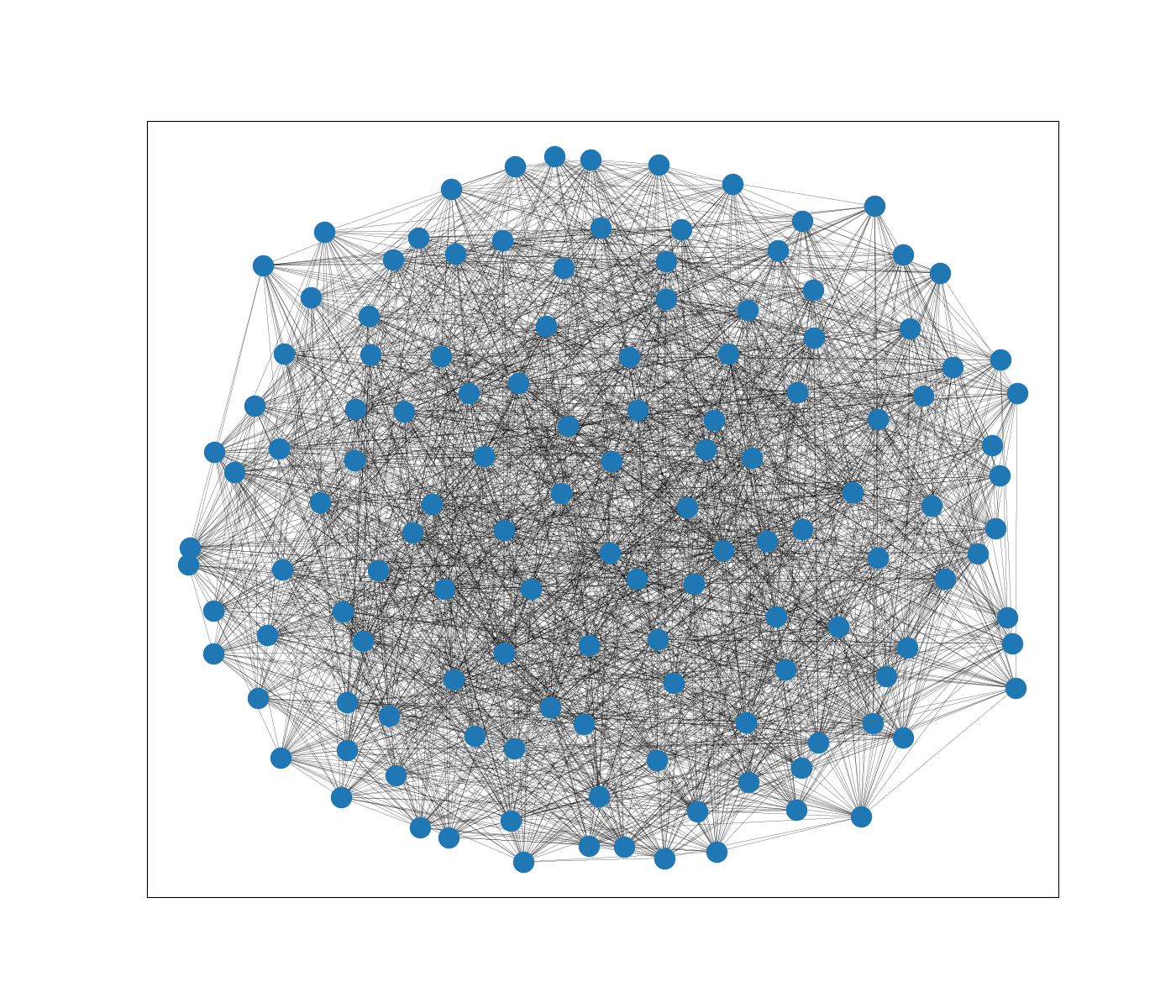}
  }
  \subfigure[]{ 
  \centering
    \includegraphics[width=0.22\linewidth,height=0.22\linewidth]{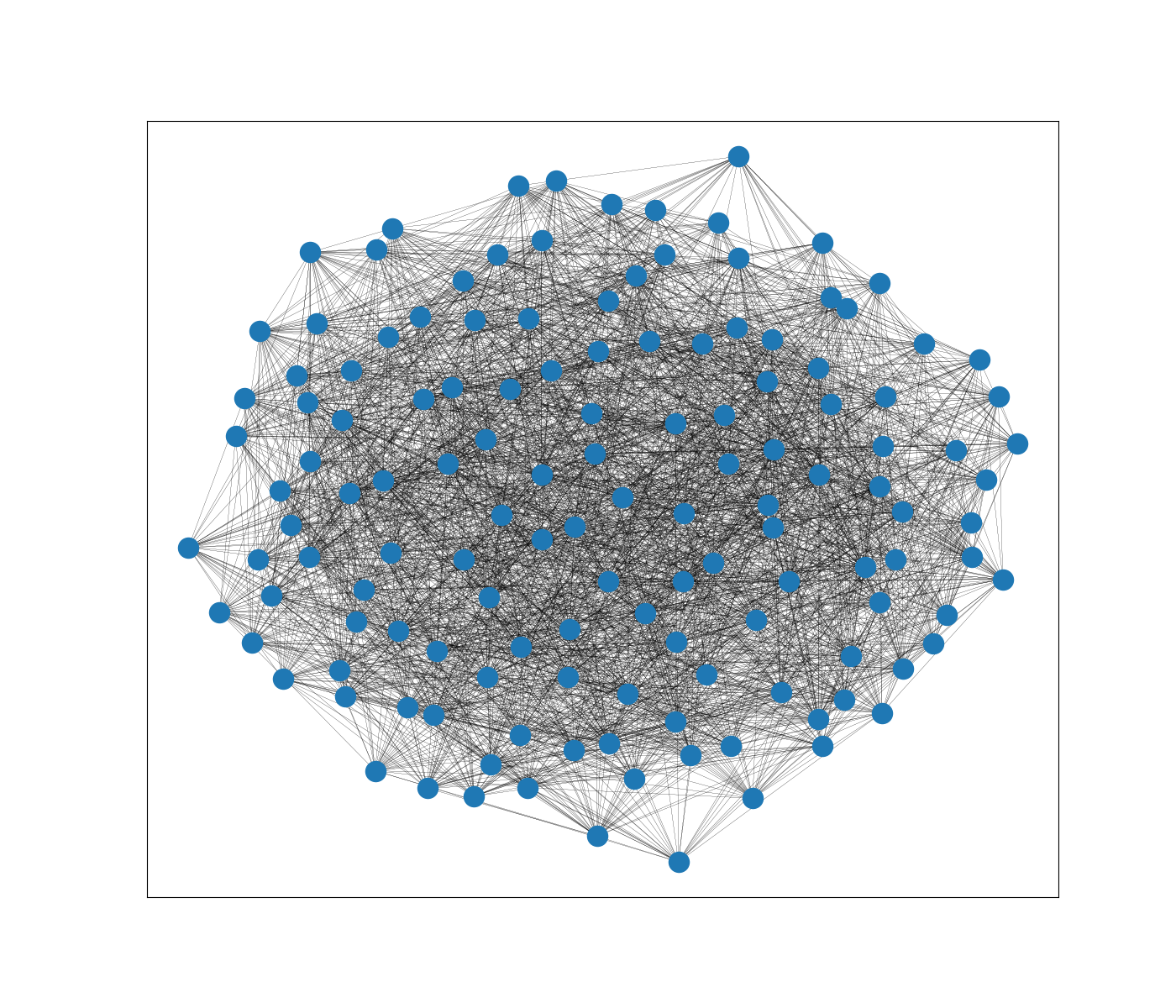}
  }

  \subfigure[\tiny  $K=4$.]{ 
  \centering
    \includegraphics[width=0.22\linewidth, height=0.22\linewidth]{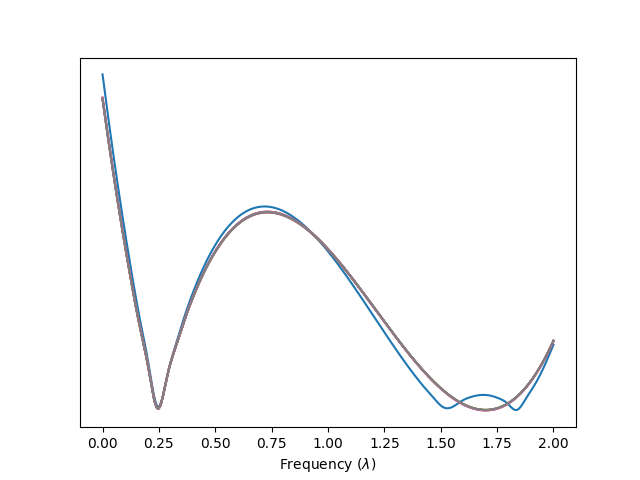}
  }
  \subfigure[\tiny  $K=4$.]{ 
  \centering
    \includegraphics[width=0.22\linewidth,height=0.22\linewidth]{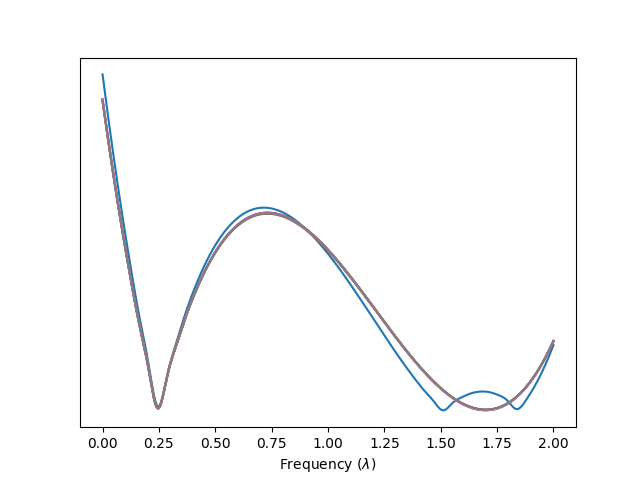}
  } 
  \subfigure[\tiny  $K=8$.]{ 
  \centering
    \includegraphics[width=0.22\linewidth,height=0.22\linewidth]{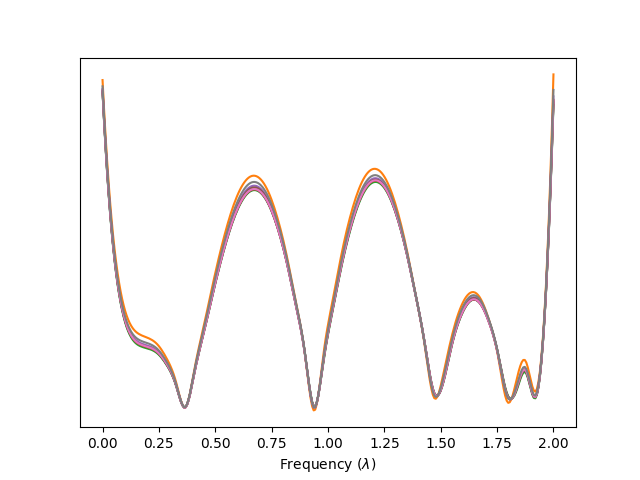}
  } 
  \subfigure[\tiny  $K=8$.]{ 
  \centering
    \includegraphics[width=0.22\linewidth,height=0.22\linewidth]{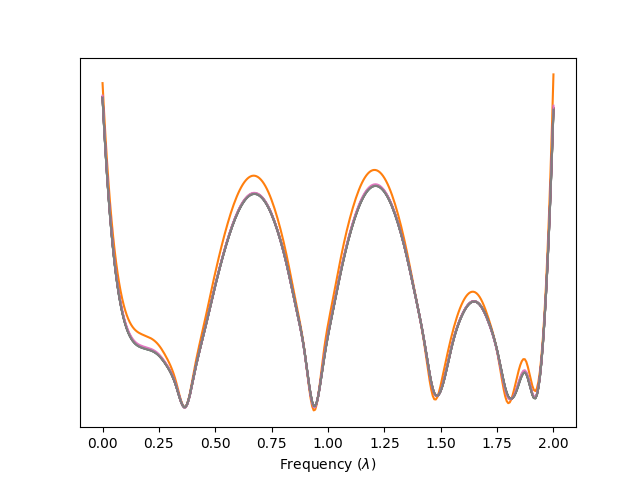}
  }
  
 \caption{Filter Frequency response on individual graphs on the CLUSTER dataset. Figures (a) $\sim$ (d) are the graphs from the dataset and Figures (e) $\sim$ (g) are the corresponding frequency responses. X axis shows the normalized frequency with magnitudes on the Y axis.}
  \label{fig:ind_freq_res_cluster}
\end{figure*}

\begin{figure}[htp]
 \begin{minipage}{.5\textwidth}
 \centering
  \includegraphics[width=\textwidth,height=\textheight,keepaspectratio]{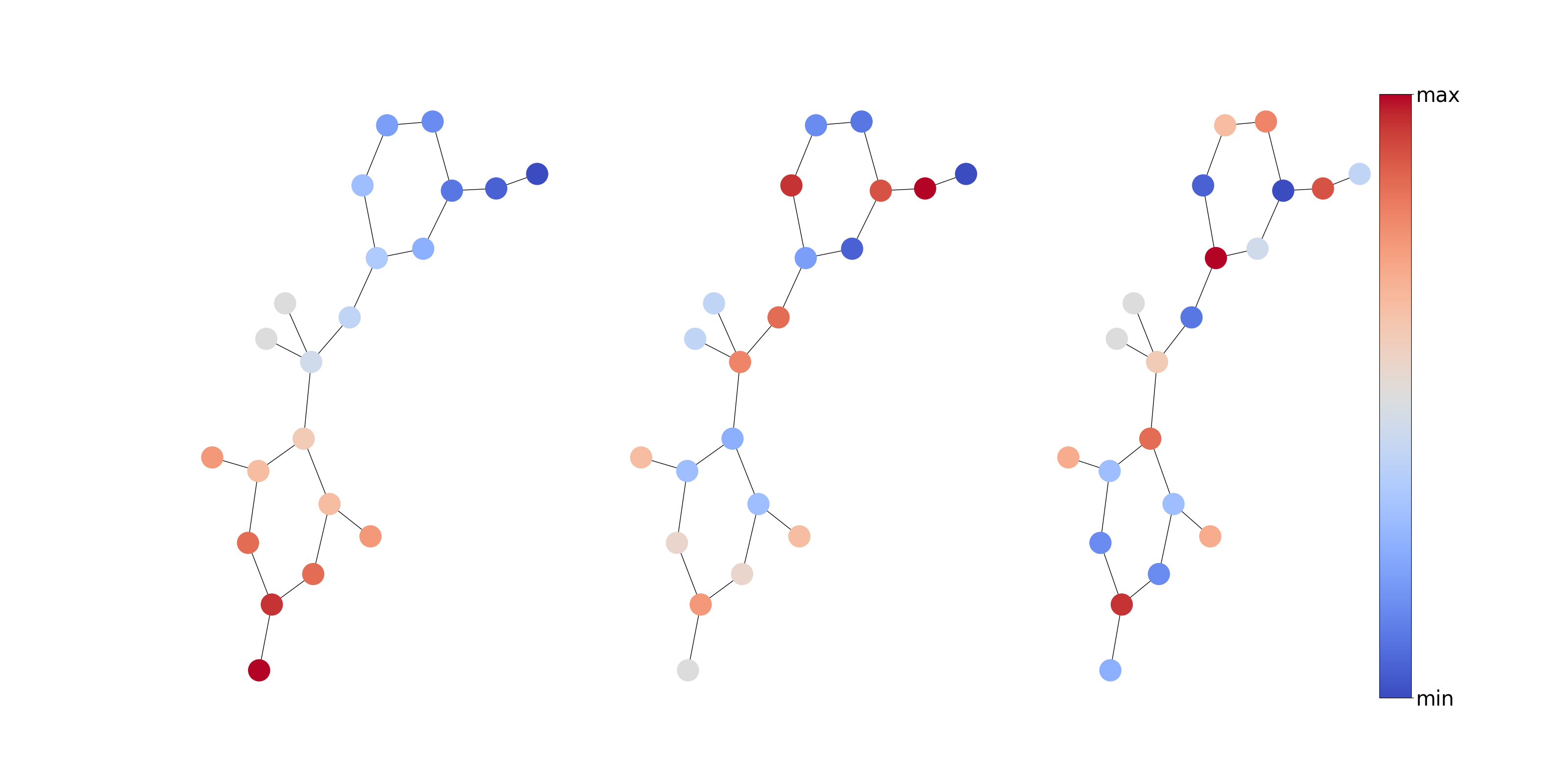}
  (a)
 \end{minipage}
 \begin{minipage}{.5\textwidth}
 \centering
  \includegraphics[width=\textwidth,height=\textheight,keepaspectratio]{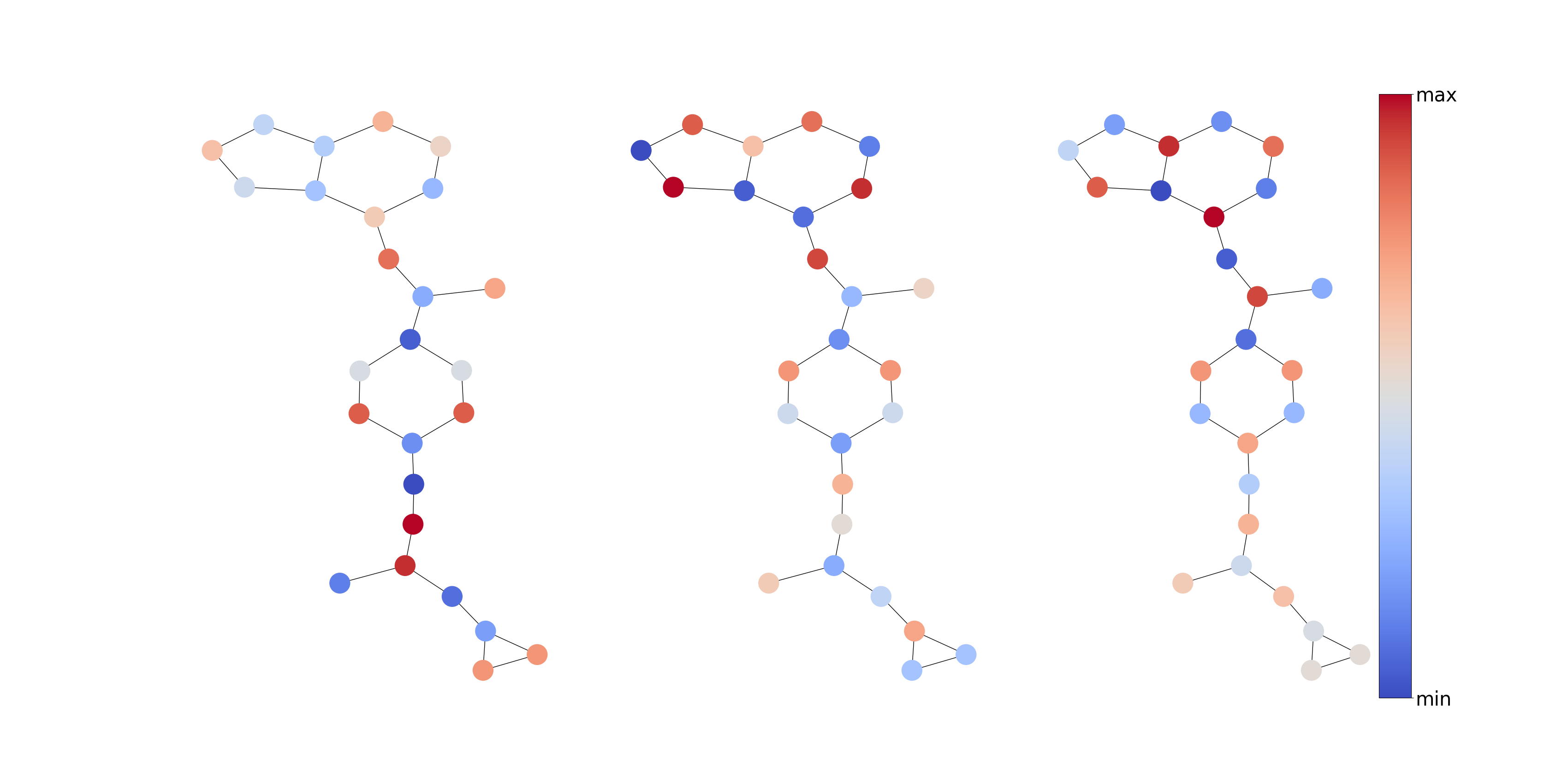}
  (b)
 \end{minipage}
  \begin{minipage}{.5\textwidth}
 \centering
  \includegraphics[width=\textwidth,height=\textheight,keepaspectratio]{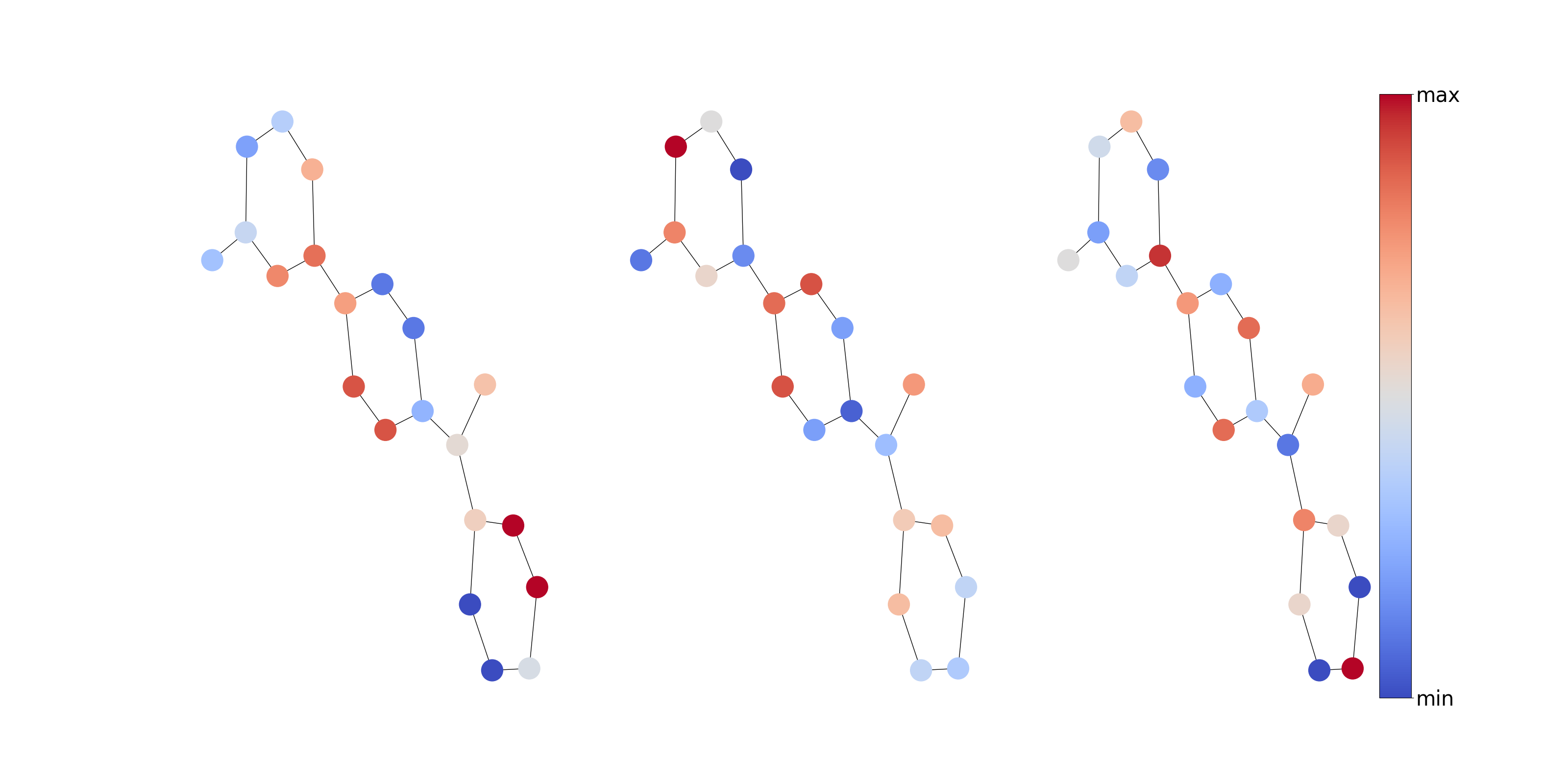}
  (c)
 \end{minipage}
  \begin{minipage}{.5\textwidth}
  \centering
  \includegraphics[width=\textwidth,height=\textheight,keepaspectratio]{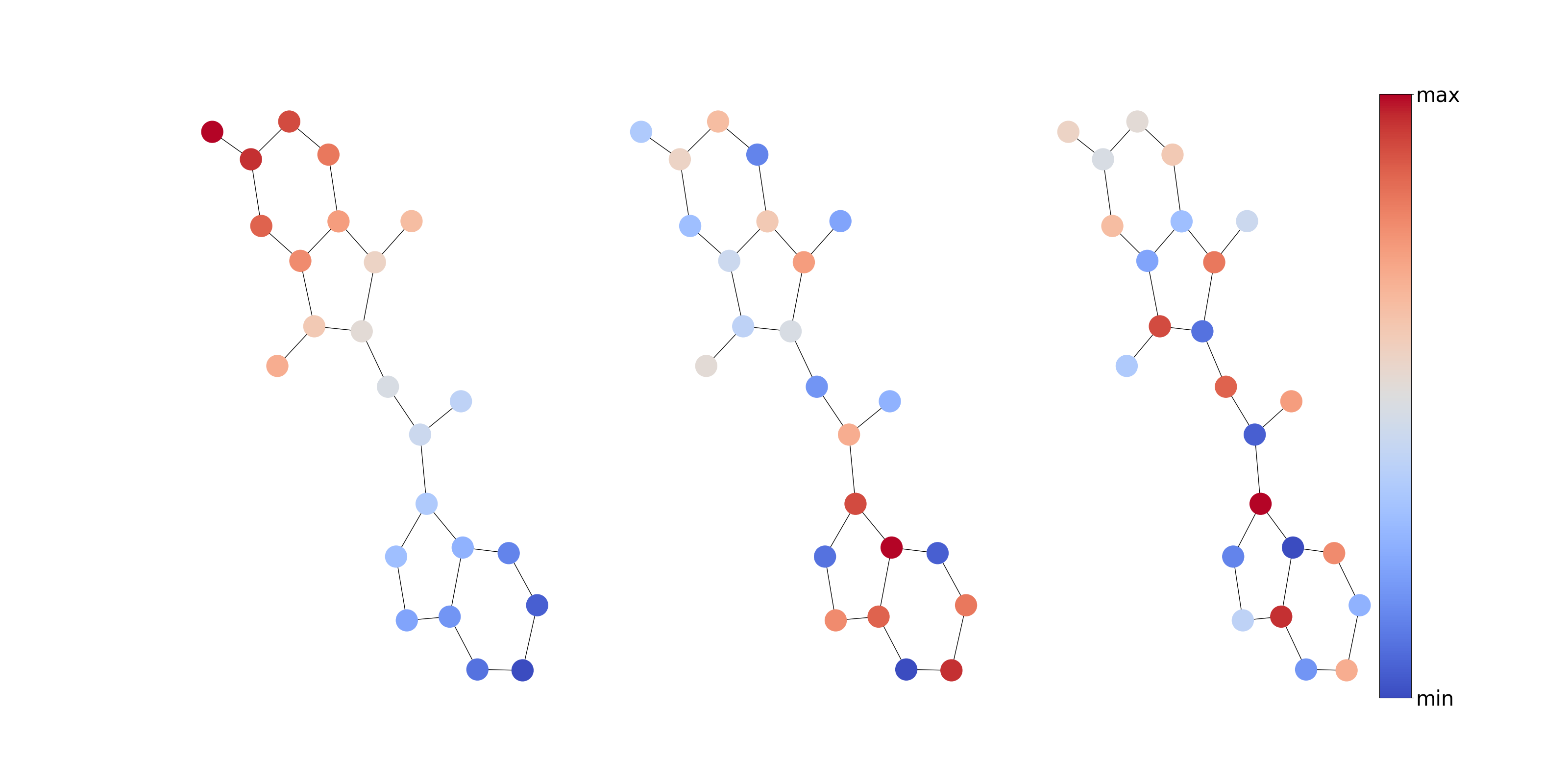}
  (d)
 \end{minipage}

 \caption{Attention heat map in spectral space of the sample graphs in ZINC dataset determined from the frequency response in Figure \ref{fig:ind_freq_res_zinc_order8} for its each sub-graph (a)-(d). Blue illustrates the lower end of the spectrum and red color shows the higher end of the spectrum.}
 \label{fig:attentionzinc}
\end{figure}

\begin{figure}[htp]
 \begin{minipage}{.5\textwidth}
  \centering
  \includegraphics[width=\textwidth,height=\textheight,keepaspectratio]{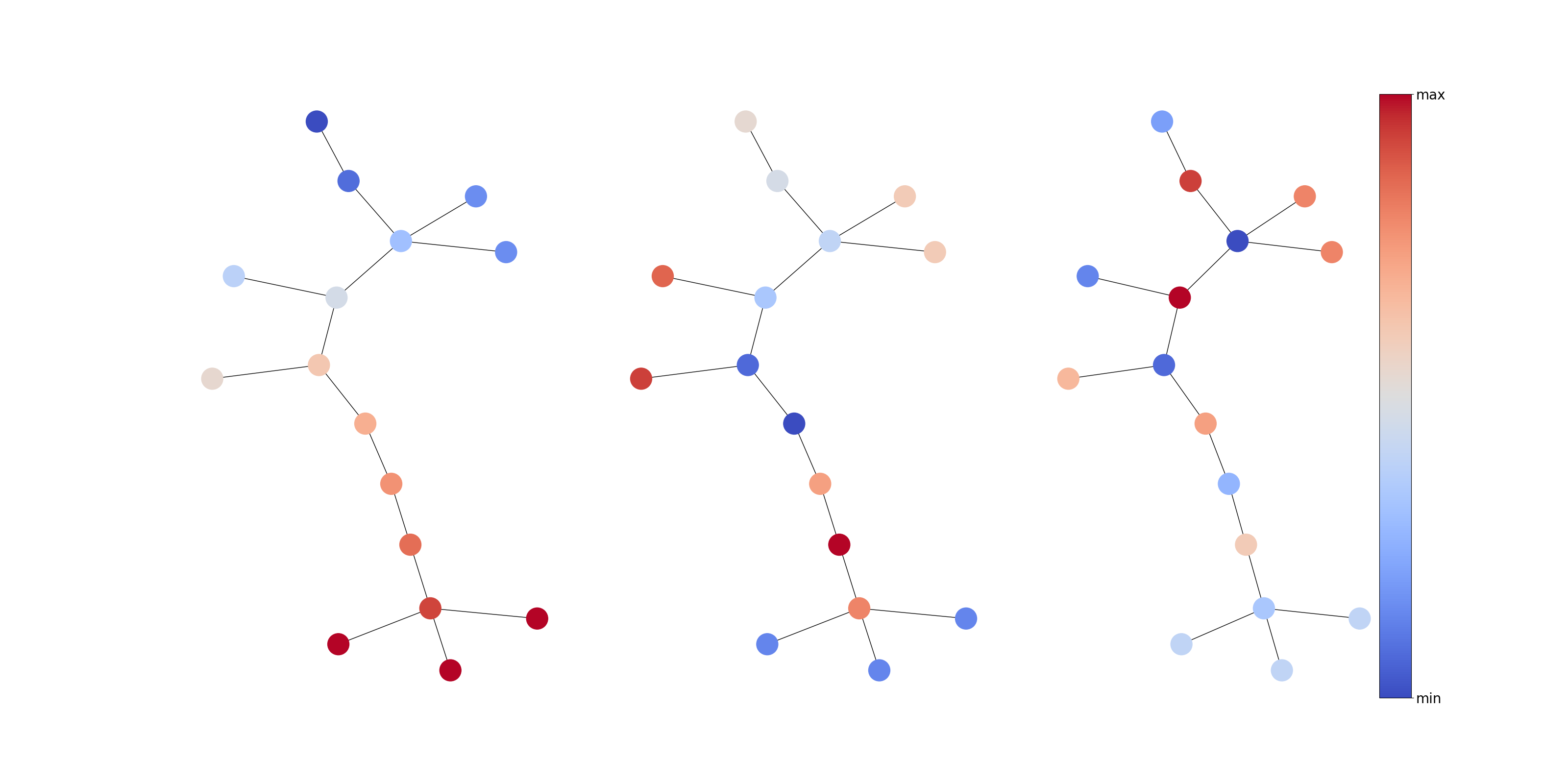}
  (a)
 \end{minipage}
 \begin{minipage}{.5\textwidth}
  \centering
  \includegraphics[width=\textwidth,height=\textheight,keepaspectratio]{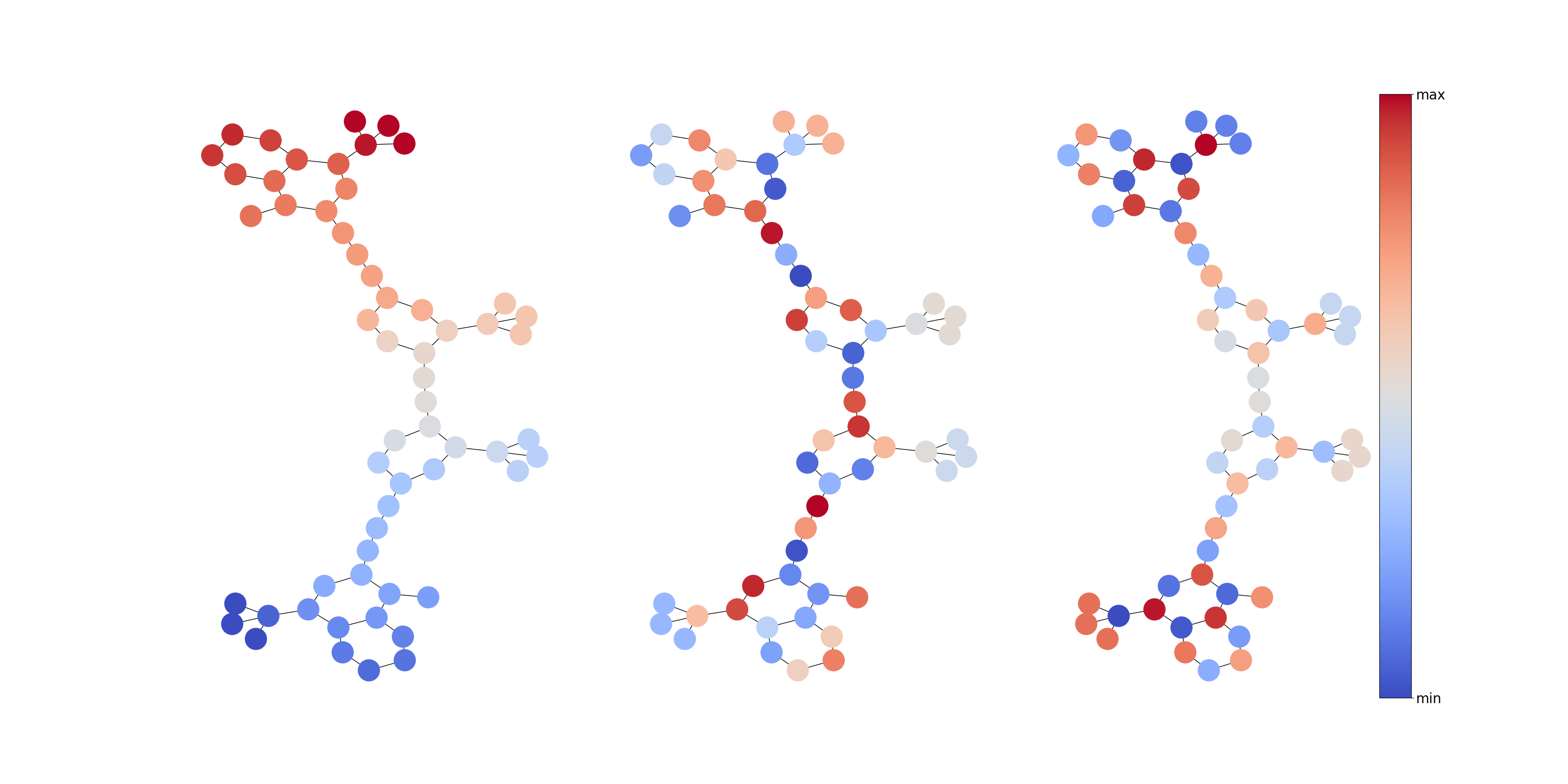}
  (b)
 \end{minipage}
   \begin{minipage}{.5\textwidth}
   \centering
  \includegraphics[width=\textwidth,height=\textheight,keepaspectratio]{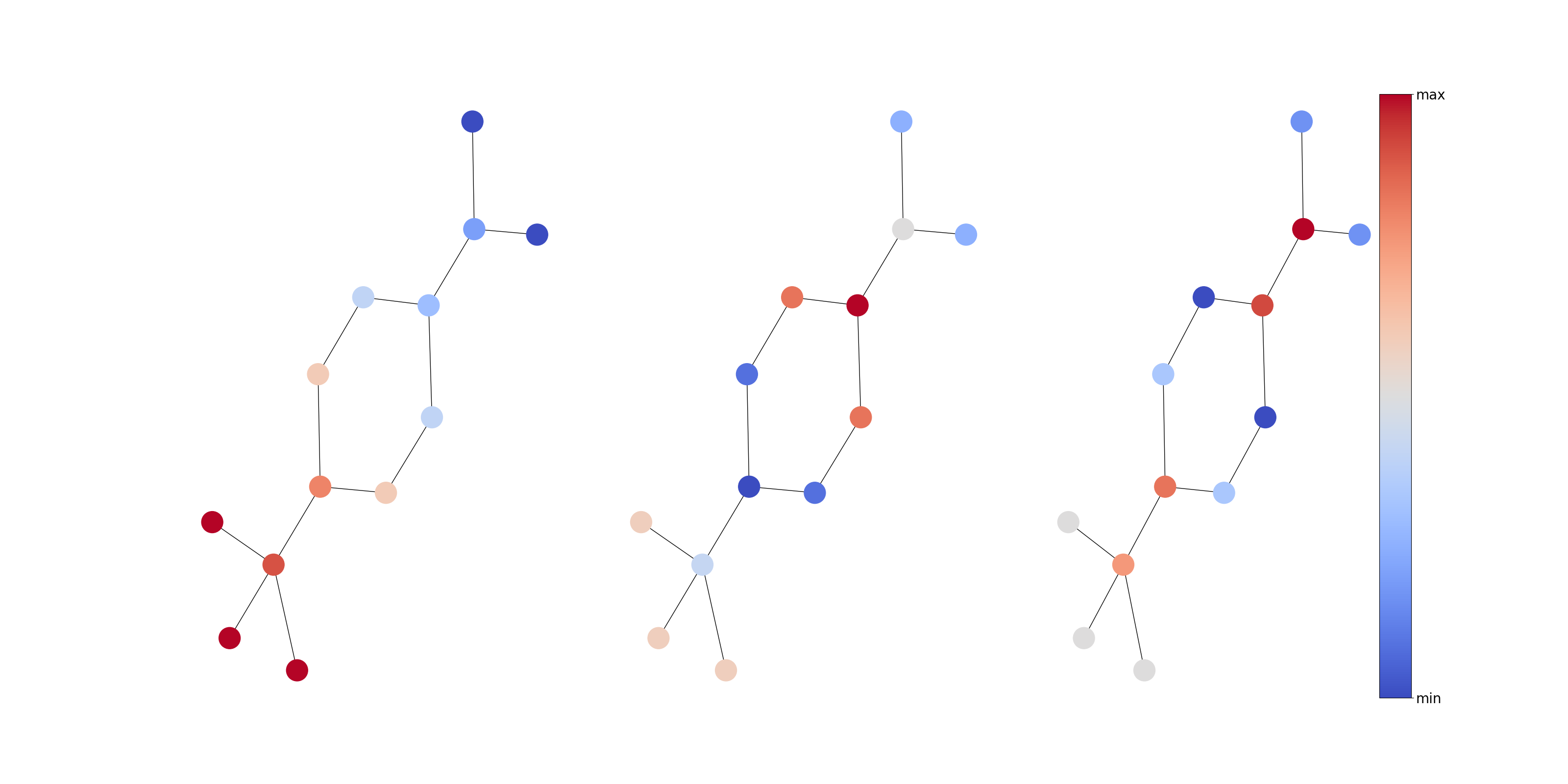}
  (c)
 \end{minipage}
  \begin{minipage}{.5\textwidth}
  \centering
  \includegraphics[width=\textwidth,height=\textheight,keepaspectratio]{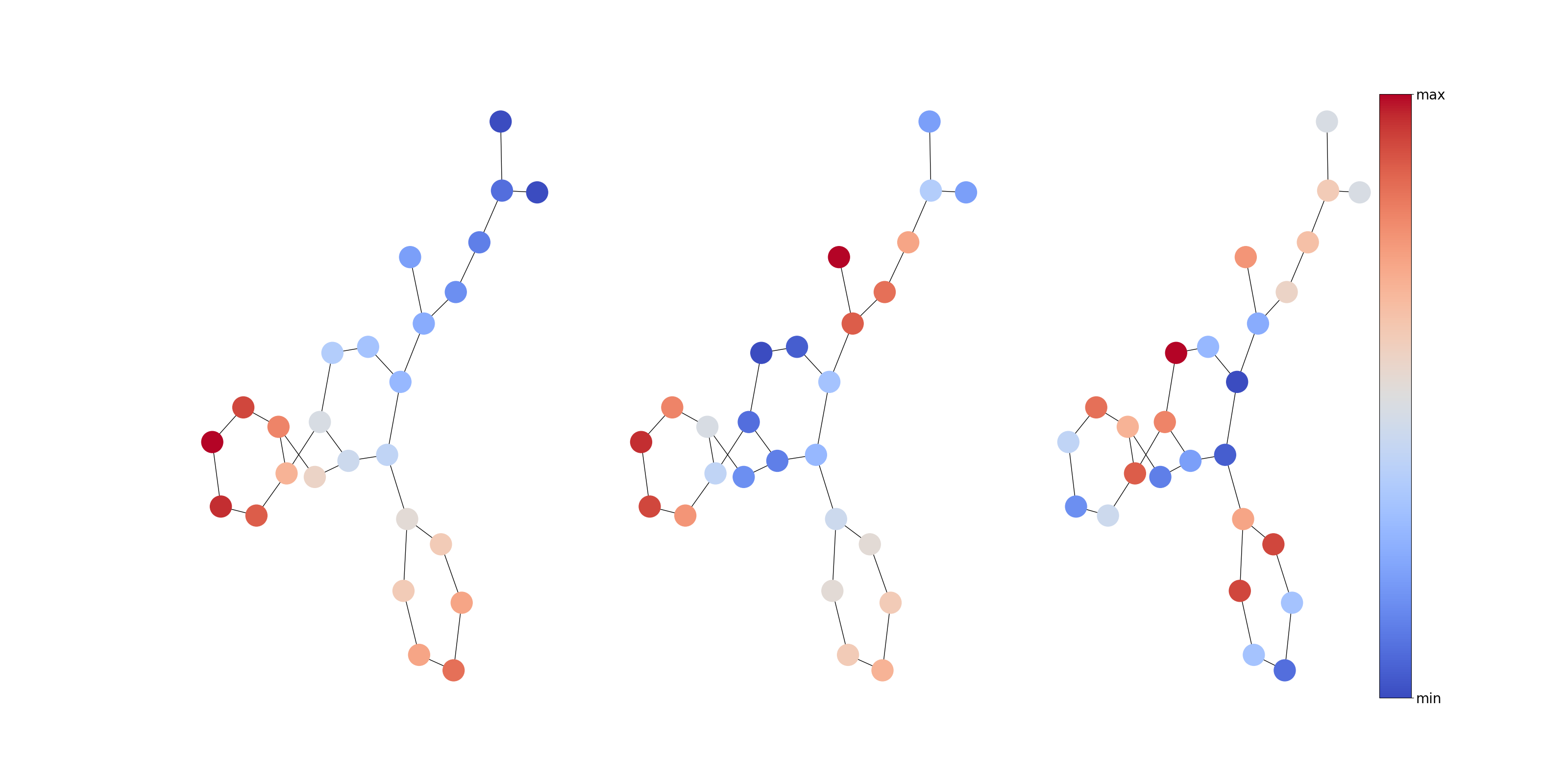}
  (d)
 \end{minipage}

 \caption{Attention heat map in spectral space of the sample graphs in MolHIV dataset determined from the frequency response in Figure \ref{fig:ind_freq_res_molhiv} for its each sub-graph (a)-(d). Blue illustrates the lower end of the spectrum and red color shows the higher end of the spectrum.}
 \label{fig:attentionmolhiv}
\end{figure}

\subsection{Interpretability in Spectral Space} \label{sec:inter}
In this section, we look at the interpretability aspect induced by FeTA in the spectral space. The figures \ref{fig:attentionzinc} and \ref{fig:attentionmolhiv} show the graphs with the eigenvectors on the nodes corresponding to the top eigenvalues selected by the filter. From both figures, we see that in the case of the low pass filter (leftmost graph in sub-figure (a), (d) of figure \ref{fig:attentionzinc} and  subfigure (a)-(d) of figure \ref{fig:attentionmolhiv}), the nodes in the neighborhood forming a cluster have similar eigenvalues. Whereas in the highpass filter ( (rightmost graph in each subfigure (a)-(d))) the eigenvalues of the nodes alternate, with neighboring nodes taking distinct values and far off nodes having similar values. 

We could make two interpretations of this phenomenon. The first one is closely related to \textit{attention} in the spatial space where FeTA attends to select input features. In this case, we can think of the model learning to pay more attention to the nodes with higher eigenvalues and lesser attention to the nodes with smaller eigenvalues in a graph and task-specific manner. The second interpretation is related to the \textit{interaction} between the nodes, i.e., for a given node which other nodes are considered for aggregation. For example, in graph attention networks, the neighboring nodes are aggregated, and the values of nodes in the same cluster tend to be closer to each other (homophily). This is a particular case of the low pass filter in which we can see from the figures \ref{fig:attentionzinc} and \ref{fig:attentionmolhiv} that the nodes belonging to the same cluster take on similar values. For example, consider the Figure \ref{fig:attentionzinc} (d). The leftmost graph has nodes in a particular cluster taking similar eigenvalues showing short-range dependencies (interactions). In the rightmost graph, nodes in the same clusters take on different eigenvalues, illustrating the need for long-range interactions. 
However, depending on the task and graph, the model also learns to aggregate (interact with) distant nodes, as seen in the graph corresponding to the high pass filter (the rightmost graphs in each sub-figure of \ref{fig:attentionzinc} and \ref{fig:attentionmolhiv}). Thus FeTA can be interpreted as a generic(covering the entire spectrum of frequencies) \textit{attention} network in the \textit{spectral space}.

\end{document}